\DeclareFontFamily{U}{mathx}{\hyphenchar\font45}
\DeclareFontShape{U}{mathx}{m}{n}{<-> mathx10}{}
\DeclareSymbolFont{mathx}{U}{mathx}{m}{n}
\DeclareMathAccent{\widebar}{0}{mathx}{"73}
\newcommand{\eg}{e.g.\ }
\newcommand{\ie}{i.e.\ }
\newcommand{\half}{\nicefrac{1}{2}}
\newcommand{\defEq}{\stackrel{.}{=}}
\newcommand{\indicator}[1]{\llbracket #1 \rrbracket}
\newcommand{\argmin}[2]{\underset{#1}{\operatorname{Argmin }}\, #2}
\renewcommand{\Pr}{\mathbb{P}}
\newcommand{\E}[2]{\underset{#1}{\mathbb{E}}\left[ #2 \right]}
\newcommand{\X}{\mathsf{X}}
\newcommand{\Y}{\mathsf{Y}}
\newcommand{\YHat}{\hat{\mathsf{Y}}}
\newcommand{\FCal}{\mathscr{F}}
\newcommand{\XCal}{\mathscr{X}}
\newcommand{\Real}{\mathbb{R}}
\newcommand{\Labels}{\{ 0, 1 \}}
\newcommand{\scorer}{s \colon \XCal \to \Real}
\newcommand{\classifier}{f \colon \XCal \to \Labels}
\newcommand{\randClass}{f}
\newcommand{\randClassifier}{\randClass \colon \XCal \to [0, 1]}
\newcommand{\D}{D}
\newcommand{\DFull}{D_{\mathrm{jnt}}}
\newcommand{\YSens}{\bar{\Y}}
\newcommand{\etaSens}{\bar{\eta}}
\newcommand{\piSens}{\bar{\pi}}
\newcommand{\etaDP}{\eta_{\mathrm{DP}}}
\newcommand{\etaEOO}{\eta_{\mathrm{EO}}}
\newcommand{\DSens}{\bar{\D}}
\newcommand{\DDP}{\bar{\D}_{\mathrm{DP}}}
\newcommand{\DEOO}{\bar{\D}_{\mathrm{EO}}}
\newcommand{\Perf}{R_{\mathrm{perf}}}
\newcommand{\Fairness}{R_{\mathrm{fair}}}
\newcommand{\FairnessSymm}{\Fairness^\diamond}
\newcommand{\Joint}{R_{\mathrm{full}}}
\newcommand{\PhiClass}{\Phi}
\newcommand{\PhiFair}{{\PhiClass}_{\mathrm{fair}}}
\newcommand{\PhiFairSymm}{\PhiFair^\diamond}
\newcommand{\DISymm}{\DI^\diamond}
\newcommand{\MDSymm}{\MD^\diamond}
\newcommand{\FairnessMeasure}{\Fairness( \cdot; \DFull ) \colon [0, 1]^{\XCal} \to \Real_+}
\newcommand{\FPR}{\mathrm{FPR}}
\newcommand{\FNR}{\mathrm{FNR}}
\newcommand{\BER}{\mathrm{BER}}
\newcommand{\CSERR}{\mathrm{CS}}
\newcommand{\CSERRBal}{\mathrm{CS}_{\mathrm{bal}}}
\newcommand{\DI}{\mathrm{DI}}
\newcommand{\MD}{\mathrm{MD}}
\newcommand{\FPRSens}{\FPR( f; \DSens )}
\newcommand{\FNRSens}{\FNR( f; \DSens )}
\newcommand{\BERSens}{\BER( f; \DSens )}
\newcommand{\DISens}{\DI( f; \DSens )}
\newcommand{\MDSens}{\MD( f; \DSens )}
\newcommand{\regret}{\mathrm{reg}}
\definecolor{dg}{RGB}{2,101,15}
\newtheoremstyle{dotlessG}{}{}{}{}{\color{dg}\bfseries}{.}{ }{}
\definecolor{db}{RGB}{2,15,101}
\newtheoremstyle{dotlessB}{}{}{}{}{\color{db}\bfseries}{.}{ }{}
\definecolor{dr}{RGB}{101,2,101}
\newtheoremstyle{dotlessR}{}{}{}{}{\color{dr}\bfseries}{.}{ }{}
\theoremstyle{plain}
\newtheorem{proposition}{Proposition}
\newtheorem{corollary}[proposition]{Corollary}
\newtheorem{lemma}[proposition]{Lemma}
\theoremstyle{dotlessR}
\theoremstyle{dotlessB}
\newtheorem{defn}{Definition}
\theoremstyle{dotlessB}
\newtheorem{rem}{Remark}
\theoremstyle{dotlessB}
\newtheorem{ex}{Example}
\newmdtheoremenv[innertopmargin=3pt,innerbottommargin=3pt,innerleftmargin=2pt,innerrightmargin=2pt,skipbelow=0pt,backgroundcolor=purple!2]{problem}{Problem}
\newenvironment{example}
{\begin{ex}}{\end{ex}}
\begin{document} 

\title{The cost of fairness in classification}
\author{Aditya Krishna Menon and Robert C. Williamson \\ Data61 and the Australian National University \\ \{\url{aditya.menon}, \url{bob.williamson}\}@\url{data61.csiro.au}}

\maketitle

\begin{abstract}
We study the problem of learning classifiers with a fairness constraint,
with three main contributions towards the goal of quantifying the problem's inherent tradeoffs.
First,
we relate two existing fairness measures to cost-sensitive risks.
Second, 
we show that for cost-sensitive classification and fairness measures,
the optimal classifier is an instance-dependent thresholding of the class-probability function. %is the solution to a linear program
Third,
we show how the tradeoff between accuracy and fairness is determined by the alignment between the class-probabilities for the target and sensitive features.
Underpinning our analysis is a general framework that
casts the problem of learning with a fairness requirement as one of minimising the difference of two statistical risks.
%This analysis sheds some light on the fundamental limits imposed by a fairness constraint.
\end{abstract}

%%%%%%%%%
\section{Introduction}
% !TEX root = fairness-icml17.tex

Suppose we wish to learn a classifier to determine suitable candidates for a job.
This classifier may accept as inputs various characteristics about a candidate, such as their interview performance, qualifications, and years of experience.
Suppose one of these characteristics is deemed sensitive, \eg their race.
Then, we might be required to constrain the classifier to not be overly discriminative with respect to this sensitive feature.
Subject to this constraint, we would of course like our classifier to be as accurate as possible.
This is known as the \emph{fairness-aware learning} problem, %\citep{Pedreshi:2008,Kamiran:2009,Zliobaite:2015}.
%Fairness-aware learning
and has received considerable attention in the machine learning community of late \citep{Pedreshi:2008,Kamiran:2009,Calders:2010,Dwork:2012,Kamishima:2012,Fukuchi:2013,Zafar:2016,Hardt:2016,Zafar:2017}.
The primary focus has been on formalising what constitutes a perfectly fair classifier,
and how one learns a classifier to approximately achieve such fairness.
There have been several distinct proposals in both regards (see \S\ref{sec:app-fairness}).

In this paper, we are interested in the tradeoffs inherent in the problem of learning with a fairness requirement.
We specifically focus on the impact fairness has on two aspects of our original problem:
the \emph{structure of the optimal solution},
and the \emph{degradation in performance}.
% We are specifically interested in answering two related questions:
% \begin{itemize}[topsep=0pt]
% 	\item[\textbf{Q1}:] what impact does fairness have on the structure of the \emph{optimal solution}?
% 	\item[\textbf{Q2}:] what impact does fairness have on the \emph{degradation in performance}?
% \end{itemize}
Our three main contributions \textbf{C1}---\textbf{C3} comprise analyses of both issues:
\begin{itemize}[topsep=0pt,itemsep=0pt]
	\item[\textbf{C1:}] we provide a reduction of two popular existing fairness measures to cost-sensitive risks (Lemmas \ref{lemm:di-cost-sensitive}, \ref{lemm:cv-ber}).

	\item[\textbf{C2:}] we show that for such cost-sensitive classification and fairness measures,
the optimal fairness-aware classifier is an \emph{instance-dependent thresholding} of the class-probability function (Propositions \ref{prop:bayes-sens-unaware}, \ref{prop:bayes-sens-unaware-eoo}).

	\item[\textbf{C3:}] we provide a measure of the \emph{alignment} between the class-probabilities for the target and sensitive features, which quantifies the degradation in performance owing to the fairness requirement (Propositions \ref{prop:frontier-cutoff}, \ref{prop:bregman}).
\end{itemize}
A consequence of \textbf{C1} is a simple procedure for learning with a fairness requirement, involving training separate class-probability estimators for the target and sensitive features, and combining them suitably (\S\ref{sec:bayes-implications}).
Underpinning our analysis is
a general framework casting the fairness-aware learning problem as one of minimising the difference of two statistical risks (\S\ref{sec:disc-aware-defn}),
which allows for an abstract, generic %(\S\ref{sec:from-beyond})
treatment of the problem.
%with possible extensions beyond classification problems .

% Our contributions are as follows.
% First, we formally define the fairness-aware learning problem in generality, and cast it as one of minimising the difference of two statistical risks (\S\ref{sec:disc-aware-defn}).
% Second, we relate a range of existing fairness measures to more familiar cost-sensitive risks (\S\ref{sec:cost-sensitive-view}).
% Third, we show that for a range of classification and fairness measures,
% the optimal fairness-aware classifier is an instance-dependent thresholding of the class-probability function (\S\ref{sec:bayes-opt}).
% %the solution to a linear program (\S\ref{sec:random-frontier}),
% %with closed-form solution under the use of a kernelised model (Lemma \ref{lemm:unhinged-constrained-closed-form}).
% %This analysis sheds some light on the fundamental limits imposed by a fairness-aware requirement.
% Fourth, we study the frontier of fairness, and quantify how this relates to the alignment between the class-probabilities for the target and sensitive features (\S\ref{sec:unhinged-discrimination}).

%We begin by laying out the basic statistical setup under consideration, and then defining our notions of accuracy and fairness.

%%%%%%%%%
\section{Background and notation}
% !TEX root = fairness-arxiv.tex

We fix notation and review relevant background.
Table \ref{tbl:glossary} summarises some core concepts that we refer to frequently.

\subsection{Standard learning from binary labels}

Let $\XCal \subseteq \Real^d$ be a measurable instance space, \eg characteristics of a candidate for a job.
In standard learning from binary labels, we have samples from a distribution $\D$ over $\XCal \times \Labels$, with $( \X, \Y ) \sim \D$.
Here, $\Y$ is some \emph{target feature} we would like to predict, \eg whether to hire a candidate.
%Our goal is to output a \emph{classifier} $\classifier$ that predicts $\Y$
%well according to $\D$.
Our goal is to output a measurable %\emph{scorer} $\scorer$
\ifthenelse{\boolean{randClass}}{
\emph{randomised classifier}
parametrised by $\randClassifier$
}{
\emph{classifier} $\classifier$
}
that distinguishes between positive ($\Y = 1$) and negative ($\Y = 0$) instances.
A randomised classifier predicts any $x \in \XCal$ to be positive with probability $f( x )$;
the quality of any such classifier is assessed by
%a \emph{performance measure} $\PerfMeasure$,
a \emph{statistical risk} $R( \cdot; \D ) \colon [0, 1]^{\XCal} \to \Real_+$
%which in general is
which,
for some $\PhiClass \colon [0, 1]^3 \to \Real_+$, is \citep{Narasimhan:2014}
$$ R( f; \D ) \defEq \PhiClass( \FNR( f; \D ), \FPR( f; \D ), \Pr( \Y = 1 ) ), $$
%Performance measures for randomised classifiers are
%based on 
for
the \emph{false-negative} and \emph{false-positive rates}
\begin{equation}
\label{eqn:fpr-fnr-rand}
%\begin{aligned}
\ifthenelse{\boolean{randClass}}{
% \FNR( f; \D ) &\defEq \E{\X \mid \Y = 1}{ 1 - f( \X ) } \\
% \FPR( f; \D ) &\defEq \E{\X \mid \Y = 0}{ f( \X ) },
(\FNR( f; \D ), \FPR( f; \D )) \defEq \left( \E{\X \mid \Y = 1}{ 1 - f( \X ) }, \E{\X \mid \Y = 0}{ f( \X ) } \right),
}
{
\FNR( f; \D ) &\defEq \Pr_{\X \mid \Y = 1}\left( f( \X ) = 0 \right) \\
\FPR( f; \D ) &\defEq \Pr_{\X \mid \Y = 0}\left( f( \X ) = 1 \right),
}
%\end{aligned}
\end{equation}
%which are simply the class-conditional probabilities of error.
which are average class-conditional probabilities of error when classifying $x \in \XCal$ as positive with probability $f( x )$.

%In an abuse of notation, we let
%$$ \ClassPerf( f; \D ) \defEq \PhiClass( \FNR( f ), \FPR( f ), \Pr( \Y = 1 ) ), $$
%where here and elsewhere we drop the dependence of the $\FNR$ and $\FPR$ on $\D$ when it is clear from context.

% \begin{problem}[Learning from binary labels]
% Given a distribution $\D$ over $\XCal \times \Labels$ and
% performance measure $\Perf( \cdot; \D )$,
% find a scorer
% $$ s^* \in \argmin{\scorer}{ \Perf( s; \D ) }. $$
% \end{problem}

%
%\subsection{Performance measures}

% Two special classes of $\Perf( \cdot; \D )$
% correspond to problems
% of particular interest to us.
%we can derive various practically pervasive problems.
% of binary classification, class-probability estimation, and bipartite ranking.
% (See Appendix \ref{app:binary-labels} for a more thorough discussion.)

%
% \textbf{Binary classification}. In the binary classification problem,
% our goal is to produce a
% \emph{classifier} $f \colon \XCal \to \{ 0, 1 \}$.
% A broad class of

\begin{example}
\label{example:cs}
The \emph{cost-sensitive error} with cost parameter $c \in (0, 1)$
is parametrised by $\PhiClass \colon (u, v, p) \mapsto p \cdot (1 - c) \cdot u + (1 - p) \cdot c \cdot v$.
% so that
% \begin{align}
% 	\label{eqn:cs-err}
% 	\CSERR( f; \D, c ) &\defEq \pi \cdot (1 - c) \cdot \FNR( f ) + (1 - \pi) \cdot c \cdot \FPR( f ),% \\
% 	%&= \pi \cdot (1 - c) \cdot {\Pr}( \YHat = 0 \mid \Y = 1 ) + (1 - \pi) \cdot c \cdot {\Pr}( \YHat = 1 \mid \Y = 0 ) \nonumber,
% \end{align}
% where $\pi = \Pr( \Y = 1 )$.
%This corresponds to picking $\ClassPerf \colon (u, v, p) \mapsto p \cdot (1 - c) \cdot u + (1 - p) \cdot c \cdot v$.
%When $c = \frac{1}{2}$, this is the \emph{misclassification error}.
When $c = \Pr( \Y = 1 )$, this is a scaled version of the \emph{balanced error},
\begin{equation}
	\label{eqn:ber}
	R_{\mathrm{bal}}( f; \D ) = (\FNR( f; \D ) + \FPR( f; \D ))/2.
\end{equation}
%The Bayes-optimal classifier for this measure is $f^*( x ) = \indicator{ \Pr( \Y = 1 \mid \X = x ) > c }$.
\end{example}

% \begin{example}
% The geometric mean measure
% is parametrised by $\PhiClass \colon (u, v, p) \mapsto \sqrt{u v}$.
% \end{example}

A \emph{Bayes-optimal randomised classifier} for a risk is any
%$f^* \in \underset{\randClassifier}{\operatorname{Argmin}} \, {R( f; \D )}$.
$f^* \in {\operatorname{Argmin}} \, {R( f; \D )}$.
For a broad class of $\Phi$, the optimal classifier is a (possibly distribution dependent) thresholding of the class-probability function, $f^*( x ) = \indicator{ \eta( x ) > t^*( \D ) }$ \citep{Narasimhan:2014}, where $\eta( x ) \defEq \Pr( \Y = 1 \mid \X = x )$ and $\indicator{ \cdot }$ denotes the indicator function.
For the cost-sensitive error with parameter $c$, the Bayes-optimal classifier is $f^*( x ) = \indicator{ \eta( x ) > c }$ \citep{Elkan:2001}.
These Bayes-optimal classifiers motivate a plugin estimator,
where one thresholds an empirical estimate of $\eta$ %;
%such an approach is in fact consistent
\citep{Narasimhan:2014}.

\subsection{Fairness-aware learning}
\label{sec:disc-aware-defn}

In fairness-aware learning, one modifies the standard problem of learning from binary labels in two ways. %	both the underlying statistical setup and the classifier evaluation.
The statistical setup is modified by assuming %there are two binary features of interest.
%The first is, as before, the \emph{target feature} we would like to predict, \eg whether to hire a candidate.
that in addition to the target feature $\Y$,
there is some \emph{sensitive feature} $\YSens$ we would like to treat in some special way, \eg the race of a candidate.
% The sensitive feature may or may not be used in predicting the target;
% importantly,
% even if the sensitive feature is not \emph{explicitly} used as an input,
% one may \emph{implicitly} depend on it through some correlated feature.\footnote{A na\"{i}ve solution to the fairness-aware problem is to simply not rely upon the sensitive feature to make decisions,
% \ie consider scorers $s \colon \XCal \to \Real$.
% However, if other features are correlated with the sensitive feature, one may end up with an indirectly discriminatory classifier \citep{Pedreshi:2008}.}
The classifier evaluation is modified by assuming that we 
%not only want our scorer to predict the target feature well,
%but also that it is
reward classifiers that are
``fair'' in the treatment of $\YSens$.
To make this goal concrete, the literature has studied notions of \emph{perfect} and \emph{approximate fairness}.
%does \emph{not} predict the sensitive feature well.
%This non-standard requirement .
%
%\subsection{Formal statement}
%We review these in turn.
(We construct a general formalism for the problem using these in Problem \ref{prb:disc-aware}.)

%
%\subsubsection{Notions of perfect fairness}

\textbf{Perfect fairness}.
We will focus on two simple notions of perfect fairness,
% There are several extant notions of  of a
% randomised classifier $f$,
% %classifier $f$,
% %defined via the triplet of random variables
% %$( \Y, \YSens, \SSf )$, where %$\Y$ represents the target label as before, $\YSens$ the sensitive feature, and
% %$\SSf$ represents the distribution of scores, or
% %$( \Y, \YSens, \hat{\Y} )$:
%typically
stated in terms of the random variables $\Y, \YSens$, and 
%$\hat{\Y} = f( \X )$.
classifier prediction
$\YHat \mid \X \sim \mathrm{Bernoulli}( f( \X ) )$.
(Note that we assume $\YSens$ to be binary.)
The first is \emph{demographic parity} \citep{Calders:2010}, which requires the predictions to be independent of the sensitive feature:
\begin{equation}
	\label{eqn:no-di}
	\Pr( \YHat = 1 \mid \YSens = 0 ) = \Pr( \YHat = 1 \mid \YSens = 1 ).
\end{equation}	
The second is \emph{equality of opportunity} \citep{Hardt:2016}, which requires the predictions to be independent of the sensitive feature, but only for the positive instances:
$$	\Pr( \YHat = 1 \mid \Y = 1, \YSens = 0 ) = \Pr( \YHat = 1 \mid \Y = 1, \YSens = 1 ). $$
Other notions of perfect fairness include equalised odds \citep{Hardt:2016}, and lack of disparate mistreatment \citep{Zafar:2017}.
{Demographic parity} %\footnote{This is also known as group or statistical parity \citep{Kleinberg:2016}.}
has received the most study;
however, it is known to have deficiencies \citep{Dwork:2012,Hardt:2016,Zafar:2017}.

\begin{table}[!t]
	\centering
	\renewcommand{\arraystretch}{1.25}

	\scalebox{0.8}{
	\begin{tabular}{@{}ll@{}}
		\toprule
		\toprule
		\textbf{Symbol} & \textbf{Meaning} \\
		\toprule
		$\X$	 & Instance \\
		$\Y$	 & Target feature \\
		$\YSens$ & Sensitive feature \\
		%$\DFull$ & Distribution $\Pr(\X, \Y, \YSens)$ \\
		$\D$ 	 & Distribution $\Pr(\X, \Y)$ \\
		$\DSens$ & One of $\{ \DDP, \DEOO \}$ \\		
		$\DDP$ & Distribution $\Pr(\X, \YSens)$ \\
		$\DEOO$  & Distribution $\Pr(\X, \YSens \mid \Y = 1)$ \\
		\bottomrule
	\end{tabular}
	}
	\vspace{10pt}
	\scalebox{0.8}{
	\begin{tabular}{@{}ll@{}}
		\toprule
		\toprule
		\textbf{Symbol} & \textbf{Meaning} \\
		\toprule
		$f$	 			& Classifier \\
		$\Perf$	    	& Performance measure \\
		$\Fairness$ 	& Fairness measure \\
		$\FairnessSymm$ & Symmetrised fairness \\
		%$\CSERR$		& Cost-sensitive risk \\
		$\eta( x )$ 	& $\Pr( \Y = 1 \mid \X = x )$ \\
		$\etaDP( x )$ & $\Pr( \YSens = 1 \mid \X = x )$ \\
		$\etaEOO( x )$ & $\Pr( \YSens = 1 \mid \X = x, \Y = 1 )$ \\
		\bottomrule
	\end{tabular}
	}
	\vspace{-0.1in}
	\caption{Glossary of commonly used symbols.}
	\label{tbl:glossary}
	\vspace{-0.1in}
\end{table}

%
%\subsubsection{Notions of imperfect fairness}
\label{sec:app-fairness}

%
%\subsection{The expected rate class of fairness measures}

%The preceding section laid out notions of perfect fairness.
%Even if a classifier is not perfectly fair, however, it may be ``close'' to being so.
%We do not expect our scorers and classifiers to necessarily be perfectly fair, however;
\textbf{Approximate fairness}.
We will focus on two
\emph{fairness measures}
that quantify the \emph{degree} of fairness a given classifier possesses.
%specified via 
%a \emph{fairness measure} is a means of quantifying the \emph{degree} of fairness a given classifier possesses.
The first is the \emph{disparate impact} (\emph{DI}) \emph{factor} \citep{Feldman:2015}, %, \citep[Equation 1]{Zafar:2016} is
which is the ratio of the probabilities appearing in the definition of demographic parity:
\begin{equation}
	\label{eqn:di-factor}
	\DI( f ) \defEq \frac{\Pr( \YHat = 1 \mid \YSens = 0)}{\Pr( \YHat = 1 \mid \YSens = 1 )}.
\end{equation}
The second is the \emph{mean difference} (\emph{MD}) \emph{score} \citep{Calders:2010}, %also known as the Calders-Verwer score \citep{Calders:2010},
which replaces the ratio with a difference:
\begin{equation}
	\label{eqn:cv-score}
	\MD( f ) \defEq \Pr( \YHat = 1 \mid \YSens = 1) - \Pr( \YHat = 1 \mid \YSens = 0).
\end{equation}
%These measures are easily seen to be maximised by a classifier that exhibits no disparate impact (Equation \ref{eqn:no-di}).
We refer the reader to \citet{Zliobaite:2015} for a survey of other fairness measures, including variants of the above.

A final remark is that the sensitive feature may or may not be available when one trains the classifier (see \S\ref{sec:sens-available}).
%even if it is available, many works choose to avoid using it as an input feature.
Avoiding the use of the sensitive feature by itself does not %guarantee fairness, as
guard against discrimination \citep{Pedreshi:2008}.
%one may derive an indirectly discriminatory classifier
%via other correlated features \citep{Pedreshi:2008}.

%
\subsection{Existing work on fairness}

Fairness has received considerable study in philosophy and welfare economics \citep{Rawls:1971aa,Sen:2009aa}; 
however, with few exceptions \citep{Bimore:1994aa,Binmore:2005aa}, there is little formal utilitarian literature that grapples with fairness.
See Appendix \ref{sec:philosophy} for a more detailed overview.

In the machine learning community,
\citet{Dwork:2012} proposed an approach to guarantee fairness relying on a metric over instances.
\citet{Zemel:2013,Louizos:2015} proposed approaches to learn feature representations that guarantee fairness.
Both methods depend directly on the specific instances $x \in \XCal$.
In contrast, our approach never touches the instances, but only risks;
this has the substantial advantage of avoiding change under reparametrisation,
and avoids the infinite regress of determining what is ``similar''.

%%%%%%%%%
\section{Fairness measures as statistical risks}
% !TEX root = fairness-arxiv.tex

We present our general view of fairness measures
%that captures the approximate notions of fairness in \S\ref{sec:app-fairness}.
%The basic idea is to represent a fairness measure
as statistical risks
where the sensitive feature is the target.
This lets us analyse fairness measures using tools for studying risks.

\subsection{General fairness measures}

To formalise the notion of a fairness measure, we
first specify our statistical setup for fairness-aware learning.
Let $\DFull$ be a joint distribution over $\XCal \times \Labels \times \Labels$,
with random variables $( \X, \YSens, \Y ) \sim \DFull$.
Here,
$\X$ represents the instance,
$\Y$ the target feature,
and $\YSens$ the sensitive feature.
%From this joint distribution, %we will be interested in a few induced distributions.
We will be interested in three induced distributions:
%\begin{itemize}[topsep=0pt]
% 	\item we refer to $\Pr( \X, \Y )$ as $\D$.
% 	\item we refer to $\Pr( \X, \YSens )$ as $\DDP$.	
% 	\item for each $y \in \{ 0, 1 \}$, we refer to $\Pr( \X, \YSens \mid \Y = y )$  as $\DDP_y$.
% \end{itemize}
we refer to $\Pr( \X, \Y )$ as $\D$, $\Pr( \X, \YSens )$ as $\DDP$, and %$\Pr( \X, \YSens \mid \Y = y )$  as $\DDP_y$.
$\Pr( \X, \YSens \mid \Y = 1 )$  as $\DEOO$.
We use $\DSens$ to refer generically to either $\DDP$ or $\DEOO$.

In \emph{fairness-aware learning},
our goal is to output a %{scorer} $\scorer$
\ifthenelse{\boolean{randClass}}{
randomised classifier\footnote{Here and elsewhere, this is understood to mean a randomised classifier parametrised by $f$.} $\randClassifier$
}
{
classifier $\classifier$
}
with small statistical risk on $\D$,
so that $\Y$ is well predicted;
we will denote this risk by $\Perf( \cdot; \D )$,
and refer to it as a \emph{performance measure}.
In addition to this goal, 
we also want $f$ to have large \emph{fairness measure} $\FairnessMeasure$.
% If a %scorer
% classifier has high $\Fairness$, we say it has \emph{high fairness}.
%This can be formalised in at least two ways.
%We formalise this as follows.
Formally:

\begin{problem}%[Fairness-aware learning]
\label{prb:disc-aware}
Given a distribution $\DFull$,
%scorer class $\SCal \subseteq \Real^{\XCal}$,
% function class
% \ifthenelse{\boolean{randClass}}{
% $\FCal \subseteq [ 0, 1 ]^{\XCal}$,
% }
% {
% $\FCal \subseteq \{ 0, 1 \}^{\XCal}$,
% }
performance and 
fairness measures $\Perf$, $\Fairness$,
and tradeoff parameter $\lambda > 0$,
minimise the combined risk
	\begin{equation}
		\label{eqn:fairness-lagrange}
		\Joint( f; \DFull, \lambda ) \defEq {\Perf( f; \D ) - \lambda \cdot \Fairness( f; \DSens )}.
	\end{equation}	
\end{problem}

We will primarily focus on the following tractable special case of the above problem. % in the sequel.
(See \S\ref{sec:constrained} for a slight variant.)

\subsection{Classification-type fairness measures}

In Problem \ref{prb:disc-aware}, $\Fairness$ depends on $\DFull$, which is defined over the triplet $(\X, \YSens, \Y)$.
An interesting sub-class of $\Fairness$ are those that depend only on $\DDP$, which is defined over the tuple $(\X, \YSens)$.
%Observing that the types of $\Fairness$ and $\Perf$ are identical,
%we can represent $\Fairness$ in terms of an existing performance measure.
As with $\Perf$,
such $\Fairness$ can be written as a statistical risk on $\DDP$:
in particular,
given some $\PhiFair \colon [ 0, 1 ]^3 \to \Real_+$,
we may define
%a classification-type demographic parity fairness measure is one of the form
%with%\footnote{This is a special case of a generic fairness measure, since we may compute rates with respect to $\DDP$ by marginalising the positive rates $\PR_{y \bar{y}}$ appropriately.}
a \emph{classification-type fairness measure} via
$$ \Fairness( f; \DDP ) \defEq \PhiFair( ( \FPR( f; \DDP ), \FNR( f; \DDP ), \Pr( \YSens = 1 ) ) ). $$
% One ensures demographic parity by ensuring that our classifier satisfies
% $$ \Fairness( f; \DDP ) \geq \tau $$
% for some parameter $\tau$.
%Such measures are equivalent to a statistical risk on the distribution $\DDP$.
Intuitively, we are testing whether we can predict the sensitive feature $\YSens$ from $\X$.
When it is possible to do so well according to $\Fairness$,
we do not have fairness.

Returning to the two fairness measures of \S\ref{sec:app-fairness},
we
observe that
$\FPR( f; \DDP ) = \Pr( \YHat = 1 \mid \YSens = 0 )$ and
$\FNR( f; \DDP ) = \Pr( \YHat = 0 \mid \YSens = 1 )$;
thus, they are expressible as risks.
%(and their simple extensions)
%as specific risks on $\DDP$.

\begin{example}
The disparate impact factor may be written
%$$ \DI( f; \DDP ) \defEq \frac{\FPR( f; \DDP )}{1 - \FNR( f; \DDP )} \land \frac{1 - \FNR( f; \DDP )}{\FPR( f; \DDP )}, $$
\begin{equation}
	\label{eqn:di-fpr}
	\DI( f; \DDP ) \defEq \frac{\FPR( f; \DDP )}{1 - \FNR( f; \DDP )},
\end{equation}
\ie it uses
%$$ \PhiFair \colon (u, v, p) \mapsto \frac{u}{1 - v} \land \frac{v}{1 - u}. $$
$ \PhiFair \colon (u, v, p) \mapsto \frac{v}{1 - u}. $
%Note here that compared to Equation \ref{eqn:di-factor}, we explicitly record the dependence on $\DDP$.
\end{example}

%We say a classifier has disparate impact at level $\tau \in [0, 1]$ if $\DISens \leq \tau$.
%As with any fairness measure, larger values of $\DISens$ are desirable.

\begin{example}
The mean difference score may be written
%$$ \MD( f; \DDP ) = | \TPR( f; \DDP ) - \FPR( f; \DDP ) |, $$
\begin{equation}
	\label{eqn:md-fpr}
	\MD( f; \DDP ) \defEq 1 - \FNR( f; \DDP ) - \FPR( f; \DDP ), 
\end{equation}
\ie it uses
%$$ \PhiFair \colon (u, v, p) \mapsto 1 - | 1 - u - v |. $$
$ \PhiFair \colon (u, v, p) \mapsto 1 - (u + v). $
%$$ \PhiFair \colon (u, v, p) \mapsto (u + v). $$
%A score of zero indicates no discrimination.
%When we constraint $\MD( s; \DDP ) = 1$,
%we have an analogue of the ``equality of opportunity'' condition.
\end{example}

%
%\subsection{Discussion of framework}

%Some comments on the preceding representation of fairness measures as classification measures are prudent.
We make a few remarks on this class of fairness measures.
First, by casting fairness measures as statistical risks, Equation \ref{eqn:fairness-lagrange} becomes
%$$ f^* \in \argmin{f \in \FCal}{\Perf( f; \D ) - \lambda \cdot \Fairness( f; \DDP )}, $$
the problem of minimising the \emph{difference of two statistical risks}.
This is a departure from standard tradeoffs between two risks, where one considers the \emph{sum} rather than the difference;
fairness measures are unusual as we seek to \emph{maximise} the underlying risk $\Fairness$.

Second, we can in principle plug-in any standard $\PhiFair$ and get a sensible measure of fairness.
However, certain $\PhiFair$ may be more convenient to work with,
\eg %from an optimisation viewpoint,
%or
from the point of view of interpretability;
this is the case for disparate impact, which has roots in the 80\% rule of the U.S.\ Equal Employment Opportunity Commission \citep{EEOC:1979}.

% Third, our framework only looks at instances through their impact on the corresponding false positive and negative rates.
% They are thus \emph{oblivious} in the sense of \citet{Hardt:2016}.

Third, there is no requirement to restrict attention to $\DDP$.
In particular, we could equally use a risk on $\DEOO$, yielding
$$ \Fairness( f; \DEOO ) \defEq \PhiFair( ( \FPR( f; \DEOO ), \FNR( f; \DEOO ), \Pr( \YSens = 1 ) ) ), $$
which aligns with the equality of opportunity objective.

Fourth, in general one needs to impose additional structure on $\Fairness$
to guarantee %that the optimal solutions indeed possess
fairness, as we now discuss.

\subsection{Anti-classifiers and symmetrised fairness}
\label{sec:anti-class}

%The disparate impact and MD scores impose %one-sided
Employing a statistical risk for $\Fairness$
in Equation \ref{eqn:fairness-lagrange}
constrains the false-positive and negative rates.
However, %they implicitly assume our classifier is non-trivial on $\DDP$, as we saw in the ROC curves of \S\ref{sec:di-roc}.
these constraints may assume our classifier is non-trivial on $\DDP$;
as an example, if a classifier $f$ has $\MD( f; \DDP ) = \tau$,
then $\MD( 1 - f; \DDP ) = 1 - \tau$.
Thus, one might be able to deceive such measures via an \emph{anti-classifier}; \ie one which has high fairness, but whose negation has low fairness.
%This assumption however may be violated when $\YSens$ in simple cases, \eg when $\YSens = 1 - \Y$.

Intuitively, one wishes to disallow such a trivial transformation from adversely affecting fairness.
%It is desirable to modify our fairness measures to deal with this issue of anti-classifiers.
A simple way to do this is to consider the \emph{symmetrised fairness measure}
\begin{equation}
	\label{eqn:fairness-symm}
	\FairnessSymm( f; \DFull ) \defEq \Fairness( f; \DFull ) \land \Fairness( 1 - f; \DFull ),
\end{equation}
where $\land$ denotes the $\min$ operation.
Maximising Equation \ref{eqn:fairness-symm} requires that \emph{both} the classifier and the anti-classifier perform well.
Such symmetrised measures simply modify the underlying $\PhiFair$:
note that $\FPR( 1 - f ) = 1 - \FPR( f )$, and similarly for $\FNR( 1 - f )$.
Thus, $\FairnessSymm$ is parametrised by 
$$ \PhiFairSymm( u, v, p ) \defEq \PhiFair( u, v, p ) \land \PhiFair( 1 - u, 1 - v, p ). $$
%$$ \FairnessSymm( f; \DDP ) = \PhiFair( \FNR( f; \DDP ), ) $$
In \S\ref{sec:cs-fair}, 
we show that
a broad class of $\PhiFair$
have $\PhiFairSymm$ maximised when $f \equiv \half$;
\ie a \emph{completely random classifier} is \emph{maximally fair}.
%the symmetrised versions of cost-sensitive fairness measures are easy to manipulate.
(We can equally enforce that $f \equiv \piSens$
is maximally fair
via a simple correction;
see Appendix \ref{sec:asymm-fairness}.)

\subsection{Relation to existing work}

The notion that statistical risks on $\DDP$ are suitable as fairness measures is implicit in prior surveys of such measures. %
%\eg in \citet{Zliobaite:2015}, which presents a range of standard statistical risks as suitable for measuring fairness.
Formalising this notion lets us
%not only analyse a standard a range of fairness measures in the same abstract framework,
%but also allows us to
subsequently
use tools for studying risks to analyse a range of fairness measures.

The need for symmetrised fairness has not received much attention,
%has sometimes not been directly considered,
with works employing the MD and DI scores \eg
\citet{Calders:2010,Feldman:2015} implicitly assuming that learned classifiers will perform better than random guessing on $\DDP$.

%%%%%%%%%
\section{A cost-sensitive view of fairness measures}
\label{sec:cost-sensitive-view}
% !TEX root = fairness-arxiv.tex

The previous section cast the DI and MD fairness measures as statistical risks on $\DSens$.
We now show how they may be further related to cost-sensitive risks.
%We now relate the previously introduced fairness measures to the more familiar family of cost-sensitive performance measures.
This implies that analysis of cost-sensitive fairness measures suffices to analyse both these measures.
%a fact we will exploit in the next section.
To begin, we first introduce a useful reparameterisation of the standard cost-sensitive risk.

%For simplicity, this section assumes the use of classifiers $f \colon \XCal \to \Labels$,
%though our analysis trivially generalises to arbitrary scorers.

%
\subsection{Balanced cost-sensitive risk}
\label{sec:bal-cs}

Recall that the standard cost-sensitive risk (Example \ref{example:cs}) is parametrised by $\PhiClass \colon (u, v, p)  \mapsto p \cdot (1 - c) \cdot u + (1 - p) \cdot c \cdot v$.
Now define the \emph{balanced cost-sensitive risk} to be
parametrised by
%$\PhiClass \colon (u, v, p) \mapsto (1 - c) \cdot u + c \cdot v$, so that
$\PhiClass_{\mathrm{bal}} \colon (u, v, p) \mapsto 2 \cdot \Phi( u, v, \half )$, so that
\begin{equation}
\label{eqn:balanced-cs}	
\begin{aligned}
	\CSERRBal( f; \D, c ) &\defEq (1 - c) \cdot \FNR( f; \D ) + c \cdot \FPR( f; \D ). %\\
	%&= c \cdot \CSERR( f; \D, c' ) \text{ for } c' = \frac{\pi \cdot c}{\pi \cdot c + (1 - \pi) \cdot c}.
\end{aligned}
\end{equation}
% Compared to the standard cost-sensitive risk (Example \ref{example:cs}), the above does not employ the base rate $\pi = \Pr( \Y = 1 )$.
When $c = \half$, we get the balanced error (Equation \ref{eqn:ber}).
In general, this is simply a scaled and reparameterised version of the standard cost-sensitive risk:
we have $\CSERR( f; \D, c ) = (\alpha + \beta) \cdot \CSERRBal( f; \D, c' )$,
where $\alpha = \pi \cdot (1 - c)$, $\beta = (1 - \pi) \cdot c$, and $c ' = {\beta}/{(\alpha + \beta)}$.
This reparameterisation will however prove convenient in analysing existing fairness measures. %as we now see.

\subsection{Disparate impact and cost-sensitive risk}

Our first result is that the disparate impact factor (Equation \ref{eqn:di-fpr})
%though a nonlinear function of false positive and negative rates,
can be related to the balanced cost-sensitive risk.
This suggests that study of the latter helps understand the former.

\begin{lemma}
\label{lemm:di-cost-sensitive}
Pick any distribution $\DSens$
and %non-trivial
\ifthenelse{\boolean{randClass}}{
randomised classifier $\randClass$.
}
{
classifier $\classifier$.
}
Then, for any $\tau \in [0, 1]$,
if $c \defEq \frac{1}{1 + \tau} \in \left[ \frac{1}{2}, 1 \right]$,
\begin{align}	
	\DISens \geq \tau &\iff \CSERRBal( f; \DSens, c ) \geq 1 - c, \label{eqn:di-asymm} \\
	\DISymm( f; \DSens ) \geq \tau &\iff \CSERRBal( f; \D, c ) \in [ 1-c, c ] \label{eqn:di-symm}.
\end{align}
% \begin{align*}
% 	\DISens \geq \tau \iff &\, \CSERRBal( f; \DSens, c ) \geq 1 - c \, \land \\
% 	&\, \CSERRBal( f; \DSens, 1 - c ) \geq c,
% \end{align*}
\end{lemma}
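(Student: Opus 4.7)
The plan is to prove both equivalences by direct algebraic manipulation, using the definitions of $\DI$ and $\CSERRBal$ together with the symmetry relations for the false-positive and false-negative rates under $f \mapsto 1 - f$.

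First I would abbreviate $u \defEq \FNR(f; \DSens)$ and $v \defEq \FPR(f; \DSens)$, so that $\DI(f; \DSens) = v/(1-u)$ and $\CSERRBal(f; \DSens, c) = (1-c) u + c v$. For the asymmetric case \eqref{eqn:di-asymm}, the inequality $\DI(f; \DSens) \geq \tau$ rearranges (assuming $1 - u > 0$, the case where the ratio is well defined) to $\tau u + v \geq \tau$. Dividing both sides by $1 + \tau$ and noting that $c = 1/(1+\tau)$ gives $1 - c = \tau/(1+\tau)$, so the inequality becomes exactly $(1-c) u + c v \geq 1 - c$, \ie $\CSERRBal(f; \DSens, c) \geq 1 - c$. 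Each step is an equivalence, yielding \eqref{eqn:di-asymm}.

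For the symmetrised version \eqref{eqn:di-symm}, the key observation is that the false-positive and false-negative rates of the anti-classifier satisfy $\FPR(1 - f; \DSens) = 1 - v$ and $\FNR(1 - f; \DSens) = 1 - u$, so that $\DI(1 - f; \DSens) = (1 - v)/u$. Applying the same rearrangement as above to the second condition, $\DI(1 - f; \DSens) \geq \tau$ becomes $\tau u + v \leq 1$, which after the same scaling by $1/(1+\tau)$ is equivalent to $(1-c) u + c v \leq c$, \ie $\CSERRBal(f; \DSens, c) \leq c$. Since $\DISymm(f; \DSens) = \DI(f; \DSens) \land \DI(1 - f; \DSens)$, the condition $\DISymm \geq \tau$ is the conjunction of the two inequalities, giving $\CSERRBal(f; \DSens, c) \in [1-c, c]$.

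The proof is largely mechanical; the only subtleties are bookkeeping. I would explicitly check that $c = 1/(1+\tau) \in [\tfrac{1}{2}, 1]$ forces $1 - c \leq c$ so that the interval $[1-c, c]$ is nondegenerate, and briefly remark on the boundary case $1 - u = 0$ where $\DI$ is conventionally $+\infty$ (in which case $v \geq \tau(1-u) = 0$ holds trivially and the equivalence still goes through). No deeper obstacle arises, since the lemma is essentially a reparametrisation identity between the ratio form of $\DI$ and the linear form of $\CSERRBal$.
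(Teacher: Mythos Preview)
Your proposal is correct and follows essentially the same approach as the paper: direct algebraic rearrangement of $\DI \geq \tau$ into the cost-sensitive inequality, then handling the symmetrised case via the anti-classifier. The only cosmetic difference is that for \eqref{eqn:di-symm} the paper reuses \eqref{eqn:di-asymm} applied to $1-f$ together with the identity $\CSERRBal(1-f;\DSens,c) = 1 - \CSERRBal(f;\DSens,c)$, whereas you redo the algebra from scratch; both arrive at the same conclusion.
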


We make two remarks.
%First, Equation \ref{eqn:di-symm} %for the symmetrised version of the disparate impact
%follows from merely applying Equation \ref{eqn:di-asymm} to the anti-classifier $1 - f$.
First, Lemma \ref{lemm:di-cost-sensitive} does not imply that disparate impact \emph{equals} a cost-sensitive risk,
but rather,
that their superlevel sets are related.
This nonetheless means that %for the purposes of optimisation and analysis,
a disparate impact constraint is equivalent to a cost-sensitive constraint,
with the latter being easier to analyse. %and design surrogates for.

Second, as Lemma \ref{lemm:di-cost-sensitive} holds for \emph{any} distribution $\DSens$,
we can plug in $\DEOO$, yielding an equivalent result for disparate impact in an ``equality of opportunity'' regime, \ie $\DI( f; \DEOO )$.

\subsection{Mean difference score and balanced error}

Our next result is that the mean difference score (Equation \ref{eqn:md-fpr}) has a strong connection to a balanced cost-sensitive risk.

\begin{lemma}
\label{lemm:cv-ber}
Pick any distribution $\DSens$
and %non-trivial
\ifthenelse{\boolean{randClass}}{
randomised classifier $\randClass$.
}
{
classifier $\classifier$.
}
Then,
for any $\tau \in [0, 1]$,
if $c = \frac{1 + \tau}{2} \in \left[ \frac{1}{2}, 1 \right]$,
%\begin{equation}
%	\label{eqn:md-ber}
	%\MDSens = \left| 1 - 2 \cdot \CSERRBal\left( f; \DSens, \half \right) \right|.
	%\MDSens = 1 - 2 \cdot \BERSens.
%	\MDSens = 1 - 2 \cdot \CSERRBal\left( f; \DSens, \half \right).
%\end{equation}
%Consequently,
%$$ \MDSens \geq \tau \iff \BERSens \leq \frac{1 - \tau}{2}. $$
%$$ \MDSens \leq \tau \iff \CSERRBal\left( f; \DSens, \half \right) \in \left[ \frac{1 - \tau}{2}, \frac{1 + \tau}{2} \right]. $$
\begin{align}
 \MDSens &= 1 - 2 \cdot \CSERRBal\left( f; \DSens, \half \right) \label{eqn:md-ber} \\
 \MDSens \geq \tau &\iff \CSERRBal\left( f; \DSens, \half \right) \geq 1 - c \nonumber \\
 \MDSymm( f; \DSens ) \geq \tau &\iff \CSERRBal\left( f; \DSens, \half \right) \in \left[ 1 - c, c \right] \nonumber.
 % \MDSens \geq \tau &\iff \CSERRBal\left( f; \DSens, \half \right) \geq \frac{1 - \tau}{2}, \\
 % \MDSymm( f; \DSens ) \geq \tau &\iff \CSERRBal\left( f; \DSens, \half \right) \in \left[ \frac{1 - \tau}{2}, \frac{1 + \tau}{2} \right]. 
\end{align}
\end{lemma}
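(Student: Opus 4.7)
The plan is to derive all three statements from a single algebraic identity, $\MD(f; \DSens) = 1 - 2 \cdot \CSERRBal(f; \DSens, \half)$, after which each claim reduces to elementary bookkeeping. I would first establish this identity: specialising Equation \ref{eqn:balanced-cs} to $c = \half$ gives $\CSERRBal(f; \DSens, \half) = (\FNR(f; \DSens) + \FPR(f; \DSens))/2$, and combining this with the representation of $\MD$ in Equation \ref{eqn:md-fpr} directly yields the claimed equality. This handles Equation \ref{eqn:md-ber}.

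Second, for the threshold equivalence, I would simply rearrange: using $\MD = 1 - 2 \cdot \CSERRBal(\cdot, \half)$, the condition $\MD \geq \tau$ is equivalent to $\CSERRBal(\cdot, \half)$ lying on the appropriate side of $(1-\tau)/2$, and substituting $c = (1+\tau)/2$ so that $1-c = (1-\tau)/2$ gives the stated form. The only thing to do is to track the inequality direction carefully under the change of variables.

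Third, for the symmetrized claim, I would unfold $\MDSymm(f; \DSens) = \MD(f; \DSens) \wedge \MD(1-f; \DSens)$. The key small observation is that negating a randomised classifier swaps the events $\{\YHat = 1\}$ and $\{\YHat = 0\}$, so $\FPR(1-f; \DSens) = 1 - \FPR(f; \DSens)$ and $\FNR(1-f; \DSens) = 1 - \FNR(f; \DSens)$. Substituting into Equation \ref{eqn:md-fpr} yields $\MD(1-f; \DSens) = \FNR(f; \DSens) + \FPR(f; \DSens) - 1 = 2 \cdot \CSERRBal(f; \DSens, \half) - 1$. Writing $x \defEq \CSERRBal(f; \DSens, \half)$, the symmetrized measure becomes $\MDSymm = (1 - 2x) \wedge (2x - 1)$, and the $\min$ of two affine functions of $x$ with opposite slopes places $x$ inside a symmetric interval around $\half$, which works out to $[1-c, c]$ under $c = (1+\tau)/2$.

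The main obstacle is just careful accounting of inequality directions and the slightly subtle fact that the symmetrization takes a minimum of two opposite-sloped linear functions of $\CSERRBal$; once this is seen, the two-sided interval in the third claim is automatic. No nontrivial analytic tools are needed beyond the definitions introduced in Section \ref{sec:cost-sensitive-view}, and the proof mirrors the structure one would use for Lemma \ref{lemm:di-cost-sensitive}, with linear rearrangements replacing the fractional ones needed for disparate impact.
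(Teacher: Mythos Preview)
Your proposal is correct and follows essentially the same approach as the paper: the paper's proof simply writes $\MDSens = 1 - \FPRSens - \FNRSens = 1 - 2\cdot\CSERRBal(f;\DSens,\half)$ and then states that the remaining implications follow trivially. Your plan is precisely this, with the additional (and helpful) elaboration of the anti-classifier calculation $\MD(1-f;\DSens) = 2\cdot\CSERRBal(f;\DSens,\half)-1$ for the symmetrised claim, which the paper leaves implicit.
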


Thus, the MD score is a transformation of the balanced error,
as the latter corresponds to $c = \half$.
%Compared to Lemma \ref{lemm:di-cost-sensitive},
Note that Equation \ref{eqn:md-ber}
implies an equivalence of risks,
and not just super-level sets.

Note also that for the MD score, the corresponding balanced cost-sensitive risk has a cost-parameter that does \emph{not} depend on the chosen $\tau$.
This proves beneficial for the purposes of learning with this measure, as we shall see in \S\ref{sec:bayes-implications}.

\subsection{The cost-sensitive fairness problem}
\label{sec:cs-fair}

The above results establish the versatility of cost-sensitive fairness measures. %allow us to make statements about the DI and MD scores.
In the sequel, we will thus focus on such measures for general cost parameters, relying on Lemmas \ref{lemm:di-cost-sensitive} and \ref{lemm:cv-ber} to relate statements about them to statements about the DI and MD scores.
For symmetry, we will also focus on cost-sensitive risks for the base problem, albeit with a possibly different cost parameter.

The above requires one tweak:
as per \S\ref{sec:anti-class}, it is desirable to work with symmetrised versions of any fairness measure.
For general balanced cost-sensitive risks,
these symmetrised versions have a simple form:
% which was manifest in the above special cases, is that it is simple to reason about their .
it is an easy calculation that for any $c \in [0, 1]$ and classifier $f$,
$\CSERRBal( 1 - f; \DSens, \bar{c} ) = 1 - \CSERRBal( f; \DSens, \bar{c} )$.
Thus, the symmetrised version is %balanced cost-sensitive risk is simply
\begin{equation}
	\label{eqn:cs-symm}
	\CSERRBal^\diamond( f; \DSens, \bar{c} ) = \CSERRBal( f; \DSens, \bar{c} ) \land (1 - \CSERRBal( f; \DSens, \bar{c} )).
\end{equation}
This risk is maximised when $\CSERRBal( f; \DSens, c ) = \half$.
A sufficient condition for this is $f \equiv \half$,
so that,
%all instances are equally likely to be classified as positive or negative.
%This says that,
in line with our intuition, a {completely random classifier} is {maximally fair}.

Equipped with this, we can formalise the special case of the general Problem \ref{prb:disc-aware} that is the focus of the sequel.

\begin{problem}%[Cost-sensitive fairness-aware learning]
%\begin{mdframed}[suppressfirstparskip=false]
\leavevmode\label{prb:cs-disc-aware}%
Given a distribution $\DFull$,
%scorer class $\SCal \subseteq \Real^{\XCal}$,
% function class
% \ifthenelse{\boolean{randClass}}{
% $\FCal \subseteq [ 0, 1 ]^{\XCal}$,
% }
% {
% $\FCal \subseteq \{ 0, 1 \}^{\XCal}$,
% }
costs $c, \bar{c}$,
and tradeoff parameter $\lambda \in \Real$,
minimise (for
$\tilde{D} \in \{ \DDP, \DEOO \}$),
%\vspace{-0.1in}
\begin{equation}
	\label{eqn:cs-fairness-lagrange}	
	\Joint( f; \D, \tilde{D}, c, \bar{c}, \lambda ) \defEq {\CSERR( f; \D, c ) - \lambda \cdot \CSERR( f; \tilde{D}, \bar{c} )}.	
	%f^* \in \argmin{f \in \FCal}{\CSERR( f; \D, c ) - \lambda \cdot \CSERR( f; \tilde{D}, \bar{c} )}.
	%\FCal^*( \D, \tilde{D}, c, \bar{c}, \lambda ) \defEq \argmin{f \in \FCal}{\CSERR( f; \D, c ) - \lambda \cdot \CSERR( f; \tilde{D}, \bar{c} )}.	
\end{equation}
\end{problem}
%\end{mdframed}

We make three comments on Problem \ref{prb:cs-disc-aware}.
First, we use the standard rather than balanced cost-sensitive risk
as it simplifies the analysis in subsequent sections;
recall from \S\ref{sec:bal-cs} that the two are related by a scaling and reparameterisation.

Second, Equation \ref{eqn:cs-fairness-lagrange} employs the standard (non-symmetrised) cost-sensitive measure, but without a positivity constraint on $\lambda$.
This is because, by Equation \ref{eqn:cs-symm},
a constraint on the symmetrised risk imposes upper and lower bounds on the cost-sensitive risk.
Then, $\lambda$ is the difference in the Lagrange multipliers for these two constraints, which need not be positive;
see Appendix \ref{sec:lagrangian} for more discussion.

Third, there is a subtlety in using Problem \ref{prb:cs-disc-aware} as a proxy for the DI.
As noted above, it is only the superlevel sets of the DI that are related to that of a cost-sensitive risk,
and not the DI itself.
This manifests in the cost parameter $\bar{c}$ itself being a user-specified parameter, unlike for the MD score where it is fixed at $\half$.
We will discuss this issue more in \S\ref{sec:bayes-implications}.

\subsection{Relation to existing work}

Lemma \ref{lemm:di-cost-sensitive} is a special case of a broader relationship between fractional performance measures and ``level-finder'' functions \citep[Theorem 1]{Parambath:2014}, \citep[Lemma 7]{Narasimhan:2015}.
%This relationship is the basis of optimising fractional performance measures using a cost-sensitive risk.
\citet{Feldman:2015} related the disparate impact to the balanced error,
but their bound depends on the distribution and classifier,
while ours uses a cost-sensitive risk with constant $\tau$;
% The goal of \citet{Feldman:2015} was to certify the extent of unfairness (as measured by disparate impact) present in a particular distribution.
% Lemma \ref{lemm:di-cost-sensitive} gives a simple way to reason about the maximal unfairness: %for a particular distribution:
% we simply need to find the best-possible cost-sensitive risk attainable.
see \S\ref{sec:experiments} and Appendix \ref{sec:di-ber}.

%%%%%%%%%
\section{Bayes-optimal fairness-aware classifiers}
\label{sec:bayes-opt}
% !TEX root = fairness-arxiv.tex

% We now compute the Bayes-optimal classifiers for the fairness-aware problem (Problem \ref{prb:disc-aware}).
% We focus on the case of cost-sensitive risks for the base classification,
% and balanced error for the fairness.
% For the latter, we consider both the demographic parity and equality of opportunity settings.
%Having cast the fairness-aware learning problem as one of minimising the difference of two risks,
Having formalised the fairness-aware learning problem,
and having further related existing fairness measures to cost-sensitive risks,
we are in a position to study the tradeoffs imposed by the problem.
We begin by asking:
what impact does the fairness requirement have on the Bayes-optimal solutions?
%As we now see, the requirement introduces an instance-dependent modification of the standard Bayes-optimal classifier.
The structure of these solutions provides insight into the problem,
and also suggests a simple practical algorithm.
In the following, we utilise the following quantities:
\begin{equation}
	\label{eqn:eta}
	\begin{alignedat}{2}	
		\eta( x )  	  &\defEq \Pr( \Y = 1 \mid \X = x ) \qquad &&\pi \defEq \Pr( \Y = 1 ) \\
		\etaDP( x )   &\defEq \Pr( \YSens = 1 \mid \X = x ) \qquad &&\piSens \defEq \Pr( \YSens = 1 ) \\
		\etaEOO( x )   &\defEq \Pr( \YSens = 1 \mid \X = x, \Y = 1 ). \qquad &&\ 
	\end{alignedat}	
\end{equation}

\subsection{Bayes-optimal cost-sensitive classifiers}

We will study the Bayes-optimal classifiers %in the constrained and unconstrained settings 
of Problem \ref{prb:cs-disc-aware},
so that both our fairness and performance measures are cost-sensitive risks.
%in light of the previous section, we will assume that our fairness measure is expressible as a suitable cost-sensitive risk.
%We will further assume that our performance measure on the base problem is also a cost-sensitive risk, albeit with different cost parameter.
%First let us assume that the classifier is not provided the sensitive feature as explicit input.
%The unconstrained version of the fairness problem (Equation \ref{eqn:fairness-lagrange}) admits a closed-form solution with interesting structure.
When working with $\DDP$
(\ie the demographic parity setting),
Equation \ref{eqn:cs-fairness-lagrange} admits an interesting minimiser.

\begin{proposition}
\label{prop:bayes-sens-unaware}
Pick any distribution $\DFull$, costs $c, \bar{c} \in [0, 1]$, and $\lambda \in \Real$.
Then,
\begin{equation}
	\label{eqn:bayes-class-sens-unaware}
	\begin{aligned}
		\argmin{f \in [0,1]^{\XCal}}{ \Joint( f; \D, \DDP, c, \bar{c}, \lambda ) } = \bigl\{ &f^* \mid (\forall x) \, s^*( x ) \neq 0 \implies f^*( x ) = \indicator{ s^*( x ) > 0} \bigl\},
	\end{aligned}
\end{equation}
%For $s^* \colon \XCal \to \Real$ given by
\begin{equation}
	\label{eqn:bayes-scorer-sens-unaware}
	(\forall x \in \XCal) \, s^*( x ) \defEq \eta( x ) - c - \lambda \cdot ( \etaDP( x ) - \bar{c} ).
\end{equation}
%where $\eta( x ) = \Pr( \Y = 1 \mid \X = x )$, $\etaDP( x ) = \Pr( \YSens = 1 \mid \X = x)$,
% Then, any randomised classifier $f^*$ satisfying
% \begin{equation}
% 	\label{eqn:bayes-class-sens-unaware}
% 	(\forall x \in \XCal) \, s^*( x ) \neq 0 \implies f^*( x ) = \indicator{ s^*( x ) > 0}
% \end{equation}
% belongs to $\argmin{f \in [0,1]^{\XCal}}{ \Joint( f; \D, \DSens, c, \bar{c}, \lambda ) }$.
\end{proposition}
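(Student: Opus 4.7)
The plan is to reduce the optimisation to pointwise minimisation over $f(x) \in [0,1]$, exploiting the fact that the randomised classifier appears linearly in both $\FNR(f;\cdot)$ and $\FPR(f;\cdot)$. Because $\Joint$ is thereby an affine functional of $f$, minimising it over the product of intervals $[0,1]^{\XCal}$ decouples into an independent choice of $f^*(x)$ for each $x$, governed solely by the sign of the coefficient of $f(x)$ in the integrand.

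First I would rewrite each cost-sensitive risk as a single integral against the common $\X$-marginal $M$ of $\D$ and $\DDP$ (both arise from $\DFull$, so this marginal is shared). Using the elementary identity $\pi \cdot \Pr(\X = x \mid \Y = 1) = \eta(x) \cdot \Pr(\X = x)$ and its analogue for $\YSens$ (which absorbs the $\pi$ and $\piSens$ factors in $\PhiClass$ into $\eta$ and $\etaDP$), one obtains
\begin{align*}
\CSERR(f; \D, c) &= \int \bigl[ (1-c)\,\eta(x)\,(1 - f(x)) + c\,(1-\eta(x))\,f(x) \bigr]\, dM(x), \\
\CSERR(f; \DDP, \bar{c}) &= \int \bigl[ (1-\bar{c})\,\etaDP(x)\,(1 - f(x)) + \bar{c}\,(1-\etaDP(x))\,f(x) \bigr]\, dM(x).
\end{align*}
Expanding and collecting in each integrand, the coefficient of $f(x)$ in $\CSERR(f; \D, c)$ simplifies to $c - \eta(x)$, and similarly to $\bar{c} - \etaDP(x)$ for the fairness term.

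Second I would form the difference and split into an $f$-independent constant and an $f$-linear part. A direct algebraic simplification yields
\begin{equation*}
\Joint(f; \D, \DDP, c, \bar{c}, \lambda) = K - \int f(x) \cdot s^*(x)\, dM(x),
\end{equation*}
for some $K$ independent of $f$, where $s^*(x) = \eta(x) - c - \lambda(\etaDP(x) - \bar{c})$ is precisely the claimed scorer. Pointwise minimisation of $-f(x)\, s^*(x)$ over $f(x) \in [0,1]$ then gives $f^*(x) = 1$ when $s^*(x) > 0$, $f^*(x) = 0$ when $s^*(x) < 0$, and any value in $[0,1]$ when $s^*(x) = 0$. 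Aggregating these pointwise choices over measurable $x$ recovers exactly the set description in the proposition; the residual freedom on $\{s^*(x) = 0\}$ is why the argmin is a set rather than a singleton.

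I do not expect a substantive obstacle. The entire argument rests on two easy ingredients: linearity of the cost-sensitive risks in $f$ (granted by randomisation, which is precisely why we work with $f \in [0,1]^\XCal$ rather than $\{0,1\}^\XCal$), and the coincidence of the $\X$-marginals of $\D$ and $\DDP$ under $\DFull$ (which lets us combine the two risks under a single integral). Measurability of $f^*$ is automatic from measurability of $\eta$ and $\etaDP$, and hence of $s^*$.
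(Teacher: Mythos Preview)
Your proposal is correct and follows essentially the same approach as the paper: both rewrite each cost-sensitive risk as a linear functional of $f$ with respect to the common $\X$-marginal (the paper packages this step as a helper lemma, you do it inline), combine them into $-\E{\X}{s^*(\X)\cdot f(\X)}$ plus a constant, and then minimise pointwise. Your explicit remark that $\D$ and $\DDP$ share the same $\X$-marginal is a point the paper uses silently.
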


Two comments are in order.
First, we observe that the optimal classifier above is in fact \emph{deterministic}, except for those $x$ for which $s^*( x )$ is exactly $0$.
In general, for a given $\lambda$, we expect this to only hold for few or no $x \in \XCal$.
When $s^*( x ) = 0$, however, then any value of $f^*( x )$ will be optimal.

Second,
assuming $s^*( x ) \neq 0$,
when $\lambda = 0$, the optimal $f^*$ is the familiar Bayes-optimal classifier for a cost-sensitive risk, $\indicator{\eta( x ) > c}$.
For $\lambda \neq 0$, however, we have an \emph{instance dependent threshold correction},
which depends on $\etaDP( x )$.
The correction increases the standard threshold of $c$ whenever
$\etaDP( x ) > \bar{c}$;
intuitively, when we are confident in the sensitive feature being active for an instance,
we are more conservative in classifying the instance as positive.

We now consider a fairness measure that reflects the equality of opportunity notion,
and thus works with $\DEOO$ rather than $\DDP$.
%The only difference in measure is that there is an additional conditioning on $\Y = 1$.
This results in a slightly different Bayes-optimal classifier.

\begin{proposition}
\label{prop:bayes-sens-unaware-eoo}
Pick any $\DFull$, costs $c, \bar{c} \in [0, 1]$, and $\lambda \in \Real$.
Then,
\begin{equation}
	\label{eqn:bayes-class-sens-unaware-eoo}
	\begin{aligned}
		\argmin{f \in [0,1]^{\XCal}}{ \Joint( f; \D, \DEOO, c, \bar{c}, \lambda ) } = \bigl\{ &f^* \mid (\forall x) \, s^*( x ) \neq 0 \implies f^*( x ) = \indicator{ s^*( x ) > 0} \bigl\},
	\end{aligned}
\end{equation}
%For $s^* \colon \XCal \to \Real$ be given by
$$ (\forall x \in \XCal) \, s^*( x ) \defEq \left( 1 - \lambda \cdot \pi^{-1} \cdot ( \etaEOO( \X ) - \bar{c} ) \right) \cdot \eta( x ) - c. $$
\end{proposition}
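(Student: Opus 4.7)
The plan is to rewrite $\Joint( f; \D, \DEOO, c, \bar{c}, \lambda )$ as a single expectation over the marginal $\Pr( \X )$, of the form $\E{\X}{ A( \X ) + f( \X ) \cdot B( \X ) }$, where $A, B$ are fixed functions of $x$ and $f \in [0, 1]^{\XCal}$ is unconstrained. The pointwise minimizer is then immediate: $f^*( x ) = \indicator{ B( x ) < 0 }$ whenever $B( x ) \neq 0$, with $f^*( x )$ arbitrary at points where $B( x ) = 0$. The claim follows once we identify $B( x ) = -s^*( x )$.

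The performance risk is handled exactly as in Proposition \ref{prop:bayes-sens-unaware}: conditioning on $\Y$ and collecting terms yields
$\CSERR( f; \D, c ) = \E{\X}{ ( 1 - c ) \eta( \X ) + f( \X ) \cdot ( c - \eta( \X ) ) }$,
contributing $c - \eta( x )$ to $B( x )$.

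The new ingredient is the fairness term, where $\DEOO$ is conditioned on $\Y = 1$. The analogous cost-sensitive expansion, with $\etaEOO$ playing the role of $\eta$, gives
$\CSERR( f; \DEOO, \bar{c} ) = \E{\X \mid \Y = 1}{ ( 1 - \bar{c} ) \etaEOO( \X ) + f( \X ) \cdot ( \bar{c} - \etaEOO( \X ) ) }$.
To express this over the common marginal I would apply the change of measure $\Pr( \X = x \mid \Y = 1 ) = \eta( x ) \Pr( \X = x ) / \pi$, which produces a multiplicative factor $\pi^{-1} \eta( \X )$ inside the expectation. This is precisely where the $\pi^{-1}$ and the extra $\eta( x )$ in $s^*$ originate, and it is the one substantive departure from the proof of Proposition \ref{prop:bayes-sens-unaware}.

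Combining the two pieces, the $f$-dependent coefficient becomes $B( x ) = ( c - \eta( x ) ) - \lambda \pi^{-1} \eta( x ) ( \bar{c} - \etaEOO( x ) )$, which rearranges to $c - \eta( x ) \cdot ( 1 - \lambda \pi^{-1} ( \etaEOO( x ) - \bar{c} ) ) = -s^*( x )$. Pointwise minimization then gives the stated optimizer. The only real subtlety is the change-of-measure step for the fairness term; beyond that, the argument is careful bookkeeping of constants and signs, with no deeper obstacle since $f$ ranges freely over $[0, 1]^{\XCal}$ and the integrand is linear in $f( x )$ at every point.
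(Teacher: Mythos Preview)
Your proposal is correct and follows essentially the same route as the paper's proof: both express each cost-sensitive risk linearly in $f$, apply the change of measure $\Pr(\X \mid \Y = 1) = \pi^{-1}\eta(\X)\Pr(\X)$ to rewrite the $\DEOO$ term over the common marginal, and then minimize the resulting integrand pointwise in $f(x)$. The algebra and the identification $B(x) = -s^*(x)$ match the paper exactly.
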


%Here, the optimal threshold continues to be instance-dependent.
This result is of the same flavour as Proposition \ref{prop:bayes-sens-unaware}, with two important differences.
First, we %need to know $\Pr( \YSens = 1 \mid \X = x, \Y = 1 )$, rather than merely $\Pr( \YSens = 1 \mid \X = x )$.
%This means that we
only need to know the probability of the sensitive feature being active for the positive instances.
%From the point of view of constructing a plugin estimator, this means we have less data to train our model on, which could adversely affect the accuracy of the predictions and thus the threshold.
Second, the form of the threshold correction is no longer additive,
but multiplicative.

\subsection{Special case: using the sensitive feature as input}
\label{sec:sens-available}

The previous section studied a general $\X$, where the sensitive feature was not necessarily provided as input to the classifier.
The form of the optimal classifier simplifies when we allow the sensitive feature as an input.
We have the following analogue of Proposition \ref{prop:bayes-sens-unaware} when working with $\DDP$.
%for a demographic parity fairness measure.
%The following is a simple consequence of Proposition \ref{prop:bayes-sens-unaware}.

\begin{corollary}
\label{corr:bayes-sens-aware}
Pick any distribution $\DFull$ where $\D$ includes the sensitive feature, costs $c, \bar{c} \in [0, 1]$, and $\lambda \in \Real$.
For $\eta( x, \bar{y} ) = \Pr( \Y = 1 \mid \X = x, \YSens = \bar{y} )$, %$s^* \colon \XCal \times \Labels \to \Real$ given by
	\begin{align*}
		\argmin{f \in [0,1]^{\XCal \times \Labels}}{ \Joint( f; \D, \DDP, c, \bar{c}, \lambda ) } = \bigl\{ &f^* \mid (\forall x, \bar{y}) \, s^*( x, \bar{y} ) \neq 0 \implies f^*( x, \bar{y} ) = \indicator{ s^*( x, \bar{y} ) > 0} \bigl\},
	\end{align*}
\begin{align*}
 (\forall x \in \XCal) \, s^*( x, 0 ) &\defEq \eta( x, 0 ) - c + \lambda \cdot \bar{c} \\
 (\forall x \in \XCal) \, s^*( x, 1 ) &\defEq \eta( x, 1 ) - c - \lambda \cdot (1 - \bar{c}).
\end{align*}
\end{corollary}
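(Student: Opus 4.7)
The plan is to deduce the corollary as a direct specialization of Proposition \ref{prop:bayes-sens-unaware}, by treating the augmented instance $(x, \bar{y}) \in \XCal \times \Labels$ as the ``new instance''. Under this reinterpretation, the classifier $f \colon \XCal \times \Labels \to [0,1]$ is just a standard randomised classifier on the enlarged instance space, and both $\D$ and $\DDP$ are distributions on that enlarged space paired with $\Y$ or $\YSens$ respectively. So Proposition \ref{prop:bayes-sens-unaware} applies verbatim once we identify the relevant conditional probabilities.

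First I would compute the two class-probability functions appearing in the scorer $s^*$ of Equation \ref{eqn:bayes-scorer-sens-unaware}, but now with $(x, \bar{y})$ playing the role of the instance. The target class-probability is $\eta(x, \bar{y}) = \Pr(\Y = 1 \mid \X = x, \YSens = \bar{y})$, which matches the definition given in the statement. The key simplification is in the sensitive-feature class-probability: since $\YSens$ is part of the input, we have
\[
\etaDP(x, \bar{y}) = \Pr(\YSens = 1 \mid \X = x, \YSens = \bar{y}) = \indicator{\bar{y} = 1}.
\]
This collapse is what makes the corollary cleaner than the proposition.

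Next I would substitute these into Equation \ref{eqn:bayes-scorer-sens-unaware} and split into the two cases $\bar{y} \in \{0, 1\}$. For $\bar{y} = 0$, $\etaDP(x, 0) - \bar{c} = -\bar{c}$, yielding $s^*(x, 0) = \eta(x, 0) - c + \lambda \bar{c}$. For $\bar{y} = 1$, $\etaDP(x, 1) - \bar{c} = 1 - \bar{c}$, yielding $s^*(x, 1) = \eta(x, 1) - c - \lambda(1 - \bar{c})$. These are precisely the formulas in the corollary. The structure of the optimal solution (deterministic except possibly at $s^* = 0$) follows from the analogous statement in Proposition \ref{prop:bayes-sens-unaware}.

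The only mild subtlety, and the one point I would take care to justify, is that the minimisation over $f \colon \XCal \times \Labels \to [0,1]$ in Problem \ref{prb:cs-disc-aware} coincides with the minimisation that Proposition \ref{prop:bayes-sens-unaware} provides on the enlarged instance space. This amounts to checking that the false-positive and false-negative rates defining $\CSERR(f; \D, c)$ and $\CSERR(f; \DDP, \bar{c})$ depend on $f$ only through its values on $(x, \bar{y})$, and that no additional measurability or coupling constraint across the two marginals is hidden in $\DFull$; once this is in place, both risks decompose pointwise over $(x, \bar{y})$ exactly as in the proof of Proposition \ref{prop:bayes-sens-unaware}. This is essentially routine, so I do not expect any substantive obstacle.
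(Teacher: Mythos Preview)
Your proposal is correct and follows essentially the same approach as the paper: apply Proposition \ref{prop:bayes-sens-unaware} to the enlarged instance $(x,\bar{y})$, observe that $\etaDP(x,\bar{y}) = \indicator{\bar{y}=1}$, and read off the two cases. The paper's proof is in fact slightly terser than yours, omitting the explicit case split and the discussion of the mild subtlety you flag.
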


Here, instead of an instance-dependent threshold,
we simply apply different (constant) thresholds to the class-probabilities for each value of the sensitive feature.
This is a simple consequence of Proposition \ref{prop:bayes-sens-unaware}, as we can simply consider one of the features of $\X$ to be perfectly predictive of the sensitive feature, which makes $\etaDP( x, \bar{y} ) \in \{ 0, 1 \}$.
%Here, the optimal classifier is a thresholding of $\Pr( \Y \mid \X, \YSens )$, where the threshold is dependent on $\YSens$, but not the instance.

%A similar result holds for the equality of opportunity setting,
An analogous special case holds for Proposition \ref{prop:bayes-sens-unaware-eoo},
and is deferred to Corollary \ref{corr:bayes-sens-aware-eoo} of the Appendix.

\subsection{A plugin approach to fairness-aware learning}
\label{sec:bayes-implications}

% The closed-form for the Bayes-optimal classifiers
% %for Problem \ref{prb:disc-aware}
% will prove useful in theoretically studying the tradeoff imposed by a fairness constraint (see \S\ref{sec:unhinged-discrimination}).
% However,
The Bayes-optimal classifiers derived above
rely on thresholding the class-probabilities $\eta$ and $\etaSens$.
Thus, analogously to the Bayes-optimal classifiers for standard statistical risks,
this motivates a simple plugin estimation approach to 
fairness-aware learning problem:
%also %have an important practical usage:
%suggest a simple method for achieving fairness-aware learning:
estimate $\eta, \etaSens$ separately, \eg by logistic regression,
and then combine them as per Equations \ref{eqn:bayes-class-sens-unaware}, \ref{eqn:bayes-scorer-sens-unaware} to construct a classifier. %use a plugin estimator to this classifier.
When the sensitive feature is available, then all that is needed is a single model for $\eta( x, \bar{y} )$, which is thresholded separately for each of the sensitive feature values.

We make three comments on the proposed approach.
First, one must of course tune $\lambda$ to achieve a desirable tradeoff between accuracy and fairness.
This fortunately does \emph{not} require retraining any model,
as we can simply employ the learned $\eta, \etaSens$ and appropriately change how they are thresholded to form a classifier.
One can tune $\lambda$,
%one can set as objective  %is up to the practitioner;
%in particular, one may choose
so as to reach some desired operating point on the accuracy-fairness curve.

Second, if we find $s^*( x ) = 0$ for some $x \in \XCal$, \emph{any} prediction for that $x$ optimises the objective of Equation \ref{eqn:cs-fairness-lagrange}; however, we may seek to tune this prediction to favour \eg maximal performance on the original problem.

Third, we reiterate that for the disparate impact, the cost parameter $\bar{c}$ must be tuned as well,
but that as per $\lambda$,
%Fortunately, as with tuning $\lambda$,
this does not require retraining any model.
%as one can leverage the same fixed $\eta, \etaSens$.

%
%\subsection{Strengths of the plugin estimator approach}

%The above plugin estimator approach has several salient features.
%First, we can trivially compute the entire range of solutions as $\lambda$ is varied, without having to re-run any optimisation.
%Second, it allows us to leverage existing tools for solving logistic regression, rather than designing a bespoke method for the problem.

%
\subsection{Relation to existing work}
\label{sec:related-bayes}

Computing the Bayes-optimal classifiers as above is not without precedent:
\citet{Hardt:2016,Davies:2017} considered the same question, but in the case of \emph{exact} fairness measures.
We are not aware of prior work on computing the optimal classifiers for \emph{approximate} fairness measures.
While the results have a similar flavour to the exact fairness case, explicating them is important to understand the full tradeoff between accuracy and fairness (\S\ref{sec:frontier}),
and also suggests a simple algorithm.

\citet{Hardt:2016} proposed to construct a fairness-aware classifier in the equality of opportunity setting by post-processing the results of a classifier trained on the original problem.
They considered a slightly different constrained version of the problem,
where one forces the solution to have \emph{perfect} rather than approximate fairness.
Our Propositions \ref{prop:bayes-sens-unaware} and \ref{prop:bayes-sens-unaware-eoo} provide an explicit form for the correction when approximate fairness is desired,
as well as when the sensitive feature is available or not during training.
Recently, Woodworth et al.\ \citet{Woodworth:2017}
established limits on the post-processing approach of \citet{Hardt:2016}; studying this in our context of approximate fairness measures would be of interest.

\citet{Calders:2010} proposed to modify the output of na\"{i}ve Bayes so as to minimise the MD score.
However, their approaches do have any theoretical guarantees.

Our plugin learning procedure %leverages the mature workhorse of logistic regression, and
merely requires estimating class-probabilities,
which for logistic regression is a convex problem.
This avoids optimisation challenges facing existing approaches. %unlike many traditional approaches, we do not face the problem of optimising a non-convex objective.
For example, one way to approximately solve Equation \ref{eqn:cs-fairness-lagrange} is to pick convex \emph{surrogate losses} $\ell, \bar{\ell} \colon \Labels \times \Real \to \Real_+$,
and find %a \emph{scorer} %$\scorer$
\citep{Zafar:2016,Zafar:2017}
$$ s^* \in \argmin{\scorer}{ \CSERR( s; \D, c, \ell ) - \lambda \cdot \CSERR( s; \DSens, \bar{c}, \bar{\ell} ) } $$
for the \emph{surrogate cost-sensitive risk} \citep{Scott:2012},
\begin{equation}
	\label{eqn:surrogate-cs}
	\CSERR( s; \D, c, \ell ) \defEq \E{(\X, \Y) \sim \D}{ C_{\Y} \cdot \ell( \Y, s( \X ) ) }
\end{equation}
for $C_1 = 1 - c, C_0 = c$.
%\E{\X \mid \Y = 1}{ \ell( 1, s( \X ) ) } + \E{\X \mid \Y = 0}{ \ell( 0, s( \X ) ) }. $$
Note however that for nonlinear $\bar{\ell}$, this objective will be non-convex in $s$.
% While Lemma \ref{lemm:cov-unhinged} showed that the COV score of \citet{Zafar:2016} corresponds to a linear $\bar{\ell}$, for which the objective remains convex,
Even if one manages to overcome this challenge, %there is no guarantee on the  error induced by working with a surrogate to the fairness measure;
%in particular,
guaranteeing large surrogate fairness does \emph{not} imply large fairness of the underlying classifier, as the former is an \emph{upper bound} to the latter.
Similar problems plague related approaches based on regularisation \citep{Kamishima:2012,Fukuchi:2013}.

%%%%%%%%%
\section{Quantifying the accuracy-fairness tradeoff}
\label{sec:unhinged-discrimination}
\label{sec:frontier}
% !TEX root = fairness-icml17.tex

%We now study the degradation of performance on our base problem caused by a fairness constraint.
%We show that one can quantify the tradeoff between performance and fairness
We now study the tradeoff between performance on our base problem and fairness,
and show it is quantifiable
by a measure of \emph{alignment} of the target and sensitive variables.
%to understand what fundamental limits a fairness constraint imposes on a classification problem.

%
%\subsection{Basic shape of the frontier}
\subsection{The fairness frontier}
\label{sec:constrained}

Our definition of the fairness-aware learning problem (Problem \ref{prb:disc-aware}) was in terms of a linear tradeoff between the performance and fairness measures.
To quantify the tradeoff\footnote{We stress that the tradeoff measured here is one inherent to the \emph{problem}, rather than one owing to the \emph{technique} one uses.} imposed by a fairness constraint,
we will study the following explicitly constrained problem:
%fix some $\DFull, \Perf, \Fairness$, and
for $\tau \in [ 0, 1 ]$,
let
\begin{align}
	f^*_\tau &\in \argmin{\ifthenelse{\boolean{randClass}}{\randClassifier}{\classifier}}{ \Perf( f; \D ) \colon \FairnessSymm( f; \DSens ) \geq \tau } \label{eqn:fairness-constrained} \\
	F( \tau ) &= \Perf( f^*_\tau; \D ) - \Perf( f^*_0; \D ). \label{eqn:frontier}
\end{align}
The function $F \colon [ 0, 1 ] \to \Real_+$
represents the \emph{fairness frontier}:
for a given lower bound on (symmetrised) fairness,
it measures the
%\emph{regret} we can achieve on our original problem,
%\ie the best
\emph{best excess risk} over the solution \emph{without} a fairness constraint.
Evidently,
$F(\cdot)$ is non-decreasing
since
the constraints on $\Fairness$ are nested as $\tau$ increases;
i.e., demanding more fairness can never improve performance. %on our original task.
%The curve thus summarises the tradeoff between performance and fairness.
%specified by the distribution $\DFull$ and performance measures $\Perf, \Fairness$.
%The behaviour of this function is of interest, as is the subject of this section.

As per the previous section,
the case of cost-sensitive performance and fairness measures is of interest.
Here, the objectives in Equations \ref{eqn:fairness-lagrange} and \ref{eqn:fairness-constrained} are related by the Lagrangian principle;
see Appendix \ref{sec:lagrangian} for details.
Further, when $\XCal$ is finite, Equation \ref{eqn:fairness-constrained} reduces to a linear program.
%This is owing to the fact that cost-sensitive risks are linear for randomised classifiers.
(For infinite $\XCal$ we obtain a semi-infinite linear program \citep{Goberna:1998}, whose duality is subtler to analyse.)

\begin{lemma}
\label{lemm:constrained-opt}
For finite $\XCal$,
pick any distribution $\DFull$, costs $c, \bar{c} \in [0, 1]$, and $\tau \in \Real_+$.
Then, the problem %the problem
$$ \min_{\randClassifier} \CSERR( f; \D, c ) \colon \CSERR^\diamond( f; \DSens, \bar{c} ) \geq \tau $$
is expressible as a linear program.
\end{lemma}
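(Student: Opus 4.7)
The plan is to explicitly write the problem as a linear program by exploiting the finiteness of $\XCal$ and the linearity of the class-conditional rates in a randomised classifier $f$.

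First I would parametrise. Since $\XCal$ is finite, identify a randomised classifier with the vector $\mathbf{f} = (f(x))_{x \in \XCal} \in [0,1]^{|\XCal|}$; the box constraints $0 \leq f(x) \leq 1$ are $2 |\XCal|$ linear inequalities. Next, unfold the definitions in Equation \ref{eqn:fpr-fnr-rand}: both $\FNR(f; \D) = \sum_x \Pr(\X = x \mid \Y = 1)(1 - f(x))$ and $\FPR(f; \D) = \sum_x \Pr(\X = x \mid \Y = 0) f(x)$ are affine functionals of $\mathbf{f}$, and hence so is the cost-sensitive objective $\CSERR(f; \D, c) = \pi(1-c)\FNR(f;\D) + (1-\pi)c\FPR(f;\D)$. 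The same argument applied to $\DSens$ shows that $\CSERR(f; \DSens, \bar{c})$ is affine in $\mathbf{f}$.

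Second, I would handle the symmetrised constraint. Observe that $\FNR(1 - f; \DSens) = 1 - \FNR(f; \DSens)$ and likewise $\FPR(1 - f; \DSens) = 1 - \FPR(f; \DSens)$, so $\CSERR(1 - f; \DSens, \bar{c}) = K - \CSERR(f; \DSens, \bar{c})$ for the constant $K \defEq \piSens(1-\bar{c}) + (1-\piSens)\bar{c}$. Hence
\begin{equation*}
\CSERR^\diamond(f; \DSens, \bar{c}) = \CSERR(f;\DSens,\bar{c}) \land \bigl(K - \CSERR(f;\DSens,\bar{c})\bigr),
\end{equation*}
and the constraint $\CSERR^\diamond(f; \DSens, \bar{c}) \geq \tau$ is equivalent to the pair of linear inequalities
\begin{equation*}
\CSERR(f; \DSens, \bar{c}) \geq \tau \quad \text{and} \quad \CSERR(f; \DSens, \bar{c}) \leq K - \tau,
\end{equation*}
using the elementary fact that $\min(a,b) \geq \tau$ iff $a \geq \tau$ and $b \geq \tau$.

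Assembling these pieces, the problem becomes: minimise an affine functional of $\mathbf{f} \in \Real^{|\XCal|}$ subject to $2|\XCal| + 2$ linear inequalities. This is a linear program in the standard sense. There is no real obstacle here; the only conceptual point worth stressing is the decomposition of the non-convex symmetrised fairness constraint into two ordinary linear constraints, which relies on $\CSERR(1-f;\DSens,\bar{c})$ being affine in $f$ (not merely convex or concave). I would also note in passing that for infinite $\XCal$, the same linearity properties persist but the feasible set lives in $[0,1]^{\XCal}$, giving the semi-infinite linear program alluded to in the surrounding text.
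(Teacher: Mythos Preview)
Your proposal is correct and follows essentially the same route as the paper: both arguments use the affinity of cost-sensitive risks in $f$ (the paper via its helper Lemma on $\CSERR$ in terms of $\eta$, you via the $\FNR/\FPR$ definitions directly), and both split the symmetrised constraint $\CSERR^\diamond \geq \tau$ into the pair of linear inequalities $\CSERR \geq \tau$ and $\CSERR \leq K - \tau$ before reading off the LP. The only cosmetic difference is that you compute the constant $K = \piSens(1-\bar{c}) + (1-\piSens)\bar{c}$ explicitly, whereas the paper absorbs constants into a shifted $\tau$.
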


%For cost-sensitive risks, we can make a stronger statement.
%The following simple property of the frontier is evident.

% \begin{lemma}
% \label{lemm:monotone-frontier}
% Pick any distribution $\DFull$, performance measure $\Perf( \cdot; \D )$, and fairness measure $\Fairness( \cdot; \DFull )$.
% Then, the function $F( \cdot )$ of Equation \ref{eqn:frontier} is monotone non-decreasing.
% \end{lemma}

%In general, there is not much additional structure that can be deduced about $F$.
%In particular, $F$ might be non-convex.
%However, if we allow for randomisation, then the frontier is convex:
%intuitively, this is because  
%However, we can make a strong statement if we work with \emph{randomised} classifiers:
%here, the frontier is guaranteed to be convex.
%(Using a randomised classifier amounts to randomly choosing between two fixed classifiers.)
%For the case of linear $\Perf$ and $\Fairness$,
%we can make a stronger statement about $F$.
% Recall that for randomised classifiers
% combined with cost-sensitive performance measures
% we have $\Perf$ linear,
% and similarly $\Fairness$ linear for most fairness measures introduced previously.

Exploiting this linearity of the objective and constraints,
we can further establish that the frontier is a convex curve.
We caution that this result %Lemma \ref{lemm:convex-frontier}
requires equivalence of $\Fairness$ and a cost-sensitive risk, and so is not applicable for the DI factor.

\begin{lemma}
\label{lemm:convex-frontier}
Pick any distribution $\DFull$, and cost-sensitive performance and fairness measures. % $\Perf( \cdot; \D ), \Fairness( \cdot; \DSens )$.
Then, the function $F \colon \Real_+ \to \Real_+$ of Equation \ref{eqn:frontier} is convex.
\end{lemma}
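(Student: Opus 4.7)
The plan is to exploit the fact that, for cost-sensitive performance and fairness measures, both the objective and the (symmetrised) constraint are well-behaved as functions of the randomised classifier $f \in [0,1]^{\XCal}$: the cost-sensitive risk $\CSERR(f; \D, c)$ is \emph{affine} in $f$ (it is a weighted sum of class-conditional expectations of $f$), and by Equation \ref{eqn:cs-symm} the symmetrised fairness measure $\CSERRBal^\diamond(f; \DSens, \bar{c})$ is a minimum of two affine functions of $f$, hence \emph{concave} in $f$. Consequently, the feasible set $\{f \in [0,1]^{\XCal} : \CSERRBal^\diamond(f; \DSens, \bar{c}) \geq \tau\}$ is convex for each $\tau$.

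Given this structure, I would prove convexity of $F$ by the standard ``convex combination of optimisers is feasible'' argument. Fix $\tau_1, \tau_2 \in \Real_+$ and $\alpha \in [0,1]$, and set $\tau = \alpha \tau_1 + (1-\alpha) \tau_2$. Let $f^*_{\tau_1}, f^*_{\tau_2}$ attain the constrained optimum in Equation \ref{eqn:fairness-constrained} at levels $\tau_1, \tau_2$ respectively, and consider the randomised classifier $f = \alpha f^*_{\tau_1} + (1-\alpha) f^*_{\tau_2} \in [0,1]^{\XCal}$. By concavity of $\CSERRBal^\diamond(\cdot; \DSens, \bar{c})$,
\begin{equation*}
\CSERRBal^\diamond(f; \DSens, \bar{c}) \;\geq\; \alpha \CSERRBal^\diamond(f^*_{\tau_1}; \DSens, \bar{c}) + (1-\alpha)\CSERRBal^\diamond(f^*_{\tau_2}; \DSens, \bar{c}) \;\geq\; \alpha \tau_1 + (1-\alpha) \tau_2 \;=\; \tau,
\end{equation*}
so $f$ is feasible for the problem at level $\tau$. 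By affinity of $\Perf = \CSERR(\cdot; \D, c)$,
\begin{equation*}
\Perf(f_\tau^*; \D) \;\leq\; \Perf(f; \D) \;=\; \alpha \Perf(f^*_{\tau_1}; \D) + (1-\alpha) \Perf(f^*_{\tau_2}; \D).
\end{equation*}
Subtracting $\Perf(f^*_0; \D)$ from both sides gives $F(\tau) \leq \alpha F(\tau_1) + (1-\alpha) F(\tau_2)$, which is the convexity claim.

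I do not anticipate a serious obstacle here, since affinity of $\CSERR$ and concavity of its symmetrisation follow directly from Equations \ref{eqn:balanced-cs} and \ref{eqn:cs-symm}. The only mild subtlety is the existence of minimisers $f^*_{\tau_i}$: for finite $\XCal$ this is ensured by Lemma \ref{lemm:constrained-opt} since the problem is a linear program with a compact feasible set; for infinite $\XCal$ one can either invoke compactness in an appropriate weak topology or, more simply, carry out the argument with $\varepsilon$-optimisers (pick $f_i$ with $\Perf(f_i; \D) \leq \Perf(f^*_{\tau_i}; \D) + \varepsilon$ and let $\varepsilon \to 0$). The same argument goes through unchanged, so the convexity of $F$ holds in both regimes.
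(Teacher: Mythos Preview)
Your proposal is correct and follows essentially the same approach as the paper: form the convex combination $f = \alpha f^*_{\tau_1} + (1-\alpha) f^*_{\tau_2}$, use concavity of the (symmetrised) fairness measure to establish feasibility at level $\alpha\tau_1 + (1-\alpha)\tau_2$, and then use convexity/affinity of $\Perf$ to conclude. If anything, your treatment is slightly more careful than the paper's, since you explicitly argue concavity of the symmetrised measure $\CSERRBal^\diamond$ as a minimum of two affine functions (the paper's proof writes $\Fairness$ and appeals to linearity via Lemma~\ref{lemm:cs-linear}, glossing over the symmetrisation), and you also address existence of minimisers via an $\varepsilon$-optimiser argument, which the paper omits.
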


% For a general performance measure, the frontier may be non-convex. %(although we do not have a simple example where this is so).
% However, a trivial correction may be applied to repair non-convex regions:
% we can simply \emph{randomise} between two operating points on the frontier to obtain a linear curve.
% Specifically, suppose $f^*_\tau, f^*_{\tau'}$ are scorers attaining fairness $\tau, \tau'$ and disutility $F( \tau ), F( \tau' )$.
% Then a randomised scorer that picks between these uniformly at random would by definition attain fairness $(\tau + \tau')/2$, and disutility $(F(\tau) + F(\tau'))/2$.

While Lemmas \ref{lemm:constrained-opt} and \ref{lemm:convex-frontier} are useful for computing the frontier,
they do not specify how the curve's behaviour as $\tau$ is varied relates to properties of $\DFull$.
We now study this issue.

% The linear program in Lemma \ref{lemm:constrained-opt} may be set-up easily for a given problem.
% There is however a subtlety:
% in practice, one typically works with a parametric family $\FCal \subseteq [0, 1]^{\XCal}$.
% Here, the resulting optimisation may be non-convex, \eg if one uses randomised classifiers derived from scorers through some nonlinear transform $f = \Psi \circ s$.

%
%\subsection{Basic structure of the frontier}
\subsection{The frontier and class-probability alignment}

%
%\subsection{The cost-sensitive frontier}

% provided we allow for \emph{randomised} classifiers.
% This is akin to the use of randomisation to correct local concavities in ROC curves \citep{Provost:2001}.
%The convexity of the frontier is interesting, %: %in light of our use of randomised classifiers;
%given two attainable points on the fairness frontier, one can randomised to attain values on the segment connecting them.
% Convexity alone does not fully elucidate the nature of the frontier, however;
% %one would like to more precisely know its shape.
% %in particular, can we quantify its precise form analytically, and understand how it is affected by properties of $\DFull$?
% we now give a more precise quantification.
%but still somewhat generic.
To obtain our first distribution-dependent statement about the curve,
observe that $F( \tau ) = 0$
for any $\tau \in [0, \tau^*]$,
where $\tau^* = \FairnessSymm( f^*_0; \DSens )$.
This simply means that there is no penalty from a fairness constraint when we consider the fairness attained by the Bayes-optimal classifier for the original problem.
Intuitively, when $\D$ and $\DSens$ are \emph{disaligned}, we expect this $\tau^*$ to be large, so that there is no effect from virtually any fairness constraint.

We can formalise this notion of disalignment.
Recall from Equation \ref{eqn:eta} that $\eta, \etaSens$
are the class-probabilities of the target and sensitive features respectively.
%$$ \Fairness( f; \DSens ) = \PhiFair( \FPRSens, \FNRSens, \bar{\pi} ). $$
With a cost-sensitive risk for $\Perf$, the Bayes-optimal classifier is $f^*( x ) = \indicator{ \eta( x ) > c }$.
The incurred errors for predicting $\YSens$ with $f^*$ are then
\begin{equation}
	\label{eqn:regret-general}
\begin{aligned}	
	\FPR( f^*; \DSens ) &= \frac{1}{1 - \piSens} \cdot \E{\X}{ (1 - \etaSens( \X )) \cdot \indicator{ \eta( \X ) > c } } \\
	\FNR( f^*; \DSens ) &= \frac{1}{\piSens} \cdot \E{\X}{ \etaSens( \X ) \cdot \indicator{ \eta( \X ) < c } }.
\end{aligned}	
\end{equation}
Both these terms measure a form of disalignment of $\eta$ and $\etaSens$,
specifically looking at the concentration of the latter in regions
where the former is above or below the threshold $c$.
If $\Fairness$ is parametrised by $\PhiFair$,
we then find
$$ \tau^* = \PhiFairSymm( \FPRSens, \FNRSens, \piSens ). $$
For generic $\PhiFair$, we can plug in Equation \ref{eqn:regret-general} to get an explicit expression for $\tau^*$.
In the case of cost-sensitive $\PhiFair$,
this expression involves a concrete measure of disalignment.

\begin{proposition}
\label{prop:frontier-cutoff}
Pick any distribution $\DFull$, and cost-sensitive performance and fairness measures
with cost parameters $c, \bar{c}$.
Then, $F( \tau ) = 0$ for any $\tau \in [0, \tau^*]$, where
%\resizebox{0.99\linewidth}{!}{
%\begin{minipage}{\linewidth}
\begin{align*}
\tau^* &= \min( \Delta_{c, \bar{c}}( \etaSens, \eta ), \Delta_{-c, \bar{c}}( \etaSens, -\eta ) ) \\
\Delta_{c, \bar{c}}( \etaSens, \eta ) &= \E{\X}{ B_{c, \bar{c}}( \etaSens( \X ), \eta( \X ) ) } - \mathbb{I}_\varphi( \bar{P}_1, \bar{P}_0 ),
\end{align*}
%\end{minipage}
%}
where
%$\eta( x ) = \Pr( \Y = 1 \mid \X = x )$, $\etaSens( x ) = \Pr( \YSens = 1 \mid \X = x )$,
$\bar{P}_y = \Pr( \X \mid \YSens = y )$,
$\mathbb{I}_\varphi( \cdot, \cdot )$ denotes an $f$-divergence,
\begin{align}
	\varphi( t ) &\defEq -((1 - \bar{c}) \cdot \piSens \cdot t) \land (\bar{c} \cdot (1 - \piSens)) \nonumber \\
	\label{eqn:bregman-v1}
	B_{c, \bar{c}}( \etaSens, \eta ) &\defEq | \etaSens - \bar{c} | \cdot \indicator{ (\etaSens - \bar{c}) \cdot (\eta - c) < 0 }.	
\end{align}
\end{proposition}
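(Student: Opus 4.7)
The plan is to identify $\tau^*$ as the symmetrised fairness of the unconstrained Bayes-optimal classifier, and then to rewrite this quantity via a cost-sensitive regret decomposition. First, $F(\tau) = 0$ precisely when the unconstrained optimiser $f^*_0$ is feasible for Equation \ref{eqn:fairness-constrained}. Since the performance measure is cost-sensitive with parameter $c$, Example \ref{example:cs} gives $f^*_0(x) = \indicator{\eta(x) > c}$. Feasibility at level $\tau$ is $\FairnessSymm(f^*_0; \DSens) \geq \tau$, so the largest admissible $\tau$ is
\[
\tau^* \;=\; \FairnessSymm(f^*_0; \DSens) \;=\; \CSERR(f^*_0; \DSens, \bar{c}) \wedge \CSERR(1 - f^*_0; \DSens, \bar{c}).
\]
It remains to show that the two terms equal $\Delta_{c,\bar{c}}(\etaSens, \eta)$ and $\Delta_{-c,\bar{c}}(\etaSens, -\eta)$ respectively.

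Next I would expand
\[
\CSERR(f; \DSens, \bar{c}) = \E{\X}{(1-\bar{c})\etaSens(\X)(1 - f(\X)) + \bar{c}(1 - \etaSens(\X)) f(\X)}
\]
and decompose the risk of $f^*_0$ as the Bayes-optimal risk for predicting $\YSens$ plus a pointwise regret. The Bayes classifier for $\YSens$ thresholds $\etaSens$ at $\bar{c}$ and has risk $\CSERR^*(\DSens, \bar{c}) = \E{\X}{\min((1-\bar{c})\etaSens(\X), \bar{c}(1-\etaSens(\X)))}$. A four-case analysis over $\mathrm{sign}(\eta - c)$ and $\mathrm{sign}(\etaSens - \bar{c})$ shows the pointwise regret vanishes on agreement regions and equals exactly $|\etaSens(\X) - \bar{c}|$ on disagreement regions, giving
\[
\CSERR(f^*_0; \DSens, \bar{c}) - \CSERR^*(\DSens, \bar{c}) \;=\; \E{\X}{B_{c,\bar{c}}(\etaSens(\X), \eta(\X))}.
\]
Applying the same calculation with $(c, \eta)$ replaced by $(-c, -\eta)$, and using $1 - f^*_0 = \indicator{-\eta > -c}$, yields the analogous identity for the anti-classifier.

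Finally, I would identify the Bayes risk with the divergence term. Using $\piSens \bar{P}_1(x) = p(x)\etaSens(x)$ and $(1-\piSens)\bar{P}_0(x) = p(x)(1 - \etaSens(x))$ for the marginal $p$ on $\X$, the defining integral of $\mathbb{I}_\varphi$ converts to an expectation under $\X$ and collapses to $-\CSERR^*(\DSens, \bar{c})$, so that $\CSERR(f^*_0;\DSens,\bar{c}) = \E{\X}{B_{c,\bar{c}}(\etaSens,\eta)} - \mathbb{I}_\varphi(\bar{P}_1,\bar{P}_0) = \Delta_{c,\bar{c}}(\etaSens,\eta)$, and likewise for the anti-classifier. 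The main technical obstacle is the regret case-analysis: one must carefully check that on each disagreement region the pointwise loss difference collapses to just $|\etaSens - \bar{c}|$, independently of $|\eta - c|$, and that the boundary sets $\{\eta = c\}$ and $\{\etaSens = \bar{c}\}$ contribute zero so that the strict inequality inside $B_{c,\bar{c}}$ is benign.
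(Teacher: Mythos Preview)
Your proposal is correct and follows essentially the same route as the paper: identify $\tau^* = \FairnessSymm(f^*_0;\DSens)$ for $f^*_0 = \indicator{\eta > c}$, decompose $\CSERR(f^*_0;\DSens,\bar{c})$ as Bayes risk plus regret, recognise the regret as $\E{\X}{B_{c,\bar{c}}(\etaSens,\eta)}$ via a sign-agreement case analysis, and rewrite the Bayes risk as $-\mathbb{I}_\varphi(\bar{P}_1,\bar{P}_0)$. The paper packages the regret step and the $f$-divergence identification into helper lemmas (its Lemmas on cost-sensitive regret and on Bayes risk as an $f$-divergence), whereas you unfold both computations directly; the content is the same. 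One small caution on your boundary remark: the paper does not argue that $\{\eta = c\}$ contributes zero, but rather fixes the specific representative $f^*_0 = \indicator{\eta > c}$ among the unconstrained optima and computes \emph{its} fairness, noting in a footnote that other optimal choices could yield different fairness values---so frame that point as a choice of representative rather than a measure-zero claim.
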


Unpacking the above, Equation \ref{eqn:bregman-v1} gives a concrete notion of disalignment between $\eta$ and $\etaSens$
-- which measures how much they disagree around the respective thresholds $c$ and $\bar{c}$
-- and shows that when this disalignment is high, the fairness constraint has less of an effect.
The additional $f$-divergence term %does not depend on $\eta$, but rather
is an intrinsic statement about $\DSens$:
when the class-conditionals of $\DSens$ strongly overlap,
\ie there is limited predictability of the sensitive label from the features,
then %$\tau^*$ increases and
also the fairness constraint has less of an effect.
Finally, the $\min(\cdot,\cdot)$ term arises from using symmetrised fairness.

% The above crucially relies on the risks in question being cost-sensitive.
% Importantly, it does not suffice for the super-level sets to coincide with those of a cost-sensitive risk,
% and so it does not hold for the disparate impact.
% For a general $\Fairness$ based on a $\PhiClass$, we can nonetheless derive
% an abstract notion of the disalignment between $\eta$ and $\etaSens$;
% see Appendix \ref{sec:general-regrets}.

%

Proposition \ref{prop:frontier-cutoff} specifies how much fairness we can ask for
without paying any performance penalty.
When there is a penalty, however, how does this depend on $\DFull$?
To understand this,
we appeal to Bayes-optimal classifiers to Problem \ref{prb:cs-disc-aware},
%As previously noted,
%While this problem specified the tradeoff via tuning parameter $\lambda$, rather than a hard constraint on the fairness,
%such solutions may be related to those of Equation \ref{eqn:fairness-constrained} by Lagrangian duality (see Appendix \ref{sec:lagrangian}).
whose closed form reveals that the frontier is determined by a similar notion of {disalignment}. %between the target and sensitive features.
% we study the performance of the Bayes-optimal classifiers to Problem \ref{prb:cs-disc-aware} as the tuning parameter $\lambda$ is varied.
% Note that the Lagrangian principle implies that the latter will correspond to sweeping over $\tau$ in Equation \ref{eqn:frontier},
% which is precisely our goal
% (see Appendix \ref{sec:lagrangian} for more discussion).

% We now show that for a fixed $\lambda$, the performance of the resulting optimal solution is determined by a particular notion of \emph{alignment} between the target and sensitive features.

\begin{proposition}
\label{prop:bregman}
Pick any $\DFull$, and cost-sensitive performance and fairness measures %$\Perf( \cdot; \D ), \Fairness( \cdot; \DSens )$
with parameters $c, \bar{c}$.
Given $\tau \in [ 0, 1 ]$,
there is some $\lambda \in \Real$
and Bayes-optimal randomised classifier $f^* \in \argmin{f \in [0,1]^{\XCal}}{ R_{\mathrm{full}}( f; \D, \DSens, c, \bar{c}, \lambda ) }$
with
$$ F( \tau ) = \E{\X}{ ( c - \eta( \X )) \cdot ( f^*( \X ) - \indicator{ \eta( \X ) > c } ) }. $$
If further
this $f^*$ is deterministic \ie $\mathrm{Im}( f^* ) \subseteq \{ 0, 1 \}$,
\begin{equation}
	\label{eqn:bregman}
	F( \tau ) = \E{\X}{ B_{\lambda, c, \bar{c}}( \eta( \X ), \etaSens( \X ) ) }
\end{equation}
where $B_{\lambda, c, \bar{c}}( \cdot, \cdot )$ is defined by
$$ B_{\lambda, c, \bar{c}}( \eta, \etaSens ) \defEq | \eta - c | \cdot \indicator{ (\eta - c) \cdot (\eta - c - \lambda \cdot (\etaSens - \bar{c})) < 0 }. $$
\end{proposition}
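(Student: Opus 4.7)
\textbf{Proof plan for Proposition \ref{prop:bregman}.}
The plan is to reduce the constrained fairness problem to the Lagrangian form already analysed, and then exploit the explicit pointwise structure of the Bayes-optimal classifier to rewrite the excess risk as a pointwise integrand.

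First, I would connect $F(\tau)$ to the Lagrangian objective of Problem \ref{prb:cs-disc-aware}. By Lemma \ref{lemm:convex-frontier}, the frontier is convex, and both the objective and the (symmetrised) constraint in Equation \ref{eqn:fairness-constrained} are linear in $f$ for cost-sensitive measures (Lemma \ref{lemm:constrained-opt}). Standard Lagrangian duality for convex programs then yields, for every $\tau \in [0,1]$, some $\lambda \in \Real$ and a constrained optimum $f^*_\tau$ that simultaneously minimises $R_{\mathrm{full}}(\cdot; \D, \DSens, c, \bar{c}, \lambda)$. Here $\lambda$ may be negative because, as explained after Problem \ref{prb:cs-disc-aware}, the symmetrised constraint produces both an upper and lower cost-sensitive bound, and the sign tracks which side is active.

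Second, I would write the cost-sensitive risk in the pointwise form
\begin{equation*}
\CSERR(f; \D, c) = \E{\X}{(1-c)\,\eta(\X)} + \E{\X}{f(\X)\,(c - \eta(\X))},
\end{equation*}
which follows directly from conditioning on $\X$ in the definition. Subtracting this identity at $f = f^*_\tau$ and at $f = f^*_0(x) = \indicator{\eta(x) > c}$ (the unconstrained Bayes classifier from Example \ref{example:cs}) and observing that the $(1-c)\eta(\X)$ terms cancel immediately yields the first displayed identity
\begin{equation*}
F(\tau) = \E{\X}{(c-\eta(\X))\,(f^*(\X) - \indicator{\eta(\X) > c})}.
\end{equation*}

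Third, for the deterministic case I would invoke Proposition \ref{prop:bayes-sens-unaware}, which gives $f^*(x) = \indicator{s^*(x) > 0}$ with $s^*(x) = \eta(x) - c - \lambda(\etaSens(x) - \bar{c})$, for every $x$ with $s^*(x) \neq 0$ (and, by the deterministic assumption, the remaining zero set contributes nothing). Then I would split into cases based on whether $\indicator{\eta > c}$ and $\indicator{s^* > 0}$ agree: when they agree the integrand vanishes; when they disagree the sign analysis gives $(c - \eta)(f^* - \indicator{\eta > c}) = |\eta - c|$ in both subcases (positive, since the ``wrong side'' of $c$ always contributes $|\eta - c|$). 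The disagreement event is exactly $(\eta - c)$ and $s^* = (\eta - c) - \lambda(\etaSens - \bar{c})$ having opposite signs, i.e.\ $(\eta - c)\cdot(\eta - c - \lambda(\etaSens - \bar{c})) < 0$. Substituting yields the closed form \ref{eqn:bregman}.

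The main obstacle is Step 1: verifying that the symmetrised constraint admits a Lagrangian representation with a \emph{single} multiplier $\lambda \in \Real$ (rather than two nonnegative multipliers for the upper and lower cost-sensitive bounds), and justifying why Proposition \ref{prop:bayes-sens-unaware} still characterises the minimiser in that regime. Once this reduction is secured, the algebraic identities in Steps 2--3 are direct.
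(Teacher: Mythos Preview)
Your proposal is correct and follows essentially the same route as the paper: invoke linear-program strong duality (via Lemma \ref{lemm:constrained-opt} and Appendix \ref{sec:lagrangian}) to obtain the single multiplier $\lambda$, express the excess cost-sensitive risk in the pointwise form $(c-\eta)(f^*-\indicator{\eta>c})$ (the paper packages this as Lemma \ref{lemm:cs-regret}), and then plug in the explicit $f^* = \indicator{s^*>0}$ from Proposition \ref{prop:bayes-sens-unaware} with a sign-agreement case split. The ``main obstacle'' you flag---collapsing the two nonnegative multipliers for the symmetrised constraint into one real $\lambda=\lambda_1-\lambda_2$---is exactly what the paper handles in Appendix \ref{sec:lagrangian}, so your concern is well placed but already resolved by the cited material.
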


	%\subfigure[$\eta, \etaSens$ given by indicator functions.]{\label{fig:frontier-indicator} \includegraphics[scale=0.15]{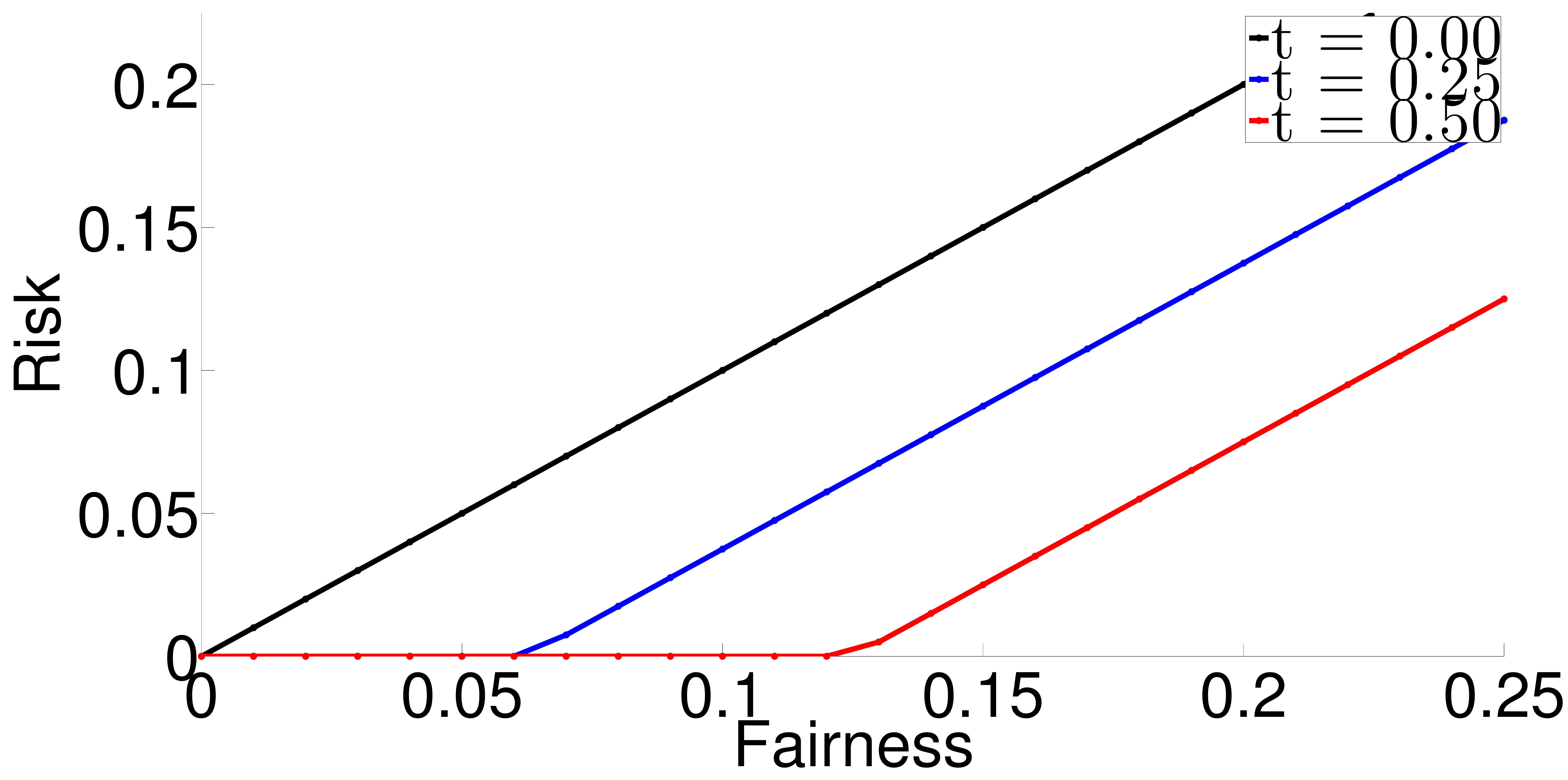}}
	%\subfigure[$\eta, \etaSens$ given by indicator functions.]{\includegraphics[scale=0.15]{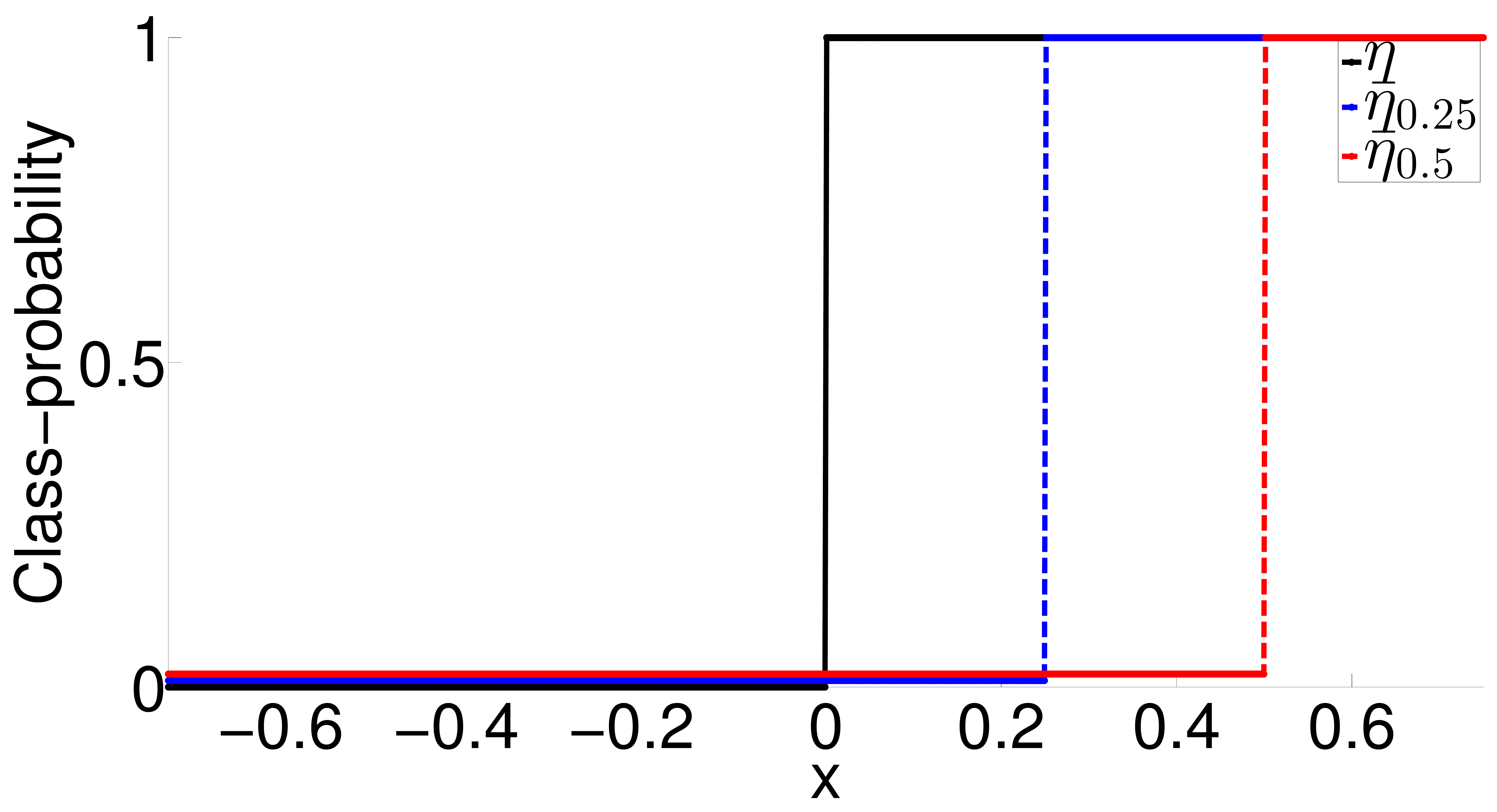}}
	% \subfigure[$\eta, \etaSens$ given by indicator functions]{\label{fig:frontier-sigmoid} \includegraphics[scale=0.15]{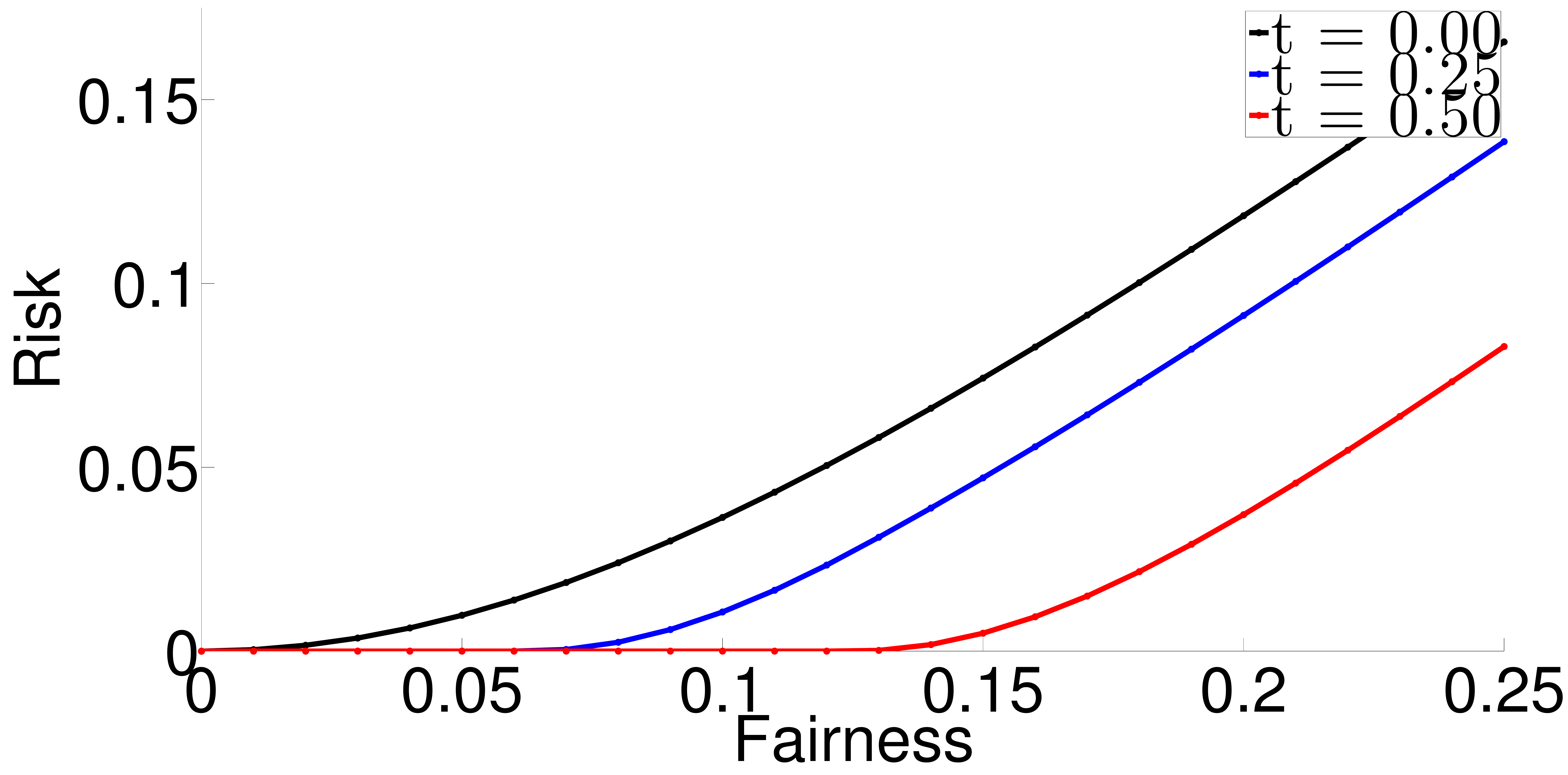}}
	% \subfigure[$\eta$ given by sigmoid function, $\etaSens$ by indicator.]{\includegraphics[scale=0.15]{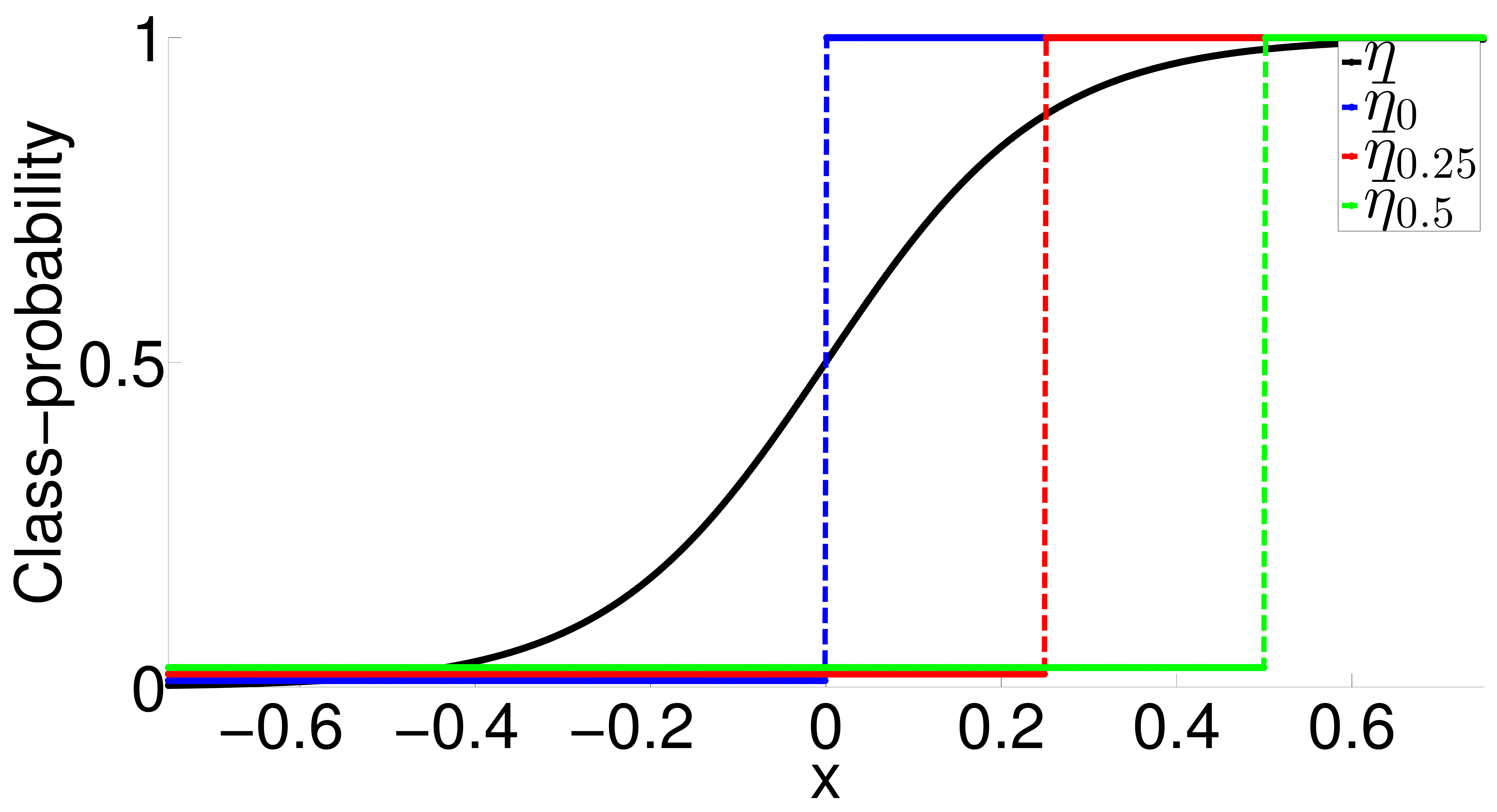}}

\begin{figure*}[!t]


	\centering
	\subfigure[$\eta, \etaSens$ given by indicator functions]{
	\begin{minipage}{0.24\textwidth}
	%\resizebox{\textwidth}{!}{
	\label{fig:frontier-indicator}
	\includegraphics[scale=0.05]{figs/frontier_indicator.pdf}
	%}
	\end{minipage}%
	\begin{minipage}{0.24\textwidth}
	%\resizebox{\textwidth}{!}{	
	\includegraphics[scale=0.05]{figs/eta_etasens_indicator.pdf}
	%}
	\end{minipage}%
	}
	\subfigure[$\eta$ given by sigmoid function, $\etaSens$ by indicator function.]{
	\begin{minipage}{0.24\textwidth}
	%\resizebox{\textwidth}{!}{
	\label{fig:frontier-sigmoid} \includegraphics[scale=0.05]{figs/frontier_sigmoid.pdf}
	%}
	\end{minipage}%
	\begin{minipage}{0.24\textwidth}
	%\resizebox{\textwidth}{!}{
	\includegraphics[scale=0.05]{figs/eta_etasens_sigmoid.pdf}
	%}
	\end{minipage}
	}

	\caption{Illustration of fairness frontiers and probability disalignment. See text for description of parameter $t$.}
\end{figure*}

\begin{figure*}[!t]
	\begin{minipage}[valign=top]{0.49\textwidth}
	\centering
	{\includegraphics[scale=0.06]{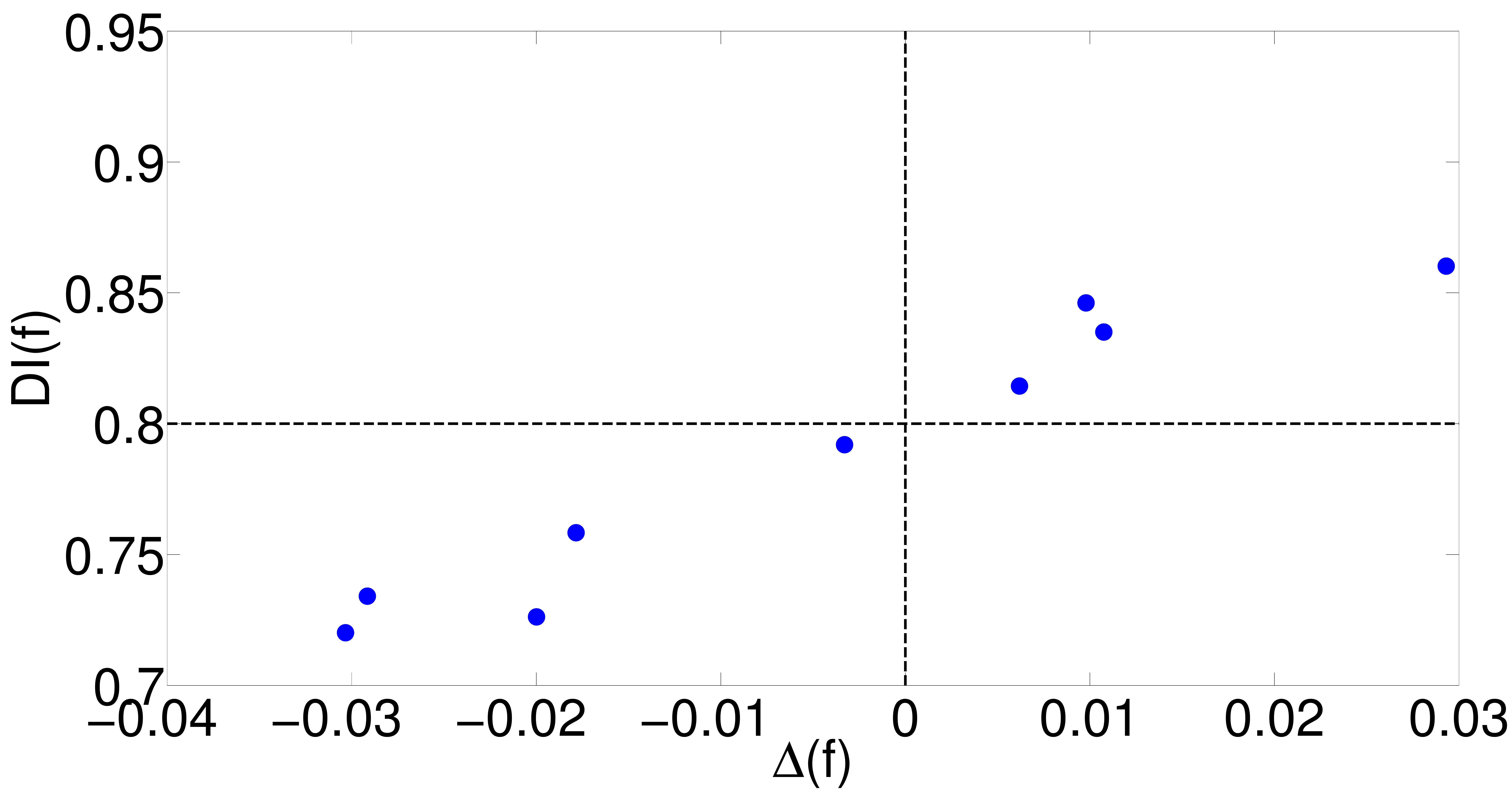} \label{fig:certification}}
	%\caption{Certification of disparate impact via cost-sensitive risk on {\tt german} dataset. Each point represents a different model.}
	\end{minipage}%
	\vspace{4pt}
	\begin{minipage}[valign=top]{0.49\textwidth}
	\centering
	{\includegraphics[scale=0.1]{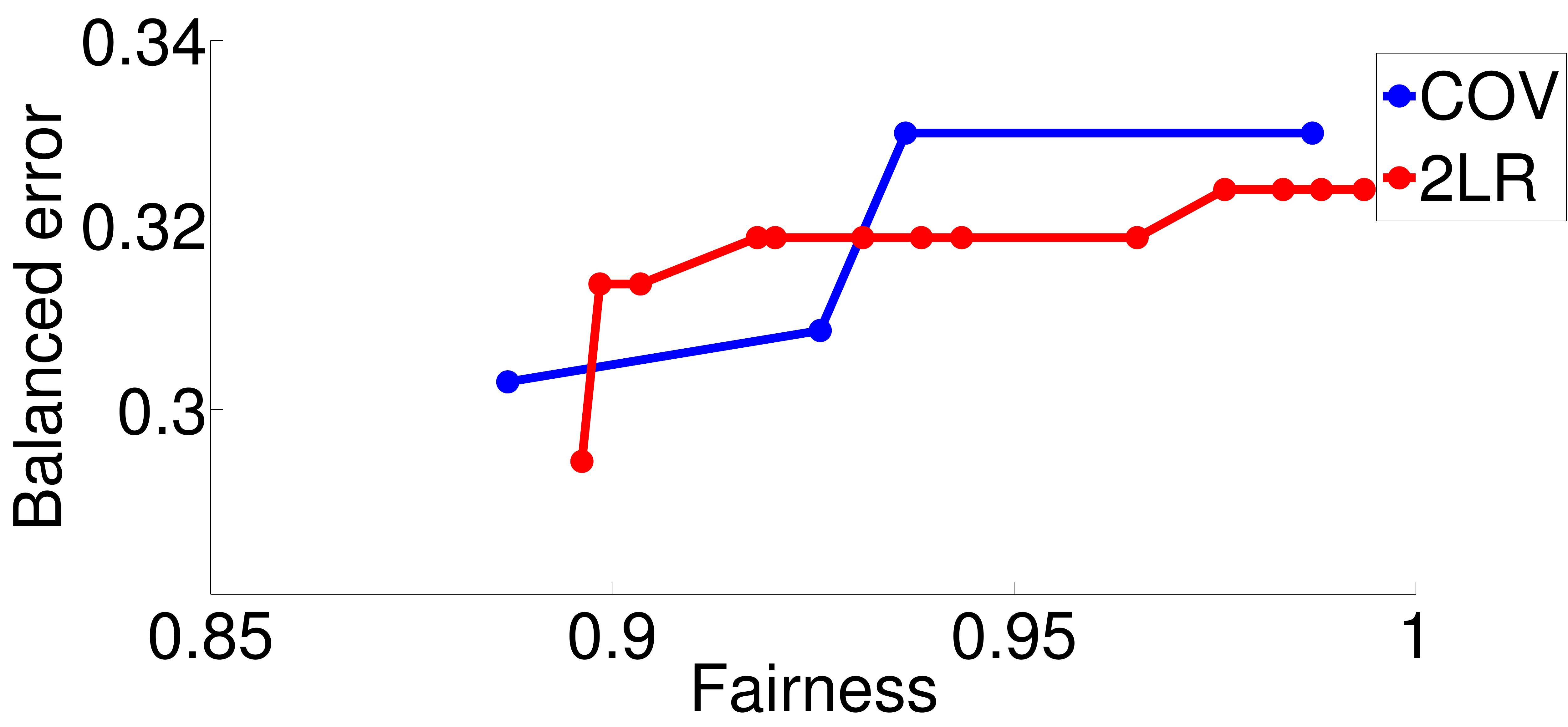} \label{fig:plugin}}
	%\caption{Comparison of plugin ({\tt 2LR}) and {\tt COV} methods as tuning parameters for both are varied.}
	\end{minipage}\\[-7pt]
    \begin{minipage}[t]{.48\linewidth}
        \caption{Certification of disparate impact via cost-sensitive risk on {\tt german} dataset. Each point represents a different model.}
    \end{minipage}%
    \hfill%
    \begin{minipage}[t]{.48\linewidth}
        \caption{Comparison of plugin ({\tt 2LR}) and {\tt COV} methods as tuning parameters for both are varied.}
    \end{minipage}%	
	\vspace{-0.1in}
\end{figure*}

%Two remarks are in order.

% in particular 
% casting the problem in terms of cost-sensitive risks,
% and explicitly computing its Bayes-optimal classifier.

% For given $\lambda \in \Real$,
% suppose $f^*_{\lambda, c, \bar{c}}$ is a deterministic Bayes-optimal classifier to Equation \ref{eqn:cs-fairness-lagrange}
% for $\tilde{D} = \DSens$ and $\FCal = [0, 1]^{\XCal}$.
% Then,

% \resizebox{0.99\linewidth}{!}{
% $\displaystyle \Perf( f^*_{\lambda, c, \bar{c}}; \D ) - \min_{g \colon \XCal \to [0, 1]} \Perf( g; \D ) = \E{\X}{ B_{\lambda, c, \bar{c}}( \eta( \X ), \etaSens( \X ) ) } $
% }

% where $B_{\lambda, c, \bar{c}}( \cdot, \cdot )$ is defined by
% $$ B_{\lambda, c, \bar{c}}( \eta, \etaSens ) \defEq | \eta - c | \cdot \indicator{ (\eta - c) \cdot (\eta - c - \lambda \cdot (\etaSens - \bar{c})) < 0 }. $$
The function $B_{\lambda, c, \bar{c}}(\cdot,\cdot)$ above
has a similar flavour as Equation \ref{eqn:bregman-v1},
and also measures the disalignment of $\eta$ and $\etaSens$
in terms of disagreement around respective thresholds $c, \bar{c}$.
However, there is additionally a dependence on $\lambda$, which depends in some non-trivial manner on $\tau$.
We note that the requirement that $f^*$ be deterministic may be dropped, at the expense of an additional term in Equation \ref{eqn:bregman}
that depends on the alignment of the non-deterministic component and $\eta$.
% around the thresholds $c$ and $\bar{c}$.
% This determines the best possible performance we can hope to achieve on our original problem:
% the more aligned $\eta$ and $\etaSens$ are, the harsher the performance penalty imposed by a fairness constraint.

We make two final comments.
% First, establishing Propositions \ref{prop:frontier-cutoff} and \ref{prop:bregman} is relatively straightforward;
% however, this is a consequence of
% the foundations we have laid in previous sections.
First, the above result holds for both the MD and DI factor, as it only requires the superlevel sets to coincide with those of a cost-sensitive risk.
Second, our notions of (dis)alignment are, roughly, analogous to the notion of compatibility functions in semi-supervised learning \citep{Balcan:2010}, wherein one can guarantee that unlabelled data is useful when there is an alignment of the marginal data distribution with one's function class.

To get an intuitive feel for the disalignment function of Equation \ref{eqn:bregman-v1},
we illustrate the frontier for some simple distributions.
Consider first the case where $\XCal = [-1, 1]$, and $\D$ is such that $\eta( x ) = \indicator{ x > 0 }$ and the marginal over instances is uniform.
Suppose also that $\etaSens( x ) = \indicator{ x > t }$, for $t$ some parameter to be specified.
Consider a cost-sensitive performance and fairness measure with $c = \bar{c} = \half$.
We can explicitly compute the frontier here,
shown in Figure \ref{fig:frontier-indicator}
for a range of $t$.
As $t$ increases, $\eta$ and $\etaSens$ grow increasingly dissimilar, and so the fairness constraint does not affect performance as dramatically:
this is manifest in the fact that $\tau^*$ increases with $t$, as predicted by Equation \ref{eqn:bregman-v1}.
Further, for every $t$,
when there is a tradeoff, it is linear.

Suppose we instead have $\eta( x ) = (1 + \exp(-x))^{-1}$, and retain the same $\etaSens$.
Here again,
Figure \ref{fig:frontier-sigmoid} shows that
as $t$ increases, $\eta$ and $\etaSens$ grow increasingly dissimilar, and we find that $\tau^*$ again increases with $t$.
The impact of changing the shape of $\eta$ is the effect on the frontier when it is nonzero:
as per Equation \ref{eqn:bregman},
this depends on the deviation of $\eta$ from $c$,
and hence the frontier here is nonlinear.
% with a much less restrictive tradeoff: by demanding more fairness, we only pay a minor additional penalty.
% (Note that since the frontier is convex, and the fairness is in $[0,1]$, by definition the linear tradeoff of the previous example is the worst possible.)
% This is again intuitive in terms of alignment:
% even when $t = 0$, $\eta$ and $\etaSens$ have rather different behaviour, and so the fairness constraint has only a mild impact.

% Finally, in Figure \ref{fig:frontier-cost}, we study the impact of changing the cost parameter $\bar{c}$,
% but fixing $\eta( x ) = (1 + \exp(-x))^{-1}$ and $\etaSens( x ) = \indicator{ x > 0 }$.
% Here, we again see that as $\bar{c}$ is increased, there is less of a tradeoff with the fairness constraint.
% This is since the alignment of $\eta$ and $\etaSens$ around $\bar{c}$ varies, with the alignment around $\bar{c} = 1$ being the least.

%%%%%%%%%
\section{Experiments}
\label{sec:experiments}
% !TEX root = fairness-icml17.tex

% \begin{figure*}[!t]
% 	\centering
% 	\subfigure[Certification of disparate impact via cost-sensitive risk on {\tt german} dataset. Each point represents a different model.]{\includegraphics[scale=0.0675]{figs/certification.pdf} \label{fig:certification}}
% 	\qquad
% 	\subfigure[Comparison of plugin ({\tt 2LR}) and {\tt COV} methods.]{\includegraphics[scale=0.0675]{figs/cov_ber.pdf} \label{fig:plugin}}
% 	\vspace{-0.2in}
% \end{figure*}

We present an experiment
%to validate our analysis.
%The first is
inspired by \citet{Feldman:2015},
who aimed to certify whether a dataset
admits disparate impact
(\ie one can achieve $\DI( f; \DSens ) \leq \tau$ for fixed $\tau$)
by testing if the minimal achievable balanced error
is below some threshold
(see Appendix \ref{sec:di-ber}).
Rather than employ the balanced error,
we follow Lemma \ref{lemm:di-cost-sensitive} and assess the
minimal achievable balanced cost-sensitive risk for $c = (1 + \tau)^{-1}$. %= 5/9$.
% when this is less than $1 - c$,
% then we claim the potential of disparate impact.

Specifically, following \citet{Feldman:2015},
we consider the UCI {\tt german} dataset with $\YSens$ denoting whether or not the age of a person is above 25,
and fix $\tau = 0.8$.
For a number of train-test splits to be specified,
we train models to minimise the cost-sensitive logistic loss with parameter $c$ (Equation \ref{eqn:surrogate-cs}),
and evaluate on the test set the disparate impact,
as well as the gap $\Delta( f ) \defEq \CSERRBal( f; \DSens, c ) - (1 - c)$.
Our Lemma \ref{lemm:di-cost-sensitive} indicates that we should find the latter to be positive only when the former is larger than $\tau = 0.8$.

To construct our training sets,
we make an initial 2:1 train-test split of the full data,
treating $\YSens$ as the label to predict. %target variable,
%and $\X$ being all other features
To obtain models with varying levels of accuracy in predicting $\YSens$, we inject symmetric label noise of varying rates into the training (but not the test) set.
Figure \ref{fig:certification} shows that for the resulting models,
as per Lemma \ref{lemm:di-cost-sensitive},
there is perfect agreement of disparate impact at $\tau = 0.8$ and $\mathrm{sign}(\Delta( f ))$.
%despite all computations being on finite samples.

% \begin{figure}[!h]
% 	\centering
% 	\includegraphics[scale=0.075]{figs/certification.pdf}
% 	\caption{Certification of disparate impact via cost-sensitive risk on {\tt german} dataset. Each point represents a different model.}%, trained to predict a noisy version of the sensitive feature.}
% 	\label{fig:certification}
% 	\vspace{-0.2in}
% \end{figure}

We next present an experiment
analogous to \citet{Zafar:2016},
where on the same {\tt german} dataset
we learn a classifier that respects a symmetrised MD score constraint,
while being accurate for predicting the target variable in the sense of balanced error (BER).
We employ the plugin estimator proposed in \S\ref{sec:bayes-implications},
training logistic regression models to predict the target and sensitive variable
and then combining them via Equation \ref{eqn:bayes-class-sens-unaware} for some $\lambda \in \Real$.
On the test set, we compute the BER for the target variable, and the symmetrised MD score for the sensitive variable.
We then employ the {\tt COV} method of \citet{Zafar:2016}, which uses a surrogate to the MD constraint as discussed in \S\ref{sec:related-bayes},
with tuning parameter $\tau \in \Real_+$ to control the MD score.

Varying $\lambda$ and $\tau$ yields tradeoff curves for both methods.
Figure \ref{fig:plugin} shows these curves at high fairness value,
where we see that our plugin approach is generally competitive with {\tt COV},
resulting in lower BER at higher fairness levels.
(See Appendix \ref{sec:add-expts} for further experiments.)
This illustrates that our Bayes-optimal analysis of Problem \ref{prb:cs-disc-aware}
may be useful in designing fairness-aware classifiers.

% \begin{figure}[!h]
% 	\centering
% 	\includegraphics[scale=0.075]{figs/cov_ber.pdf}
% 	\caption{Comparison of plugin ({\tt 2LR}) and {\tt COV} methods.}
% 	\label{fig:plugin}
% \end{figure}

%%%%%%%%%
\section{Conclusion and future work}
% !TEX root = fairness-icml17.tex

We studied the tradeoffs inherent in the problem of learning with a fairness constraint,
%in terms of the impact on the optimal solutions and performance on the original problem.
%For the former,
showing that for cost-sensitive fairness measures,
the optimal classifier is an instance-dependent thresholding of the class-probability function,
and %is the solution to a linear program,
%For the latter, we
quantifying the degradation in performance by a measure of alignment of the target and sensitive variable.
%This analysis sheds some light on the fundamental limits imposed by a fairness constraint.

There are several interesting directions for future work.
To name a few, we believe it valuable to
%extend our analysis to the case of randomised scorers;
study Bayes-optimal scorers for ranking measures such as AUC;
establish consistency of the plugin estimators of \S\ref{sec:bayes-opt};
quantify the impact of working with a finite sample;
and extend our analysis to the case of multi-category sensitive features.

\bibliography{references}
\bibliographystyle{plainnat}

\clearpage

\appendix
\onecolumn

% !TEX root = fairness-arxiv.tex

%
{\LARGE
\begin{center}
\textbf{Appendix}
\end{center}
}

%%%%%%%%%
\section{Proofs of results in main body}

\let\stdsection\section
\renewcommand\section{\clearpage\stdsection}

%
% \begin{proof}[Proof of Lemma \ref{lemm:non-trivial}]
% Since $1 - \FNRSens > \FPRSens$ for a non-trivial classifier, we know the ratio $\frac{\FPRSens}{1 - \FNRSens} < 1$.
% Consequently, we deduce the ratio $\frac{1 - \FNRSens}{\FPRSens} > 1$.
% Thus, the former must be the minimum of the two, and so we need only check if it is greater than or equal to $\tau$.
% \end{proof}

%
\begin{proof}[Proof of Lemma \ref{lemm:di-cost-sensitive}]
By definition,
\begin{align*}
	%\DISens \geq \tau &\iff
	\frac{\FPRSens}{1 - \FNRSens} \geq \tau &\iff \FPRSens \geq \tau - \tau \cdot \FNRSens \text{ , since } \FNRSens \leq 1 \\
	&\iff \tau \cdot \FNRSens + \FPRSens \geq \tau \\
	&\iff \frac{\tau}{1 + \tau} \cdot \FNRSens + \frac{1}{1 + \tau} \cdot \FPRSens \geq \frac{\tau}{1 + \tau} \\
	&\iff ( 1 - c ) \cdot \FNRSens + c \cdot \FPRSens \geq 1 - c \\
	&\iff \CSERRBal( f; \DSens, c ) \geq 1 - c.
\end{align*}
% Similarly,
% \begin{align*}
% 	\frac{1 - \FNRSens}{\FPRSens} \geq \tau &\iff 1 - \FNRSens \geq \tau \cdot \FPRSens \\
% 	&\iff \FNRSens + \tau \cdot \FPRSens \leq 1 \\
% 	&\iff \frac{1}{1 + \tau} \cdot \FNRSens + \frac{\tau}{1 + \tau} \cdot \FPRSens \leq \frac{1}{1 + \tau} \\
% 	&\iff c \cdot \FNRSens + (1 - c) \cdot \FPRSens \leq c \\
% 	&\iff \CSERRBal( f; \DSens, 1 - c ) \leq c.
% \end{align*}
The first result follows by definition of disparate impact.

The above may be trivially extended to a symmetrised version of the disparate impact (Equation \ref{eqn:fairness-symm}).
This is since one may equally apply the above to the anti-classifier $1 - f$;
further, we have
$$ \CSERRBal( 1 - f; \D, c ) = 1 - \CSERRBal( f; \D, c ), $$
and so $\DI( 1 - f; \DSens ) \geq \tau \iff \CSERRBal( f; \D, c ) \leq c$,
implying the second result.
\end{proof}

\begin{proof}[Proof of Lemma \ref{lemm:cv-ber}]
By definition,
%$$ \MDSens = 1 - \FPRSens - \FNRSens = 1 - 2 \cdot \BERSens. $$
\begin{align*}
	\MDSens &= 1 - \FPRSens - \FNRSens \\
	&= 1 - 2 \cdot \CSERRBal\left( f; \DSens, \half \right).
\end{align*}
The subsequent implications follow trivially.
% \begin{align*}
% 	\MDSens \geq \tau &\iff \TPRSens \geq \FPRSens + \tau, \\
% 	&\iff 1 - \tau \geq \FPRSens + \FNRSens \\
% 	&\iff \BERSens \leq \frac{1 - \tau}{2}.
% \end{align*}
\end{proof}

%
% \begin{proof}
% Letting $\XSens = (\X, \YSens)$ for simplicity,
% \begin{align*}
% 	\CSERR( \rho; \DFull, c ) &= \pi \cdot (1 - c) \cdot {\Pr}( \YHat = 0 \mid \Y = 1 ) + (1 - \pi) \cdot c \cdot {\Pr}( \YHat = 1 \mid \Y = 0 ) \\
% 	&= \pi \cdot (1 - c) \cdot \E{\XSens \mid \Y = 1}{ 1 - \rho( \XSens ) } + (1 - \pi) \cdot c \cdot \E{\XSens \mid \Y = 0}{ \rho( \XSens ) } \\
% 	&= \pi \cdot (1 - c) + \pi \cdot (1 - c) \cdot \E{\XSens \mid \Y = 1}{ - \rho( \XSens ) } + (1 - \pi) \cdot c \cdot \E{\XSens \mid \Y = 0}{ \rho( \XSens ) } \\
% 	&= \pi \cdot (1 - c) + \E{(\XSens, \Y) \sim \DFull}{ -c_{\Y} \cdot (2\Y-1) \cdot \rho( \XSens ) } \text{ for } c_{1} = 1 - c, c_{0} = c.
% \end{align*}
% \end{proof}

%
% \begin{proof}[Proof of Lemma \ref{lemm:cov-unhinged}]
% Observe that
% \begin{align*}
% \COV( s; \DSens ) &= \E{( \X, \YSens ) \sim \DSens}{ \YSens \cdot s( \X ) } - \E{\YSens}{\YSens} \cdot \E{\X}{ s( \X ) } \\
% &= \E{( \X, \YSens ) \sim \DSens}{ \left( \YSens - \E{\YSens'}{\YSens'} \right) \cdot s( \X ) } \\
% &= \E{( \X, \YSens ) \sim \DSens}{ \left( \YSens - (2\cdot\pi - 1) \right) \cdot s( \X ) } \\
% &= \pi \cdot \E{\X \mid \YSens = 1}{ 2 \cdot (1 - \pi) \cdot s( \X ) } + (1 - \pi) \cdot \E{\X \mid \YSens = 0}{ -2 \cdot \pi \cdot s( \X ) } \\
% &= 2 \cdot \pi \cdot (1 - \pi) \cdot \left( \E{\X \mid \YSens = 1}{ s( \X ) } + \E{\X \mid \YSens = 0}{ -s( \X ) } \right) \\
% &= -2 \cdot \pi \cdot (1 - \pi) \cdot \RiskBal( s; \DSens, \ell ).
% \end{align*}
% \end{proof}

%
\begin{proof}[Proof of Proposition \ref{prop:bayes-sens-unaware}]
By Lemma \ref{lemm:cs-linear}, the performance measure is
\begin{align*}
\Perf( f; \D, c ) &= (1 - c) \cdot \pi + \E{\X}{ \left( c - \eta( \X ) \right) \cdot f( \X ) }.
\end{align*}
Similarly, the fairness measure is
\begin{align*}
	\Fairness( f; \DDP, \bar{c} ) &= (1 - \bar{c}) \cdot \bar{\pi} + \E{\X}{ \left( \bar{c} - \etaDP( \X ) \right) \cdot f( \X ) }.
\end{align*}
Ignoring constants independent of $f$, the overall objective is thus
\begin{align*}
&\min_f \Perf( f; \D, c ) - \lambda \cdot \Fairness( f; \DDP, \bar{c} ) \\
&=\min_f \E{\X}{ \left( ( c - \eta( \X ) ) - \lambda \cdot ( \bar{c} - \etaDP( \X ) ) \right) \cdot f( \X ) } \\
&= \min_f \E{\X}{ -s^*( x ) \cdot f( \X ) }.
\end{align*}
Thus, at optimality, when $s^*( x ) \neq 0$, $f^*( x ) = \indicator{ s^*( x ) > 0 }$.
\end{proof}

\begin{proof}[Proof of Proposition \ref{prop:bayes-sens-unaware-eoo}]
By Lemma \ref{lemm:cs-linear}, the fairness measure is
\begin{align*}
	\Fairness( f; \DEOO, \bar{c} ) &= (1 - \bar{c}) \cdot \Pr( \YSens = 1 \mid \Y = 1 ) + \E{\X \mid \Y = 1}{ ( \bar{c} - \etaEOO( \X, 1 ) ) \cdot f( \X ) } \\
	&= (1 - \bar{c}) \cdot \Pr( \YSens = 1 \mid \Y = 1 ) + \E{\X}{ \frac{\eta( \X )}{\pi} \cdot ( \bar{c} - \etaEOO( \X, 1 ) ) \cdot f( \X ) },
\end{align*}
where the second line is from applying the importance weighting identity, and the fact that
$$ \frac{\Pr( \X \mid \Y = 1 )}{\Pr( \X )} = \frac{\Pr( \Y = 1 \mid \X = x )}{\Pr( \Y = 1 )} = \frac{\eta( x )}{\pi}. $$
Equivalently, for suitable $\lambda$, we seek
\begin{align*}
& \min_f \E{\X}{ \left( c - \eta( \X ) - \lambda \cdot \frac{\eta( \X )}{\pi} \cdot ( \bar{c} - \etaEOO( \X, 1 ) ) \right) \cdot f( \X ) } \\
&= \min_f \E{\X}{ -s^*( x ) \cdot f( \X ) } \\
\end{align*}
Thus, at optimality, when $s^*( x ) \neq 0$, $f^*( x ) = \indicator{ s^*( x ) > 0 }$.
\end{proof}

\begin{proof}[Proof of Corollary \ref{corr:bayes-sens-aware}]
We simply apply Proposition \ref{prop:bayes-sens-unaware} to $\bar{x} = ( x, \bar{y} )$.
Note that
\begin{align*}
	\etaDP( x, \bar{y} ) &= \Pr( \YSens = 1 \mid \X = x, \YSens = \bar{y} ) \\
	&= \indicator{\bar{y} = 1}.
\end{align*}
Then,
\begin{align*}
f^*( x, \bar{y} ) = 1 &\iff \eta( x, \bar{y} ) > c + \lambda \cdot ( \etaDP( x, \bar{y} ) - \bar{c} ) \\
&\iff \eta( x, \bar{y} ) > c + \lambda \cdot ( \indicator{\bar{y} = 1} - \bar{c} ).
\end{align*}
\end{proof}

\begin{proof}[Proof of Lemma \ref{lemm:constrained-opt}]
By Lemma \ref{lemm:cs-linear}, cost-sensitive risks are linear in the randomised classifier.
In particular, for discrete $\XCal$,
\begin{align*}
\Perf( f; \D ) &= (1 - c) \cdot \pi + \E{\X}{ \left( c - \eta( \X ) \right) \cdot f( \X ) } \\
&= (1 - c) \cdot \pi + \sum_{x \in \XCal}{ m( x ) \cdot \left( c - \eta( x ) \right) \cdot f( x ) },
\end{align*}
where $m( x ) = \Pr( \X = x )$.
Similarly,
\begin{align*}
\Fairness( f; \DSens ) &= (1 - \bar{c}) \cdot \pi + \E{\X}{ \left( \bar{c} - \etaSens( \X ) \right) \cdot f( \X ) } \\
&= (1 - \bar{c}) \cdot \pi + \sum_{x \in \XCal}{ m( x ) \cdot \left( \bar{c} - \etaSens( x ) \right) \cdot f( x ) }.
\end{align*}
Now let
\begin{align*}
	( \forall x \in \XCal ) \, a( x ) &\defEq m( x ) \cdot \left( c - \eta( x ) \right) \\
	( \forall x \in \XCal ) \, b( x ) &\defEq m( x ) \cdot \left( \bar{c} - \etaSens( x ) \right).
\end{align*}
Then, the optimisation is
\begin{align*}
	\min_f a^T f \colon &-b^T f \leq -\tau \\
	&b^T f \leq 1 - \tau \\
	&0 \leq f \leq 1.
\end{align*}
This is a linear objective with linear constraints.
We thus may find the optimal random classifier by the solution to a linear program.
\end{proof}

%
% \begin{proof}[Proof of Lemma \ref{lemm:monotone-frontier}]
% Pick any $\tau < \tau'$.
% Then, for any scorer $s$, $\Fairness( s; \DFull ) \geq \tau' \implies \Fairness( s; \DFull ) \geq \tau$,
% and so any feasible solution for the latter problem is also feasible for the former;
% but then by definition $F( \tau ) = \Perf( f^*_\tau; \DFull ) \leq \Perf( f^*_{\tau'}; \DFull ) = F( \tau' )$.
% \end{proof}

%
\begin{proof}[Proof of Lemma \ref{lemm:convex-frontier}]
We wish to determine whether, for any $\tau, \tau' \in \Real_+$ and $\lambda \in [0, 1]$,
\begin{align*}
	&F( \lambda \tau + (1 - \lambda) \tau' ) \stackrel{?}{\leq} \lambda F( \tau ) + (1 - \lambda) F( \tau' ) \\
	&\iff \Perf( f^*_{\lambda \tau + (1 - \lambda) \tau'} ) \stackrel{?}{\leq} \lambda \Perf( f^*_\tau ) + (1 - \lambda) \Perf( f^*_{\tau'} ) \\
	&\impliedby \Perf( f^*_{\lambda \tau + (1 - \lambda) \tau'} ) \stackrel{?}{\leq} \Perf( \lambda f^*_\tau + (1 - \lambda) f^*_{\tau'} ) \text{ for convex } \Perf \\
	&\impliedby \lambda f^*_\tau + (1 - \lambda) f^*_{\tau'} \text{ feasible for } \lambda \tau + (1 - \lambda) \tau' \\
	&\impliedby \Fairness(\lambda f^*_\tau + (1 - \lambda) f^*_{\tau'} ) \stackrel{?}{\geq} \lambda \tau + (1 - \lambda) \tau'.
\end{align*}
By definition, $f^*_\tau, f^*_\tau$ must be feasible for their corresponding problems, and so
\begin{align*}
	\Fairness( f^*_\tau    ) &\geq \tau \\
	\Fairness( f^*_{\tau'} ) &\geq \tau'.
\end{align*}
We thus want to determine whether
\begin{align*}
	&\Fairness(\lambda f^*_\tau + (1 - \lambda) f^*_{\tau'} ) \geq \lambda \tau + (1 - \lambda) \tau' \\
	&\impliedby \Fairness(\lambda f^*_\tau + (1 - \lambda) f^*_{\tau'} ) \geq \lambda \Fairness( f^*_\tau ) + (1 - \lambda) \Fairness( f^*_{\tau'} ) \\
	&\impliedby \Fairness \text{ concave. }
\end{align*}
Since $\Fairness$ is cost-sensitive, it is linear by Lemma \ref{lemm:cs-linear}, and hence concave.
The result thus follows.
\end{proof}

\begin{proof}[Proof of Proposition \ref{prop:frontier-cutoff}]
As argued in the body,
$\tau^* = \FairnessSymm( f^*_0; \DSens )$.
Thus, it remains to compute this quantity.
First, note that we may pick $f^*_0 = \indicator{ \eta( x ) > c }$ by Lemma \ref{lemm:cs-opt}.\footnote{The behaviour of optimal solutions at $\eta( x ) = c$ is unconstrained; however, different choices can in fact lead to different fairness values. Nonetheless, by picking a specific optimal solution, we are nonetheless guaranteed that picking $\tau^*$ as defined will yield zero risk.}
Next, observe that since $\Fairness$ is a cost-sensitive risk,
\begin{align*}
\Fairness( f; \DSens ) &= \Fairness( f; \DSens ) - \min_g \Fairness( g; \DSens ) + \min_g \Fairness( f; \DSens ) \\
&= \CSERR( f; \DSens, \bar{c} ) - \min_g \CSERR( g; \DSens, \bar{c} ) + \min_g \CSERR( f; \DSens, \bar{c} ) \\
&= \E{\X}{ ( c - \etaSens( \X ) ) \cdot ( f( \X ) - \indicator{ \etaSens( \X ) > c } ) } + \min_g \CSERR( f; \DSens, \bar{c} ) \text{ by Lemma \ref{lemm:cs-regret} }.
\end{align*}
Plugging in $f^*_0$, and applying the second statement of Lemma \ref{lemm:cs-regret},
\begin{align*}
\Fairness( f; \DSens ) &= \E{\X}{ | \etaSens( \X ) - c | \cdot \indicator{ ( \etaSens( \X ) - \bar{c} ) \cdot ( \eta( X ) - c ) < 0 } } + \min_g \CSERR( f; \DSens, \bar{c} ).
\end{align*}
Now we just apply Lemma \ref{lemm:cs-f-div} to the second term.
\end{proof}

\begin{proof}[Proof of Proposition \ref{prop:bregman}]
Observe that $f^*_0 \in \argmin{}{ \Perf( f; \D ) }$.
Consequently, the frontier may be re-written
$$ F( \tau ) = \regret( f^*_\tau; \D ) $$
where the \emph{regret} or \emph{excess risk} of a classifier is
$$ \regret( f; \D ) = \Perf( f; \D ) - \min_{g \colon \XCal \to [0, 1]} \Perf( g; \D ). $$
This lets us specify the form of $F( \cdot )$ analytically
when $\Perf$ is a cost-sensitive risk.
By Lemma \ref{lemm:cs-regret},
\begin{align*}
	F( \tau ) &= \E{\X \sim M}{ ( c - \eta( \X ) ) \cdot ( f^*_\tau( \X ) - \indicator{\eta( \X ) > c} )}.
\end{align*}
Now, since $f^*_\tau$ is the solution to a linear program by Lemma \ref{lemm:constrained-opt},
we can appeal to strong duality (see Appendix \ref{sec:lagrangian}) to conclude that there exists some $\lambda$ for which the corresponding soft-constrained version of the problem (Equation \ref{eqn:cs-fairness-lagrange})
has the same optimal value.
This means there is some Bayes-optimal classifier $f^*_\lambda$ to Equation \ref{eqn:cs-fairness-lagrange} for which
\begin{align*}
	F( \tau ) &= \E{\X \sim M}{ ( c - \eta( \X ) ) \cdot ( f^*_\lambda( \X ) - \indicator{\eta( \X ) > c} )}.
\end{align*}
Now, if additionally this $f^*_\lambda$ is deterministic,
then also by Lemma \ref{lemm:cs-regret},
$$	F( \tau ) = \E{\X \sim M}{ | \eta( \X ) - c | \cdot \indicator{ (\eta( \X ) - c) \cdot (2 f^*_\lambda( \X ) - 1) < 0} }. $$
Now just plug in the definition of $f^*_\lambda$ from Equations \ref{eqn:bayes-scorer-sens-unaware}, \ref{eqn:bayes-class-sens-unaware}.
%When $f^_\lambda$ is non-deterministic,
%we may split the expectation into a term for those $x \in \XCal$ for which it is deterministic -- for which we would get the above -- and another on the rest, where .
% when $f$ is deterministic,
% \begin{align*}
% 	\regret( f; \D ) %&= \mathbb{B}_{\varphi}( \eta( \X ), 2f( \X ) - 1 ) \\
% 	&= \E{\X \sim M}{ \left| \eta( \X ) - c \right| \cdot \indicator{( \eta( X ) - c ) \cdot ( 2f( \X ) - 1 ) < 0}}.
% \end{align*}
% %where $\mathbb{B}_{\varphi}( \cdot, \cdot )$ denotes a generative Bregman divergence.
% If we now further assume that the fairness measure is also a cost-sensitive risk,
% we can use in the results of the previous section to plug in a concrete value of $f^*_\tau$.
% In the demographic parity setting for example, we get
% \begin{align*}
% 	\regret( f; \D ) &= \mathbb{E}_{\X \sim M}\bigl[ \left| \eta( \X ) - c \right| \cdot \\
% 	&\, \indicator{( \eta( X ) - c ) \cdot ( \eta( \X ) - c - C( \tau ) \cdot ( \etaSens( \X ) - \piSens ) ) < 0} \bigl].
% \end{align*}
\end{proof}

%%%%%%%%%
\section{Helper results}

\begin{lemma}
\label{lemm:cs-linear}
Pick any distribution $\D$ and randomised classifier $\randClass$.
Then, for any cost parameter $c \in [0, 1]$,
\begin{align*}
	\CSERR( f; \D, c ) &= (1 - c) \cdot \pi + \E{\X}{ \left( c - \eta( \X ) \right) \cdot f( \X ) }
\end{align*}
where $\pi = \Pr( \Y = 1 )$, $\eta( x ) = \Pr( \Y = 1 \mid \X = x )$.
\end{lemma}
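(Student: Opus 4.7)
The plan is to unwind the definitions of the cost-sensitive risk, the FPR, and the FNR, and then apply Bayes' rule to convert the class-conditional expectations into marginal expectations weighted by $\eta(\X)$. No obstacle of substance arises; this is a direct computation of the kind routinely used when moving between class-conditional and joint parameterisations of binary classification risks.

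First, by Example \ref{example:cs} with $\PhiClass(u, v, p) = p(1-c)u + (1-p)cv$, together with the definitions in Equation \ref{eqn:fpr-fnr-rand}, I would write
\begin{align*}
\CSERR( f; \D, c ) &= \pi \cdot (1 - c) \cdot \FNR( f; \D ) + (1 - \pi) \cdot c \cdot \FPR( f; \D ) \\
&= (1 - c) \cdot \pi \cdot \E{\X \mid \Y = 1}{1 - f( \X )} + c \cdot (1 - \pi) \cdot \E{\X \mid \Y = 0}{f( \X )}.
\end{align*}

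Next, I would apply the standard importance-weighting identity
$\pi \cdot \E{\X \mid \Y = 1}{g(\X)} = \E{\X}{\eta(\X) \cdot g(\X)}$
and
$(1 - \pi) \cdot \E{\X \mid \Y = 0}{g(\X)} = \E{\X}{(1 - \eta(\X)) \cdot g(\X)}$,
which follow from the chain rule $\Pr(\X, \Y = y) = \Pr(\Y = y \mid \X)\Pr(\X)$. Substituting these gives
\begin{align*}
\CSERR( f; \D, c ) = \E{\X}{ (1 - c) \cdot \eta( \X ) \cdot (1 - f( \X )) + c \cdot (1 - \eta( \X )) \cdot f( \X ) }.
\end{align*}

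Finally, I would expand the integrand and collect terms in $f(\X)$. A line of algebra shows that the coefficient of $f(\X)$ simplifies to $c - \eta(\X)$, while the constant-in-$f$ term is $(1 - c) \cdot \eta(\X)$. Taking the expectation of the constant term and using $\E{\X}{\eta(\X)} = \pi$ yields the claimed identity
\begin{align*}
\CSERR( f; \D, c ) = (1 - c) \cdot \pi + \E{\X}{ (c - \eta( \X )) \cdot f( \X ) },
\end{align*}
completing the proof.
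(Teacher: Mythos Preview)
Your proposal is correct and follows essentially the same approach as the paper: start from the cost-sensitive risk as a weighted combination of class-conditional expectations, apply the Bayes identity $\pi \cdot \Pr(\X \mid \Y = 1) = \eta(\X) \cdot \Pr(\X)$ (and its analogue for $\Y = 0$) to pass to a single marginal expectation, then expand and collect terms in $f(\X)$, using $\E{\X}{\eta(\X)} = \pi$ for the constant piece. The paper's proof is line-for-line the same computation.
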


\begin{proof}[Proof of Lemma \ref{lemm:cs-linear}]
By definition,
\begin{align*}
\CSERR( f; \D, c ) &= (1 - c) \cdot \pi \cdot \E{\X \mid \Y = 1}{ 1 - f( \X ) } + c \cdot (1 - \pi) \cdot \E{\X \mid \Y = 0}{ f( \X ) } \\
&= \E{\X}{ (1 - c) \cdot \eta( \X ) \cdot (1 - f( \X )) + c \cdot (1 - \eta( \X )) \cdot f( \X ) } \\
&= \E{\X}{ (1 - c) \cdot \eta( \X ) } + \E{\X}{ \left( c \cdot (1 - \eta( \X )) - (1 - c) \cdot \eta( \X ) \right) \cdot f( \X ) } \\
&= (1 - c) \cdot \pi + \E{\X}{ \left( c \cdot (1 - \eta( \X )) - (1 - c) \cdot \eta( \X ) \right) \cdot f( \X ) } \\
&= (1 - c) \cdot \pi + \E{\X}{ \left( c - \eta( \X ) \right) \cdot f( \X ) }. %\\
%&= (1 - c) \cdot \pi + \sum_{x \in \XCal}{ m( x ) \cdot \left( c - \eta( x ) \right) \cdot f( x ) }.
\end{align*}
The second line is since $\Pr( \X \mid \Y = 1 ) \cdot \Pr( \Y = 1 ) = \Pr( \X ) \cdot \Pr( \Y = 1 \mid \X )$.
\end{proof}

\begin{lemma}
\label{lemm:cs-opt}
Pick any distribution $\D$ and cost parameter $c \in [0, 1]$.
Let
$$ ( \forall x \in \XCal ) \, s^*( x ) = \eta( x ) - c $$
where $\eta( x ) = \Pr( \Y = 1 \mid \X = x )$.
Then, any randomised classifier $f^*$ satisfying
$$ ( \forall x \in \XCal ) \, s^*( x ) \neq 0 \implies f^*( x ) = \indicator{ s^*( x ) > 0 } $$
minimises $\CSERR( f; \D, c )$.
\end{lemma}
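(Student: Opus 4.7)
The plan is to reduce the minimisation of $\CSERR(f;\D,c)$ over randomised classifiers $f \colon \XCal \to [0,1]$ to a pointwise problem on $\XCal$, using the linearity identity from Lemma \ref{lemm:cs-linear}.

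First, I would invoke Lemma \ref{lemm:cs-linear} to rewrite
$$ \CSERR(f;\D,c) = (1-c)\cdot\pi + \E{\X}{(c-\eta(\X))\cdot f(\X)}. $$
Since the leading term $(1-c)\cdot\pi$ does not depend on $f$, the minimisation reduces to minimising $\E{\X}{(c-\eta(\X))\cdot f(\X)}$ over measurable $f$ taking values in $[0,1]$.

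Next, I would observe that the integrand is, pointwise in $x$, a linear function of $f(x) \in [0,1]$ with slope $c - \eta(x) = -s^*(x)$. Minimising a linear function on the interval $[0,1]$ is elementary: the minimum is attained at $f(x)=1$ when the slope is negative (i.e.\ $s^*(x) > 0$), at $f(x)=0$ when the slope is positive (i.e.\ $s^*(x) < 0$), and at any point of $[0,1]$ when the slope vanishes (i.e.\ $s^*(x) = 0$). In the first two cases we may write the optimal value concisely as $f^*(x) = \indicator{s^*(x) > 0}$, matching the statement.

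Finally, I would argue that any $f^*$ satisfying the stated condition achieves the pointwise minimum almost everywhere, and hence the global minimum of the expectation, by linearity of expectation. No measurability subtleties arise, as $\eta$ is measurable by construction and the indicator of $\{s^* > 0\}$ is therefore measurable. There is no real obstacle here; the only mild subtlety is being careful to note that when $s^*(x) = 0$ the value of $f^*(x)$ is unconstrained, which is precisely what the hypothesis of the lemma allows.
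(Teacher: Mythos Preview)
Your proposal is correct and follows essentially the same approach as the paper: invoke Lemma~\ref{lemm:cs-linear} to drop the constant term, reduce the expectation to a pointwise minimisation of the linear function $-s^*(x)\cdot f(x)$ over $f(x)\in[0,1]$, and then do the obvious case analysis on the sign of $s^*(x)$. If anything, your write-up is slightly more careful about measurability and the unconstrained case than the paper's own proof.
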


\begin{proof}[Proof of Lemma \ref{lemm:cs-opt}]
By Lemma \ref{lemm:cs-linear},
we need to find,
for each $x \in \XCal$
$$ \min_{f( x ) \in [0, 1]} ( c - \eta( x ) ) \cdot f( x ) = \min_{f( x ) \in [0, 1]} -s^*( x ) \cdot f( x ), $$
observing that the minimisation may be done pointwise.
Clearly, it is optimal to predict $f^*( x ) = 1$ when $s^*( x ) > 0$, and $f^*( x ) = 0$ when $\eta( x ) < c$.
When $\eta( x ) = c$, any prediction is optimal.
\end{proof}

\begin{lemma}
\label{lemm:cs-regret}
Pick any distribution $\DFull$ and cost parameter $c \in [0, 1]$.
Then,
for any randomised classifier $f$,
$$ \CSERR( f; \D, c ) - \min_{g \colon \XCal \to [0, 1]} \CSERR( g; \D, c ) = \E{\X}{ ( c - \eta( \X ) ) \cdot ( f( \X ) - \indicator{ \eta( \X ) > c } ) }. $$
If further $f \in \{ 0, 1 \}^{\XCal}$,
$$ \CSERR( f; \D, c ) - \min_{g \colon \XCal \to [0, 1]} \CSERR( g; \D, c ) = \E{\X}{ | \eta( \X ) - c | \cdot \indicator{ (\eta( \X ) - c) \cdot (2 f( x ) - 1 ) < 0 } }. $$
\end{lemma}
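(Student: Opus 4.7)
The plan is to derive both identities by directly invoking the two helper lemmas immediately preceding this one, followed by a short case analysis for the deterministic refinement.

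First, I would apply Lemma \ref{lemm:cs-linear} to write
$$ \CSERR(f; \D, c) = (1 - c)\pi + \E{\X}{(c - \eta(\X)) \cdot f(\X)}. $$
By Lemma \ref{lemm:cs-opt}, a valid minimiser is $g^*(x) = \indicator{\eta(x) > c}$ (since at $\eta(x) = c$ any value is optimal, and this choice is measurable). Applying Lemma \ref{lemm:cs-linear} again to $g^*$ gives
$$ \min_{g} \CSERR(g; \D, c) = (1 - c)\pi + \E{\X}{(c - \eta(\X)) \cdot \indicator{\eta(\X) > c}}. $$
Subtracting, the $(1-c)\pi$ terms cancel and the first claim follows immediately by linearity of expectation.

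For the second identity, assume $f \in \{0,1\}^{\XCal}$. The key observation is that the summand $(c - \eta(x)) \cdot (f(x) - \indicator{\eta(x) > c})$ is nonzero only when $f$ disagrees with $\indicator{\eta > c}$. I would proceed by case analysis on the sign of $\eta(x) - c$ (the case $\eta(x) = c$ contributes zero and can be absorbed into either branch): if $\eta(x) > c$ and $f(x) = 0$, the summand equals $\eta(x) - c = |\eta(x) - c|$; if $\eta(x) < c$ and $f(x) = 1$, the summand equals $c - \eta(x) = |\eta(x) - c|$; and in the two agreement cases the summand vanishes. Thus
$$ (c - \eta(x))(f(x) - \indicator{\eta(x) > c}) = |\eta(x) - c| \cdot \indicator{\text{disagreement}}. $$

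Finally, I would verify that the disagreement event coincides with $(\eta(x) - c)(2f(x) - 1) < 0$: when $f \in \{0, 1\}$, the factor $2f(x) - 1 \in \{-1, +1\}$ records the sign of the prediction, so the product is negative exactly when $\eta(x) - c$ and $2f(x) - 1$ have opposite signs, i.e.\ exactly in the two disagreement cases above. Plugging this equivalent indicator in and taking expectations yields the second identity. No step presents any real obstacle; the only minor care required is the treatment of the measure-zero set $\{\eta(x) = c\}$, which is handled automatically by the choice of Bayes classifier from Lemma \ref{lemm:cs-opt}.
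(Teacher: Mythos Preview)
Your proposal is correct and follows essentially the same approach as the paper: invoke Lemma~\ref{lemm:cs-opt} to identify the minimiser $\indicator{\eta(x) > c}$, apply Lemma~\ref{lemm:cs-linear} to both $f$ and the minimiser, subtract, and then handle the deterministic case by the same sign-disagreement case analysis. Your write-up is in fact more explicit than the paper's, which compresses the case analysis into the observation that $f(x) - \indicator{\eta(x) > c} = \sign(c - \eta(x))$ on the disagreement set and then uses $|z| = z \cdot \sign(z)$.
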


\begin{proof}[Proof of Lemma \ref{lemm:cs-regret}]
By Lemma \ref{lemm:cs-opt}, an optimal classifier for $\Perf( g; \D )$ is the deterministic $f^*( x ) = \indicator{ \eta( x ) > c }$.
Thus, plugging this into Lemma \ref{lemm:cs-linear},
$$ \CSERR( f; \D, c ) - \min_{g \colon \XCal \to [0, 1]} \CSERR( g; \D, c ) = \E{\X}{ \left( c - \eta( \X ) \right) \cdot \left( f( \X ) - \indicator{ \eta( \X ) > c } \right) }. $$
The second statement follows from a simple case analysis.
The difference $f( x ) - \indicator{ \eta( x ) > c }$ takes on the value $+1$ when $f( x ) = 1$ and $\eta( x ) < c$, and $-1$ when $f( x ) = 0$ and $\eta( x ) > c$,
\ie the value $\mathrm{sign}( c - \eta( x ) )$ when $2 f - 1$ and $\eta( x ) - c$ disagree in sign.
Since $| z | = z \cdot \mathrm{sign}( z )$, the result follows.
\end{proof}

\begin{lemma}
\label{lemm:cs-f-div}
Pick any distribution $\DFull$ and cost parameter $c \in [0, 1]$.
Then,
$$ \min_{\randClassifier} \CSERR( f; \D, c ) = -\mathbb{I}_\varphi( \Pr( \X \mid \Y = 1 ), \Pr( \X \mid \Y = 0 ) ) $$
where
$\mathbb{I}_f( \cdot, \cdot )$ denotes the $f$-divergence between distributions,
and
$$ \varphi( t ) = -\min\left( (1 - c) \cdot {\pi \cdot t}, c \cdot {(1 - \pi)} \right). $$
\end{lemma}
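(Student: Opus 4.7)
The plan is to show that the pointwise minimiser of $\CSERR(f; \D, c)$ can be written as a min of two distribution-weighted quantities, and that this matches the $f$-divergence expression on the right-hand side.

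First, I would expand $\CSERR(f; \D, c)$ directly from its definition (or equivalently use Lemma \ref{lemm:cs-linear}) as
$$ \CSERR(f; \D, c) = \E{\X \sim P_0}{ c \cdot (1 - \pi) \cdot f(\X) } + \E{\X \sim P_1}{ (1 - c) \cdot \pi \cdot (1 - f(\X)) }, $$
where $P_y = \Pr(\X \mid \Y = y)$. Combining the two expectations against, say, $P_0$ (using a Radon--Nikodym derivative where needed), the integrand becomes linear in $f(x)$ with slope $c(1-\pi) - (1-c)\pi \cdot \frac{dP_1}{dP_0}(x)$ and intercept $(1-c)\pi \cdot \frac{dP_1}{dP_0}(x)$. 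Since this is linear in $f(x) \in [0,1]$ and the minimisation decouples across $x$, the optimal choice is $f(x) = 0$ when the slope is nonnegative and $f(x) = 1$ otherwise.

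Next, I would compute the resulting pointwise minimum value. Using the elementary identity $a + \min(0, b - a) = \min(a, b)$, the minimiser yields integrand $\min\!\left((1-c) \cdot \pi \cdot \frac{dP_1}{dP_0}(x),\ c \cdot (1-\pi)\right)$ at each $x$. Integrating back against $P_0$ gives
$$ \min_{\randClassifier} \CSERR(f; \D, c) = \int \min\!\left( (1 - c) \cdot \pi \cdot \frac{dP_1}{dP_0}(x),\ c \cdot (1 - \pi) \right) dP_0(x). $$

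Finally, I would match this to the stated $f$-divergence. By definition of $\mathbb{I}_\varphi$ and with $\varphi(t) = -\min((1-c)\pi t, c(1-\pi))$, we have $\mathbb{I}_\varphi(P_1, P_0) = \int \varphi(dP_1/dP_0) \, dP_0$, which is exactly the negation of the display above, giving $\min_f \CSERR(f; \D, c) = -\mathbb{I}_\varphi(P_1, P_0)$. The only subtlety is handling the case where $P_1$ is not absolutely continuous with respect to $P_0$ (the usual convention for $f$-divergences, using the recession function of $\varphi$, covers this); this is the main obstacle to a fully rigorous measure-theoretic argument, but in the discrete setting considered throughout the paper it is immediate from the pointwise calculation.
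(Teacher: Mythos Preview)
Your argument is correct. It differs from the paper's primary route: the paper first invokes \citet[Theorem 9]{Reid:2011}, which identifies the Bayes risk of a proper loss with an $f$-divergence, and only then verifies the formula by a direct computation. You bypass the proper-loss machinery entirely and go straight to the pointwise minimisation, which is more elementary and self-contained. The paper's verification step is essentially your computation rephrased in terms of $\eta(\X)$ and the marginal $M$ (using $\pi \cdot p(x)/m(x) = \eta(x)$) rather than $dP_1/dP_0$ against $P_0$; these are equivalent parametrisations of the same integral. What the paper's approach buys is the connection to the broader loss/divergence correspondence, which situates the result; what your approach buys is that no external theorem is needed. Your caveat about absolute continuity is apt and is something the paper does not address explicitly.
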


\begin{proof}[Proof of Lemma \ref{lemm:cs-f-div}]
This follows from \citet[Theorem 9]{Reid:2011}, applied as follows.
Let $f^*( x ) = \indicator{ \eta( x ) > c } + \frac{1}{2} \cdot \indicator{ \eta( x ) = c }$,
which is an optimal classifier for $\CSERR( f; \D, c )$
by Lemma \ref{lemm:cs-opt}.
Then,
$$ \CSERR( f^*; \D, c ) = \E{(\X, \Y) \sim \D}{ \ell( \Y, \eta( \X ) ) } $$
where $\ell$ is the \emph{cost-sensitive loss} given by
\begin{align*}
	\ell( 1, v ) &= (1 - c) \cdot \left( \indicator{ v < c } + \frac{1}{2} \indicator{v = c} \right) \\
	\ell( 0, v ) &= c 		\cdot \left( \indicator{ v > c } + \frac{1}{2} \indicator{v = c} \right).
\end{align*}
Now, $\ell$ is \emph{proper} in the sense of \citet{Reid:2010}.
Consequently,
$$ \E{(\X, \Y) \sim \D}{ \ell( \Y, \eta( \X ) ) } = \min_{\hat{\eta} \colon \XCal \to [0, 1]} \E{(\X, \Y) \sim \D}{ \ell( \Y, \hat{\eta}( \X ) ) }. $$
The right hand side above is the \emph{Bayes-risk} for the proper loss $\ell$ in the sense of \citet{Reid:2011}.
Consequently,
by \citet[Theorem 9]{Reid:2011},
$$ \E{(\X, \Y) \sim \D}{ \ell( \Y, \eta( \X ) ) } = - \mathbb{I}_\varphi( \Pr( \X \mid \Y = 1 ), \Pr( \X \mid \Y = 0 ) ), $$
where
\begin{align*}
	%\underbar{L}( t ) &= \min( (1 - c) \cdot t, c \cdot (1 - t) ) \\
	\varphi( t ) &= -\min\left( (1 - c) \cdot {\pi \cdot t}, c \cdot {(1 - \pi)} \right). 
\end{align*}
This may be verified easily, since
\begin{align*}
	\CSERR( f^*; \D, c ) &= \E{\X}{ (1 - c) \cdot \eta( \X ) \cdot \left( \indicator{ \eta( x ) < c } + \frac{1}{2} \cdot \indicator{ \eta( x ) = c } \right) + c \cdot (1 - \eta( \X )) \cdot \left( \indicator{ \eta( x ) > c } + \frac{1}{2} \cdot \indicator{ \eta( x ) = c } \right) } \\
	&= \E{\X}{ \min( (1 - c) \cdot \eta( \X ), c \cdot (1 - \eta( \X ) ) ) },
\end{align*}
while,
if $P = \Pr( \X \mid \Y = 1 ), Q = \Pr( \X \mid \Y = 0 )$
with densities $p, q$,
\begin{align*}
	-\mathbb{I}_\varphi( P, Q ) &= -\E{\X \sim Q}{ \varphi\left( \frac{p(\X)}{q(\X)} \right) } \\
	&= \E{\X \sim Q}{ \min\left( (1 - c) \cdot {\pi \cdot \frac{p(\X)}{q(\X)}}, c \cdot {(1 - \pi)} \right) } \\
	&= \E{\X \sim M}{ \min\left( (1 - c) \cdot \pi \cdot \frac{p(\X)}{m(\X)}, c \cdot (1 - \pi) \cdot \frac{q(\X)}{m(\X)} \right) } \\
	&= \E{\X \sim M}{ \min\left( (1 - c) \cdot \eta( \X ), c \cdot (1 - \eta( \X )) \right) } \\
	&= \CSERR( f^*; \D, c ).
\end{align*}
\end{proof}

\begin{corollary}
\label{corr:bayes-sens-aware-eoo}
Pick any distribution $\DFull$, costs $c, \bar{c} \in [0, 1]$, and $\lambda \in \Real$.
Let
\begin{align*}
 s^*( x, 0 ) &= \left( 1 + \lambda \cdot \pi^{-1} \cdot \bar{c} \right) \cdot \eta( x, 0 ) - c \\
 s^*( x, 1 ) &= \left( 1 - \lambda \cdot \pi^{-1} \cdot (1 - \bar{c}) \right) \cdot \eta( x, 1 ) - c
\end{align*}
where $\eta( x, \bar{y} ) = \Pr( \Y = 1 \mid \X = x, \YSens = \bar{y} )$
Then,
\begin{align*}
	\argmin{f \in [0,1]^{\XCal}}{ R( f; \D, \DEOO, c, \bar{c}, \lambda ) } = \bigl\{ &f^* \mid (\forall x \in \XCal) \, s^*( x ) \neq 0 \implies f^*( x ) = \indicator{ s^*( x ) > 0} \bigl\}.
\end{align*}

% the optimal classifier for the problem
% $$ \min_{\randClassifier} \CSERR( f; \D, c ) - \lambda \cdot \CSERR( f; \DEOO, \bar{c} ), $$
% where $((\X, \YSens), \Y) \sim \D$,
% satisfies
% \begin{align*}
%  f^*( x, 0 ) = 1 &\iff \left( 1 + \lambda \cdot \pi^{-1} \cdot \bar{c} \right) \cdot \eta( x, 0 ) > c \\
%  f^*( x, 1 ) = 1 &\iff \left( 1 - \lambda \cdot \pi^{-1} \cdot (1 - \bar{c}) \right) \cdot \eta( x, 1 ) > c.
% \end{align*}
% % with objective being a cost-sensitive risk
% \begin{align*}
% \Perf( f ) &= \pi \cdot (1 - c) \cdot \ES{\X, \YSens \mid \Y = 1}{ \indicator{ f(\X, \YSens) = 0 } } + (1 - \pi) \cdot c \cdot \ES{\X, \YSens \mid \Y = 0}{ \indicator{ f(\X, \YSens) = 1 } }
% \end{align*}
% and fairness measure being a balanced risk
% \begin{align*}
% 	\Fairness( f ) &= \ES{\X \mid \YSens = 1, \Y = 1}{ \indicator{ f(\X, 1) = 0 } } + \ES{\X \mid \YSens = 0, \Y = 1}{ \indicator{ f(\X, 0) = 1 } }.
% \end{align*}
% satisfies
% \begin{align*}
%  f^*( x, 0 ) = 1 &\iff \left( 1 + {\lambda} \cdot p_{01}^{-1} \right) \cdot \eta( x, 0 ) > c \\
%  f^*( x, 1 ) = 1 &\iff \left( 1 - {\lambda} \cdot p_{11}^{-1} \right) \cdot \eta( x, 1 ) > c.
% \end{align*}
%Here, $\eta( x, \bar{y} ) = \Pr( \Y = 1 \mid \X = x, \YSens = \bar{y} )$,
%and
%$p_{\bar{y} y} = \Pr(\Y = y, \YSens = \bar{y})$.
\end{corollary}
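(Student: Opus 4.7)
The plan is to mirror the proof of Corollary \ref{corr:bayes-sens-aware}, reducing the claim to Proposition \ref{prop:bayes-sens-unaware-eoo} by treating the sensitive feature as one of the input coordinates. Concretely, I would augment the instance to $\bar{x} = (x, \bar{y}) \in \XCal \times \Labels$ and apply Proposition \ref{prop:bayes-sens-unaware-eoo} with the augmented instance space in place of $\XCal$.

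The key calculation is the form that $\etaEOO$ takes on the augmented instance space. Since $\YSens$ is now part of the instance, we have
\begin{align*}
\etaEOO(x, \bar{y}) &= \Pr(\YSens = 1 \mid \X = x, \YSens = \bar{y}, \Y = 1) = \indicator{\bar{y} = 1}.
\end{align*}
So $\etaEOO(x,0) - \bar{c} = -\bar{c}$ and $\etaEOO(x,1) - \bar{c} = 1-\bar{c}$, and the score function $s^*$ from Proposition \ref{prop:bayes-sens-unaware-eoo} splits into the two displayed expressions: plugging $\bar{y}=0$ gives
\[ s^*(x,0) = \bigl(1 - \lambda \cdot \pi^{-1} \cdot (-\bar{c})\bigr)\cdot \eta(x,0) - c = \bigl(1 + \lambda \cdot \pi^{-1} \cdot \bar{c}\bigr)\cdot \eta(x,0) - c, \]
and $\bar{y}=1$ gives the second expression directly. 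The characterisation of the argmin as all $f^*$ that threshold $s^*$ at zero (with arbitrary behaviour on the zero set) then transfers verbatim from Proposition \ref{prop:bayes-sens-unaware-eoo}.

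The one thing to be mildly careful about is that $\eta(x,\bar{y})$ in the statement here refers to $\Pr(\Y=1 \mid \X=x, \YSens=\bar{y})$, which is exactly the class-probability function on the augmented space, so no relabelling is needed. I do not expect any real obstacle; the proof is a one-line substitution once the observation that $\etaEOO$ becomes an indicator on the augmented space is noted. The only delicate point is conceptual rather than technical: verifying that when the sensitive feature is part of $\X$, the conditional $\Pr(\YSens=1 \mid \X=x, \YSens=\bar{y}, \Y=1)$ is well-defined and equals $\indicator{\bar{y}=1}$ (as opposed to something degenerate), which holds because $\YSens$ is $\sigma(\X)$-measurable in the augmented setup.
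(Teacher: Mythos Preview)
Your proposal is correct and follows exactly the paper's approach: the paper's proof is the one-line instruction ``Plug in $\etaSens(x,\bar{y}) = \indicator{\bar{y} = 1}$ into Proposition \ref{prop:bayes-sens-unaware-eoo},'' and you have simply spelled out that substitution in full. Your additional remarks about measurability and the identification of $\eta(x,\bar{y})$ with the augmented class-probability are sound but more than the paper itself provides.
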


\begin{proof}[Proof of Corollary \ref{corr:bayes-sens-aware-eoo}]
Plug in $\etaSens( x, \bar{y} ) = \indicator{\bar{y} = 1}$ into Proposition \ref{prop:bayes-sens-unaware-eoo}.
\end{proof}
% Verify first that if $f^* = \argmin{f}{ \Perf( f ) }$,
% $$ f^*( x, 0 ) = \indicator{ \Pr( \Y = 1 \mid \X = x, \YSens = 0 ) > c} $$
% as expected.

%%%%%%%%%
\section{Symmetrised fairness when $\piSens \neq \half$}
\label{sec:asymm-fairness}

The presentation of symmetrised in the body
noted that one needs to guard against anti-classifiers.
However, the proposal
$$	\FairnessSymm( f; \DSens ) = \Fairness( f; \DSens ) \land \Fairness( 1 - f; \DSens ) $$
is maximised when using the constant classifier $f = \half$.
It is preferable for the fairness measure to instead be maximised
when $f = \piSens$,
which would be the optimal prior prediction before we get a chance to look at the data.
In fact, this is easy to achieve by modifying 
$$	\FairnessSymm( f; \DSens ) = ( (1 - \alpha) \cdot \Fairness( f; \DSens ) ) \land ( \alpha \cdot \Fairness( 1 - f; \DSens ) ) $$
where $\alpha = \Fairness( \piSens; \DSens )$,
so that we asymmetrically penalise raw fairness scores below and above those of $\Fairness( \piSens; \DSens )$.
Note that this would simply introduce additional asymmetric scalings into our results,
\eg for bounds relating the symmetrised disparate impact to a cost-sensitive risk (Lemma \ref{lemm:di-cost-sensitive}).

%%%%%%%%%
\section{Relating the constrained and unconstrained objectives}
\label{sec:lagrangian}

Consider the constrained version of the fairness problem,
$$ f^* \in \argmin{f \in [0, 1]^{\XCal}}{\Perf( f; \D )} \colon \FairnessSymm( f; \DSens ) \geq \tau. $$
By Lemma \ref{lemm:constrained-opt}, for finite $\XCal$, this is expressible as the solution to a linear program
\begin{align*}
	\min_{f \in \FCal} a^T f
\end{align*}
where
$$ \FCal = \{ f \mid b^T f \in [ \tau,  1 - \tau ], 0 \leq f( x ) \leq 1 \} $$
and
\begin{align*}
	( \forall x \in \XCal ) \, a( x ) &\defEq m( x ) \cdot \left( c - \eta( x ) \right) \\
	( \forall x \in \XCal ) \, b( x ) &\defEq m( x ) \cdot \left( \bar{c} - \etaSens( x ) \right).
\end{align*}
Now, by strong duality for linear programs\footnote{This implicitly assumes feasibility of the primal problem, \ie that we pick $\tau$ such that it is possible to find a randomised classifier with symmetrised fairness at least $\tau$.}, we have
\begin{equation}
	\label{eqn:strong-dual}
	\min_{f \in \FCal} a^T f = \max_{\lambda_1, \lambda_2 \geq 0} \left( \min_{f \in [ 0, 1 ]^{\XCal}} ( a - \lambda_1 b + \lambda_2 b)^T f \right )+ \lambda_1 \tau - \lambda_2 (1 - \tau).
\end{equation}
Observe now that the inner optimisation is
\begin{align*}
	&\min_{f \in [ 0, 1 ]^{\XCal}} ( a - \lambda_1 b + \lambda_2 b)^T f \\
	&= \min_{f \in [ 0, 1 ]^{\XCal}} ( a - (\lambda_1 - \lambda_2) b )^T f \\
	&= \min_{f \in [ 0, 1 ]^{\XCal}} \sum_{x \in \XCal} m( x ) \cdot \left[ c - \eta( x ) - (\lambda_1 - \lambda_2) \left( \bar{c} - \etaSens( x ) \right) \right] \cdot f( x ) \\
	&= \min_{f \in [ 0, 1 ]^{\XCal}} \E{\X}{ \left( c - \eta( \X ) - (\lambda_1 - \lambda_2) \left( \bar{c} - \etaSens( \X ) \right) \right) \cdot f( \X ) } \\
	&= \min_{f \in [ 0, 1 ]^{\XCal}} \CSERR( f; \D, c ) - ( \lambda_1 - \lambda_2 ) \cdot \CSERR( f; \DSens, \bar{c} ).
\end{align*}
That is, we solve Equation \ref{eqn:cs-fairness-lagrange} for $\lambda = \lambda_1 - \lambda_2$.
By sweeping over $\lambda$,
we can thus in principle find the one which achieves the highest value of the objective in Equation \ref{eqn:strong-dual},
and consequently find the solution to the constrained problem for a fixed $\tau$.

Note that strong duality guarantees agreement of the objective functions.
In general, it does not mean that every optimal solution to the inner problem (for optimal $\lambda_1, \lambda_2$)
will also be optimal for the original constrained problem.
As an extreme case, suppose that $\etaSens = \eta$, and $c = \bar{c}$.
Then, the constrained problem has optimal solution any $f$ for which $\CSERR( f; \DSens, \bar{c} ) = \tau$,
so that the frontier is linear.
On the other hand, we will find that for the optimal $\lambda_1, \lambda_2$,
the inner optimisation is simply of the constant $0$,
in which case every $f$ is deemed optimal.

%%%%%%%%%
\section{A survey of fairness in philosophy and welfare economics}
\label{sec:philosophy}

Fairness lies at the heart of justice, which is ``the first virtue of
social institutions'' \citep[p. 586]{Rawls:1971aa}.
But what is fairness? Rawls develops a theory of fairness
utilizing the ``veil of ignorance'' whereby a person's position in society
is held to be unknown while designing the rules of a just society.  The
analogue in machine learning is that membership of a given category should
not be used, one way or another, in determining outcomes for an individual.
Analogous theories such as that of \citep{Harsanyi:1955aa} and
\citep{Sen:2009aa} differ in what ignorance means (uniform prior over what
role one has, or the perspective of a separate impartial observer). In all
cases, the general idea is that a just outcome should not depend (either
way) on membership of a particular category, but should focus upon the
individual; \cite{Rawls:1971aa} motivates his approach by saying
``utilitarianism does not take seriously the distinction between persons''
(page 26) because it is only concerned with \emph{average} welfare.  While
these ideas have had profound impact on political philosophy, their
translation into mathematical theories is lacking. With few exceptions
\citep{Bimore:1994aa,Binmore:2005aa}, there is little formal utilitarian
literature that grapples with fairness.

Rawls argues the advantages of ``pure procedural justice'' where the
attainment of justice (hence fairness) is a consequence of the process
followed, not the outcome obtained. This is, of course, taken for granted
in machine learning and statistics, where one analyses a procedure, not the
results of the procedure on a given set of data. Taking this principle
seriously means that protected attributes have to be identified ahead of
time, and not after the fact because you did not like the outcome; confer
the analogous problem in designing electoral districts: getting the process
right makes the problem straightforward \citep{Vickrey:1961aa}; attempting
to judge fairness \emph{a posteriori} is a mess, with infinite arguments
possible about whether a boundary is ``bizarre'' \citep{Chambers:2010aa}.

The theory we present here 
follows the precept of \citet[Chapter 18]{Sen:2009aa} that mere
identification of ``fully just social arrangements is neither necessary
nor sufficient.'' We embrace Sen's pragmatism by focussing on the
\emph{quantifiable tradeoffs} one might make to approach (certain notions
of) fairness and hence
justice. Rawls acknowledges that there will be tradeoffs between overall
social utility and fairness (pp37ff) but tries to argue what the ``right''
tradeoff is. We do not try to solve that question, believing instead there
is unlikely to be a universal right tradoff, and instead focus upon merely
quantifying what the tradeoffs might be. Focussing upon quantifying the
traedoff between utility and fairness has only very recently drawn
attention in the machine learning literature \citep{Johnson:2016aa}.

The approaches to fairness in the machine learning literature, which we
follow in this paper, focusses on the notion of a protected attribute, and
assumes that both its choice is manifest, and indeed it is a sensible
categorisation (e.g. notions of race). One should take care with this
though, because any
such categories to which people are assigned to are not
intrinsic to the world, but another choice that we make
\citep{Lakoff:1987aa}, which can be highly ambiguous and contested
\citep{Bowker:1999aa}. 

Singling out \emph{particular} attributes to
be protected (as opposed to the rather more sweeping requirements of Rawls'
full theory that requires no specific attributes be taken into account) 
opens the door to the problems inherent in the ``ecological
fallacy'' \citep{Kramer:1983aa} --- making inferences about individuals
based on membership in a category, which is precisely what some argue one
should not do \citep{Lippert-Rasmussen:2011aa}.

Indeed the very notions of fairness studied in this paper
glaringly fails the test against ``discrimination'' if one adopts a
standard definition (e.g. from Wikipedia):
\begin{quotation}
	In human social affairs, discrimination is treatment or
	consideration of, or making a distinction in favour or against a
	person or thing based on the group, class, or category to which
	that person or thing is perceived to belong rather than on
	individual merit.
\end{quotation}
By that definition, the only non-discriminatory approach is to ignore the
protected attributes \emph{entirely} and take the outcome as it comes.
Reconciling this tension remains a challenge!

%%%%%%%%%
\section{Relating disparate impact and balanced error}
\label{sec:di-ber}

Following \citet{Feldman:2015}, we explore the relationship between the balanced error and disparate impact.
Intuitively, we expect that when the balanced error of a classifier is low
-- meaning that the classifier accurately predicts the sensitive variable --
we will have disparate impact.
Conversely, we might hope that possessing disparate impact implies a low balanced error.
Can we formalise a relationship akin to Lemma \ref{lemm:cv-ber}?

We have the following relations between the two quantities.
In what follows, let
$$ \BERSens = \frac{\FPRSens + \FNRSens}{2}. $$

\begin{lemma}
\label{lemm:ber-di-trivial}
Pick any distribution $\DSens$
and randomised classifier $\randClassifier$
with $\FNRSens \neq 1$.
Then,
\begin{equation}
	\label{eqn:di-to-ber}
	\begin{aligned}
		\DISens &= \frac{\FPRSens}{1 - 2 \cdot \BERSens + \FPRSens} \\
		&= \frac{2 \cdot \BERSens - \FNRSens}{1 - \FNRSens},
	\end{aligned}
\end{equation}
and similarly
\begin{equation}
	\label{eqn:ber-to-di}
	\begin{aligned}
		\BERSens &= \frac{1}{2} \cdot \FNRSens + \frac{1}{2} \cdot (1 - \FNRSens) \cdot \DISens \\
		&= \frac{1}{2} \cdot \FPRSens + \frac{1}{2} \cdot \left( 1 - \frac{\FPRSens}{\DISens} \right).
	\end{aligned}
\end{equation}
\end{lemma}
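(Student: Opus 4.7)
The plan is to derive both displays as direct algebraic consequences of the definitions given earlier, namely $\DI(f;\DSens) = \FPR(f;\DSens)/(1-\FNR(f;\DSens))$ from Equation \ref{eqn:di-fpr} together with the definition $\BER(f;\DSens) = (\FPR(f;\DSens)+\FNR(f;\DSens))/2$. Throughout I will abbreviate $u = \FPR(f;\DSens)$, $v = \FNR(f;\DSens)$, and $\beta = \BER(f;\DSens)$, so that $\DISens = u/(1-v)$ and $\beta = (u+v)/2$; the assumption $v \neq 1$ ensures that $\DISens$ is well defined.

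For Equation \ref{eqn:di-to-ber}, I would start from the identity $u = 2\beta - v$, which is an immediate rearrangement of the definition of $\beta$. Substituting this into $u/(1-v)$ gives the second equality $\DISens = (2\beta - v)/(1-v)$. For the first equality, I would instead use $v = 2\beta - u$, so that $1 - v = 1 - 2\beta + u$, and substitute into $u/(1-v)$ to obtain $\DISens = u/(1-2\beta + u)$.

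For Equation \ref{eqn:ber-to-di}, I would invert the definition of $\DI$: the relation $\DISens = u/(1-v)$ gives $u = \DISens \cdot (1-v)$. Substituting into $\beta = (u+v)/2$ yields
\[ \beta = \tfrac{1}{2}v + \tfrac{1}{2}(1-v)\cdot \DISens, \]
which is the first claimed formula. For the second, I would instead use $1-v = u/\DISens$, so that $v = 1 - u/\DISens$; substituting into $\beta = (u+v)/2$ produces $\beta = \tfrac{1}{2}u + \tfrac{1}{2}(1 - u/\DISens)$, as desired.

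There is no real obstacle here: the statement amounts to four equivalent reformulations of the same two-variable system $\{u = \DISens(1-v),\ 2\beta = u+v\}$, solved alternately for $\DISens$ in terms of $(\beta,u)$ or $(\beta,v)$, and for $\beta$ in terms of $(\DISens,u)$ or $(\DISens,v)$. The only care needed is to flag the hypothesis $v \neq 1$ (so division by $1-v$ is legitimate) and, for the final form, to note that $\DISens \neq 0$ whenever $u \neq 0$; this is all the verification the lemma requires.
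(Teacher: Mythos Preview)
Your proposal is correct and is essentially the same approach as the paper's own proof, which simply notes that all four identities are trivial consequences of the defining relation $\FPRSens = (1 - \FNRSens) \cdot \DISens$ together with $\BERSens = (\FPRSens + \FNRSens)/2$. You have spelled out the substitutions in more detail than the paper does, but there is no methodological difference.
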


\begin{proof}[Proof of Lemma \ref{lemm:ber-di-trivial}]
These are trivial consequences of the fact that, by definition of $\DISens$ (Equation \ref{eqn:di-fpr}),
$$ \FPRSens = (1 - \FNRSens) \cdot \DISens. $$
\end{proof}

We now turn to relating a bound on the balanced error to a bound on the disparate impact factor.
The following is a minor generalisation of \citet[Theorem 4.1]{Feldman:2015}
to account for disparate impact at any level.

\begin{lemma}
\label{lemm:ber-di}
Pick any distribution $\DSens$
and randomised classifier $\randClassifier$
with $\FNRSens \neq 1$.
Then, for any $\epsilon \in [0, \frac{1}{2}]$,
\begin{align*}
	\BERSens \leq \epsilon \iff \DISens &\leq \frac{\FPRSens}{1 - 2\cdot\epsilon + \FPRSens} \land \frac{2 \cdot \epsilon - \FNRSens}{1 - \FNRSens},
\end{align*}
and for any $\tau \in [0, 1]$,
$$ \DISens \leq \tau \iff \BERSens \leq \left( \frac{\tau}{2} + \frac{1 - \tau}{2} \cdot \FNRSens \right) \land \left( \frac{1}{2} - \frac{1 - \tau}{2 \cdot \tau} \cdot \FPRSens \right). $$
\end{lemma}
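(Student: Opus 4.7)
The plan is to reduce both bi-implications to the exact identities already furnished by Lemma \ref{lemm:ber-di-trivial}, and then use elementary monotonicity of the relevant expressions. The two identities
$\DISens = \FPRSens/(1 - 2\BERSens + \FPRSens)$ and $\DISens = (2\BERSens - \FNRSens)/(1 - \FNRSens)$
express $\DISens$ as a function that is monotone non-decreasing in $\BERSens$ (with $\FPRSens$, $\FNRSens$ held fixed), and the dual identities in Equation \ref{eqn:ber-to-di} express $\BERSens$ as a function monotone non-decreasing in $\DISens$. Since each side of the claimed bi-implications is just these functions evaluated at the boundary $\BERSens = \epsilon$ (respectively $\DISens = \tau$), the proof consists of substituting and simplifying.

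For the first statement, I would argue the forward direction by plugging $\BERSens \leq \epsilon$ into the first identity: decreasing $\BERSens$ shrinks the numerator fraction only through the denominator $1 - 2\BERSens + \FPRSens$, which grows, so $\DISens \leq \FPRSens/(1 - 2\epsilon + \FPRSens)$. The same substitution in the second identity yields $\DISens \leq (2\epsilon - \FNRSens)/(1 - \FNRSens)$ using $\FNRSens \neq 1$. Taking the conjunction gives the upper bound by the minimum. The converse is even easier: expanding either bound on $\DISens = \FPRSens/(1 - \FNRSens)$ and clearing the (positive) denominator $1 - \FNRSens$ collapses the inequality to $\FPRSens + \FNRSens \leq 2\epsilon$, i.e.\ $\BERSens \leq \epsilon$.

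For the second statement, I would proceed symmetrically using Equation \ref{eqn:ber-to-di}. From $\BERSens = \tfrac{1}{2}\FNRSens + \tfrac{1}{2}(1 - \FNRSens)\DISens$ we get the bound $\BERSens \leq \tfrac{\tau}{2} + \tfrac{1-\tau}{2}\FNRSens$ directly from $\DISens \leq \tau$ (since $1 - \FNRSens \geq 0$). From $\BERSens = \tfrac{1}{2}\FPRSens + \tfrac{1}{2}(1 - \FPRSens/\DISens)$ and the fact that $\DISens \leq \tau$ implies $1/\DISens \geq 1/\tau$, one obtains $\BERSens \leq \tfrac{1}{2} - \tfrac{1-\tau}{2\tau}\FPRSens$ after simplification. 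Taking the minimum gives the claimed bound. The converse direction is obtained by inverting each of these manipulations, which again reduces to an algebraic rearrangement of $\DISens = \FPRSens/(1 - \FNRSens)$.

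The main subtlety is bookkeeping on degenerate cases. The identities in Lemma \ref{lemm:ber-di-trivial} assume $\FNRSens \neq 1$, which is already in the hypothesis, but the second form of each bound involves $1/\DISens$ or $1/\tau$, so $\DISens = 0$ (equivalently $\FPRSens = 0$) and $\tau = 0$ must be treated separately; in those edge cases the corresponding inequality becomes trivial or vacuous and can be checked directly. With those cases handled, no further obstacle arises: every step is a monotonicity argument or an algebraic identity, and I do not anticipate any genuinely hard step beyond careful case management.
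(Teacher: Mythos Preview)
Your proposal is correct and follows essentially the same approach as the paper: both equivalences are derived by applying the identities of Lemma~\ref{lemm:ber-di-trivial} (Equations~\ref{eqn:di-to-ber} and~\ref{eqn:ber-to-di}) and invoking the monotone dependence of $\DISens$ on $\BERSens$ (respectively $\BERSens$ on $\DISens$). Your treatment is somewhat more detailed than the paper's terse proof, in particular your explicit handling of the degenerate cases $\tau = 0$ and $\DISens = 0$, but the underlying argument is identical.
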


\begin{proof}[Proof of Lemma \ref{lemm:ber-di}]
The first equivalence follows from the two expressions in Equation \ref{eqn:di-to-ber},
and the fact that the dependence on $\BERSens$
is monotone increasing.

This second equivalence follows from the two expressions in Equation \ref{eqn:ber-to-di},
and the fact that the dependence on $\DISens$
is monotone increasing.
% By definition,
% \begin{align*}
% 	\BERSens \leq \epsilon &\implies \FNRSens \leq 2 \cdot \epsilon - \FPRSens \\
% 	&\implies 1 - \FNRSens \geq 1 - 2 \cdot \epsilon + \FPRSens.
% \end{align*}
% We also deduce from the above that $\FPRSens \lor \FNRSens \leq 2 \cdot \epsilon$.
% Thus,
% \begin{align*}
% 	\DISens &= \frac{\FPRSens}{1 - \FNRSens} \\
% 	&\leq \frac{\FPRSens}{1 - 2 \cdot \epsilon + \FPRSens} \\
% 	&= 1 - \frac{1 - 2\cdot\epsilon}{1 - 2\cdot\epsilon + \FPRSens}.
% \end{align*}
% Noting now that it must be true that $\FPRSens \leq 2 \cdot \epsilon$,
% the result follows.
\end{proof}

When $\tau = 0.8$, this means that
$$ \DISens \leq 0.8 \iff \BERSens \leq \left( \frac{2}{5} + \frac{1}{10} \cdot \FNRSens \right) \land \left( \frac{1}{2} - \frac{1}{8} \cdot \FPRSens \right). $$

\subsection{Low balanced error implies disparate impact}

It is of interest to remove the dependence of the above bounds on the false positive and negative rates of $f$.
For one direction, this is possible.

\begin{corollary}
\label{corr:ber-di}
Pick any distribution $\DSens$
and randomised classifier $\randClassifier$
with $\FNRSens \neq 1$.
Then, for any $\epsilon \in [0, \frac{1}{2}]$,
\begin{align*}
	\BERSens \leq \epsilon \implies \DISens &\leq 2 \cdot \epsilon,
\end{align*}
or for any $\tau \in [0, 1]$,
$$ \DISens \geq \tau \implies \BERSens \geq \frac{\tau}{2}. $$
\end{corollary}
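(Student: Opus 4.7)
The plan is to derive this corollary as a direct consequence of Lemma \ref{lemm:ber-di}, which already gives the tight equivalence between $\BER$ and $\DI$ bounds in terms of $\FPR$ and $\FNR$. The corollary is simply a weakening of that equivalence into a cleaner, rate-independent form, so no new structural argument is needed -- the entire task reduces to a one-line algebraic bound.

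Concretely, I would start from the first statement of Lemma \ref{lemm:ber-di}, which gives
$$ \BERSens \leq \epsilon \iff \DISens \leq \frac{\FPRSens}{1 - 2\epsilon + \FPRSens} \land \frac{2\epsilon - \FNRSens}{1 - \FNRSens}. $$
It suffices to show that the second term in the minimum is itself at most $2\epsilon$, since then the implication $\BERSens \leq \epsilon \implies \DISens \leq 2\epsilon$ follows immediately. The inequality $\frac{2\epsilon - \FNRSens}{1 - \FNRSens} \leq 2\epsilon$ is equivalent (after clearing the positive denominator $1 - \FNRSens > 0$) to $\FNRSens \cdot (1 - 2\epsilon) \geq 0$, which holds because $\FNRSens \geq 0$ and, by hypothesis, $\epsilon \leq \tfrac{1}{2}$ so that $1 - 2\epsilon \geq 0$.

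For the equivalent contrapositive formulation, I would substitute $\epsilon = \tau/2$: then $\epsilon \in [0, \tfrac{1}{2}]$ corresponds exactly to $\tau \in [0, 1]$, and contraposing $\BERSens \leq \tau/2 \implies \DISens \leq \tau$ yields $\DISens \geq \tau \implies \BERSens \geq \tau/2$, as claimed (with the usual care about strict versus non-strict inequalities, which is harmless since the bound is continuous in $\epsilon$).

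There is essentially no obstacle here: the work has already been done in Lemma \ref{lemm:ber-di}, and the only subtlety is making sure the hypothesis $\epsilon \leq \tfrac{1}{2}$ is what enables the weakening step -- without it, the $\FPR$-based term in the minimum could be the binding one, and the clean bound $2\epsilon$ would no longer follow from the $\FNR$-based term alone.
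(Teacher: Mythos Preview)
Your proposal is correct and follows essentially the same approach as the paper: both derive the corollary from Lemma~\ref{lemm:ber-di} by bounding one of the two terms in the minimum, and both obtain the second statement as the contrapositive. The only minor difference is in the algebraic step: the paper first observes that $\BERSens \leq \epsilon$ forces $\FPRSens \lor \FNRSens \leq 2\epsilon$ and uses this to control the terms, whereas you bound $\frac{2\epsilon - \FNRSens}{1 - \FNRSens} \leq 2\epsilon$ directly from $\FNRSens \geq 0$ and $\epsilon \leq \tfrac{1}{2}$, which is slightly cleaner since it does not require the intermediate bound on $\FNRSens$.
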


\begin{proof}
The first bound follows from Lemma \ref{lemm:ber-di} and the fact that if $\BERSens \leq \epsilon$,
it must be true that $\FPRSens \lor \FNRSens \leq 2 \cdot \epsilon$.
The second bound is the contrapositive of the first.
\end{proof}

Corollary \ref{corr:ber-di} says that with a balanced error of $\frac{\tau}{2}$ or less, we are guaranteed a disparate impact of level at least $\tau$, though possibly worse.
So, if we want to guarantee a lack of disparate impact at level $\tau$, it is \emph{necessary} that the balanced error be at least $\frac{\tau}{2}$.
But is this condition also \emph{sufficient}?
Unfortunately, it is not.

\subsection{Disparate impact does not imply low balanced error}

It is evident from Lemma \ref{lemm:ber-di} that regardless of the precise level of impact $\tau$,
we \emph{could} have a classifier with balanced error arbitrarily close to $\frac{1}{2}$.
The basic issue is that by driving the false positive rate to $0$,
we trivially have disparate impact.
By further driving the false negative rate to $0$ (\ie by predicting everything negative),
we trivially have a balanced error rate of $\frac{1}{2}$.

\begin{corollary}
\label{corr:ber-no-di}
Pick any distribution $\DSens$.
Then, for any $\tau \in [0, 1]$
there exists a classifier $f \colon \XCal \to \Labels$
with
$$ \BERSens = \frac{1}{2} $$
$$ \DISens \leq \tau. $$
\end{corollary}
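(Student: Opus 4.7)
The plan is to exhibit the constant classifier $f \equiv 0$, which predicts every instance as negative, as a witness. The verification of the balanced-error condition is immediate from the definitions: $\FPRSens = \E{\X \mid \YSens = 0}{f(\X)} = 0$ and $\FNRSens = \E{\X \mid \YSens = 1}{1 - f(\X)} = 1$, so $\BERSens = (0+1)/2 = \tfrac{1}{2}$ exactly as required.

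For the disparate impact, note that $\Pr(\YHat = 1 \mid \YSens = 0) = 0$, while the denominator $\Pr(\YHat = 1 \mid \YSens = 1)$ is also $0$, so the ratio in Equation \ref{eqn:di-fpr} is $0/0$. Under the natural convention, implicit in the discussion preceding the corollary, that a classifier making no positive predictions has $\DI = 0$ (``by driving the false positive rate to $0$, we trivially have disparate impact''), we conclude $\DISens = 0 \leq \tau$ for every $\tau \in [0,1]$. This is the formal content of the claim.

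The only subtle point is the $0/0$ degeneracy: by Lemma \ref{lemm:ber-di-trivial}, any classifier with $\FPR,\, 1-\FNR$ both strictly positive and $\BER = \tfrac{1}{2}$ forces $\FPR = 1 - \FNR$ and hence $\DI = 1$, so strict $\BER = \tfrac{1}{2}$ with $\DI < 1$ is only attainable in the degenerate regime. If one wishes to avoid a convention, an alternative is to use a sequence of deterministic classifiers $f_n$ predicting positive only on shrinking sets $S_n \subseteq \XCal$ chosen so that $\Pr(\X \in S_n \mid \YSens = 0)/\Pr(\X \in S_n \mid \YSens = 1) \to 0$ and $\Pr(\X \in S_n) \to 0$; this drives $\DI \to 0$ and $\BER \to \tfrac{1}{2}$, recovering the statement up to approximation. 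Conceptually, the crux is that $\FPR$ and $1 - \FNR$ enter $\DI$ as a ratio but enter $\BER$ as a sum, so a classifier that collapses both to $0$ can simultaneously be maximally disparate in the DI sense and exactly as bad as random in the BER sense.
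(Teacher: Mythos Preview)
Your proposal is correct and follows the same approach as the paper: exhibit the trivial classifier $f\equiv 0$, which has $\FPR=0$, $\FNR=1$, hence $\BER=\tfrac12$, and declare $\DI\le\tau$. You are in fact more careful than the paper, which simply asserts ``evidently'' that this classifier has disparate impact at level $\tau$ without addressing the $0/0$ degeneracy you flag (note that Lemma~\ref{lemm:ber-di-trivial} explicitly excludes $\FNR=1$, so the paper is implicitly relying on the same convention you make explicit).
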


\begin{proof}
Consider the trivial classifier with
$$ \FPRSens = 0 $$
$$ \FNRSens = 1. $$
Clearly, this has balanced error $\frac{1}{2}$.
Evidently, this classifier also has disparate impact at level $\tau$.
\end{proof}

Corollary \ref{corr:ber-no-di} says that even if we have a classifier with high balanced error, there is no guarantee it will not have disparate impact.
This is a worst case analysis over all possible classifiers we \emph{might} have obtained.
However, if we happen to know the false positive and negative rates we \emph{actually} have obtained, we might be able to conclude there is no disparate impact.
This is used in \citet[Section 4.2]{Feldman:2015} to certify the lack of disparate impact for a particular classifier.

\begin{corollary}
\label{corr:ber-no-di-cheat}
Pick any distribution $\DSens$
and randomised classifier $\randClassifier$.
For any $\tau \in [0, 1]$,
$$ \BERSens \geq \left( \frac{\tau}{2} + \frac{1 - \tau}{2} \cdot \FNRSens \right) \land \left( \frac{1}{2} - \frac{1 - \tau}{2 \cdot \tau} \cdot \FPRSens \right) \iff \DISens \geq \tau. $$
\end{corollary}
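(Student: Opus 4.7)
The plan is to deduce this corollary essentially for free from Lemma \ref{lemm:ber-di}, which already supplies the equivalence
$$ \DISens \leq \tau \iff \BERSens \leq g(\tau), \qquad g(\tau) \defEq \left( \tfrac{\tau}{2} + \tfrac{1 - \tau}{2} \cdot \FNRSens \right) \land \left( \tfrac{1}{2} - \tfrac{1 - \tau}{2\tau} \cdot \FPRSens \right). $$
Since this is a genuine iff, taking the contrapositive of both directions immediately yields the strict version
$$ \DISens > \tau \iff \BERSens > g(\tau), $$
which is most of what we need.

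To upgrade from $>$ to $\geq$, I would handle the boundary case $\DISens = \tau$ using Lemma \ref{lemm:ber-di-trivial}. That lemma supplies the two \emph{exact} identities
$$ \BERSens = \tfrac{1}{2} \FNRSens + \tfrac{1}{2} (1 - \FNRSens) \cdot \DISens \quad \text{and} \quad \BERSens = \tfrac{1}{2} - \tfrac{1 - \DISens}{2 \DISens} \cdot \FPRSens. $$
Substituting $\DISens = \tau$ into these identities shows that \emph{both} arguments of the min defining $g(\tau)$ equal $\BERSens$ exactly, so $\BERSens = g(\tau)$ on the boundary. Combining this with the strict equivalence above produces the desired non-strict equivalence $\DISens \geq \tau \iff \BERSens \geq g(\tau)$.

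The only thing to be slightly careful about is the boundary/degenerate cases: Lemma \ref{lemm:ber-di-trivial} requires $\FNRSens \neq 1$ (otherwise the denominator in the definition of $\DISens$ vanishes), and the expression $\frac{1-\tau}{2\tau}$ in $g(\tau)$ requires $\tau > 0$. Both issues are inherited from the earlier lemmas rather than introduced here, and the $\tau = 0$ case is trivial since $\DISens \geq 0$ always holds and the right-hand side of the min can be $-\infty$. I do not expect any genuine obstacle; the result is essentially a bookkeeping consequence of Lemmas \ref{lemm:ber-di-trivial} and \ref{lemm:ber-di}.
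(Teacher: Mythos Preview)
Your proposal is correct and follows the same approach as the paper: the paper's entire proof is the single sentence ``This is the contrapositive of Lemma~\ref{lemm:ber-di}.'' You are in fact being more careful than the paper by explicitly handling the $\geq$ versus $>$ boundary via Lemma~\ref{lemm:ber-di-trivial}, whereas the paper silently conflates the non-strict and strict contrapositives.
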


\begin{proof}[Proof of Corollary \ref{corr:ber-no-di-cheat}]
This is the contrapositive of Lemma \ref{lemm:ber-di}.
\end{proof}

%%%%
\section{Additional experiments}
\label{sec:add-expts}

We present a further experiment on the synthetic dataset considered in \citet{Zafar:2016},
where $\Pr( \Y = 1 ) = 0.5$,
each $\X \mid \Y = y \sim \mathscr{N}( \mu_y, \Sigma_y )$ where
\begin{align*}
	\mu_1 	   &=  \begin{bmatrix} 2 & 2 \end{bmatrix} \\
	\Sigma_1   &=  \begin{bmatrix} 5 & 1 \\ 1 & 5 \end{bmatrix} \\
	\mu_{0}    &=  \begin{bmatrix} 10 & 1 \\ 1 & 3 \end{bmatrix} \\
	\Sigma_{0} &=  \begin{bmatrix} -2 & -2 \end{bmatrix},
\end{align*}
and
$$ \Pr( \YSens = 1 \mid \X = x ) = \frac{\Pr( \X = R x \mid \Y = 1 )}{\Pr( \X = R x \mid \Y = 1 ) + \Pr( \X = R x \mid \Y = -1 )} $$
for rotation matrix $R = \begin{bmatrix} \cos \phi & -\sin \phi \\ \sin \phi & \cos \phi \end{bmatrix}$.
We pick $\phi = 0.5$.

We generated $N = 10^{4}$ samples from this distribution,
and followed the same setup as the body:
we construct a 2:1 train-test split,
and compare {\tt COV} and our plugin ({\tt 2LR}) approach
in terms of the balanced error of predicting $\Y$, versus the MD score in predicting $\YSens$.

Figure \ref{fig:zafar-gaussian} shows the tradeoff curves of the methods closely track each other.
However, the plugin approach performs slightly worse at higher fairness levels.
We conjecture this is due to the fact that logistic regression is not suitable for $\eta, \etaSens$,
as the class-conditionals have non-isotropic covariance and thus possess \emph{quadratic} boundaries.
When we explicitly include quadratic features as input to both methods, Figure \ref{fig:zafar-gaussian-quadratic} shows that the plugin approach performs slightly better than {\tt COV}.

\begin{figure}[!h]
	\centering
	\subfigure[Raw features.]{\includegraphics[scale=0.0925]{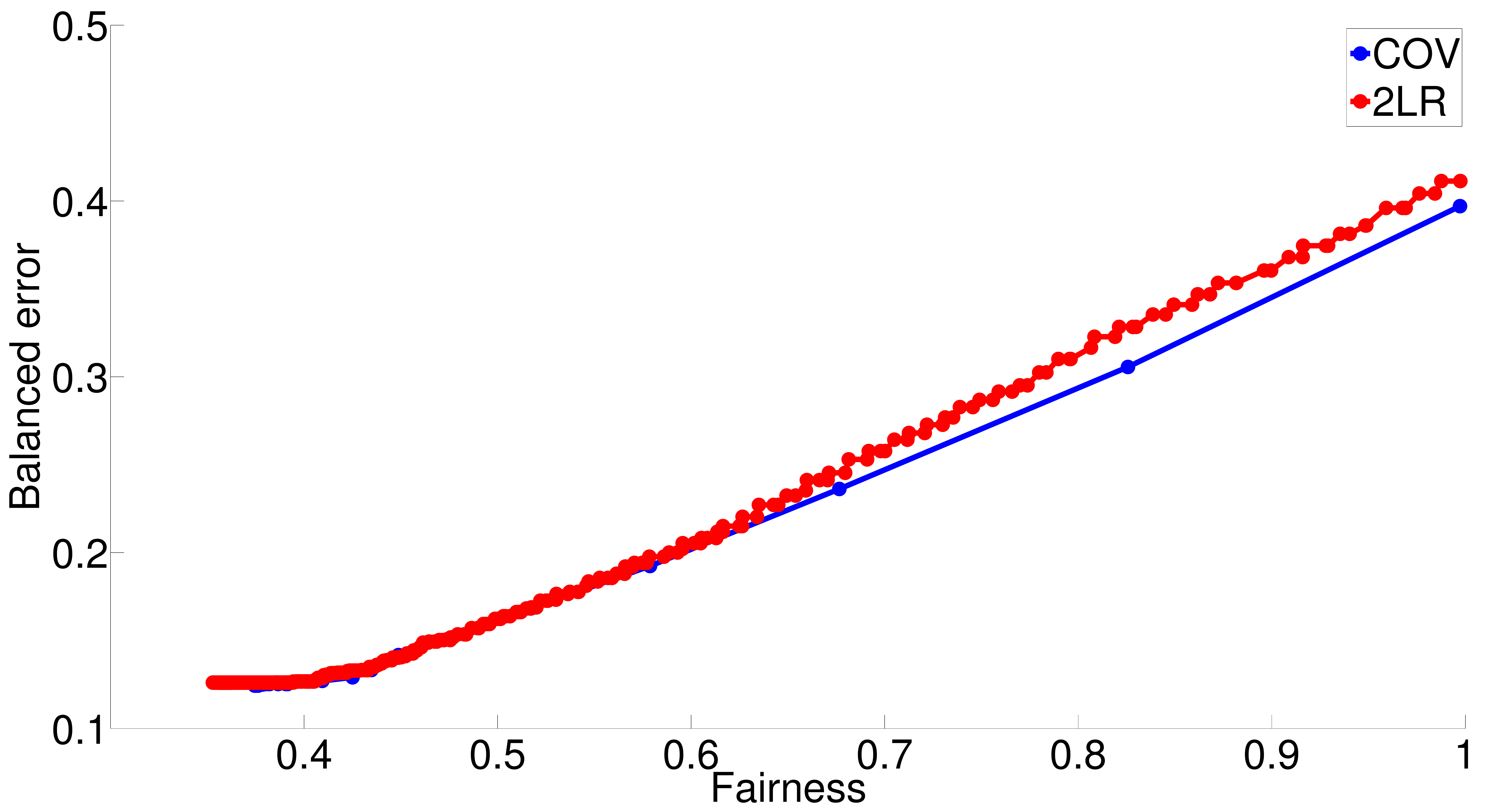} \label{fig:zafar-gaussian}}
	\subfigure[Quadratic features.]{\includegraphics[scale=0.0925]{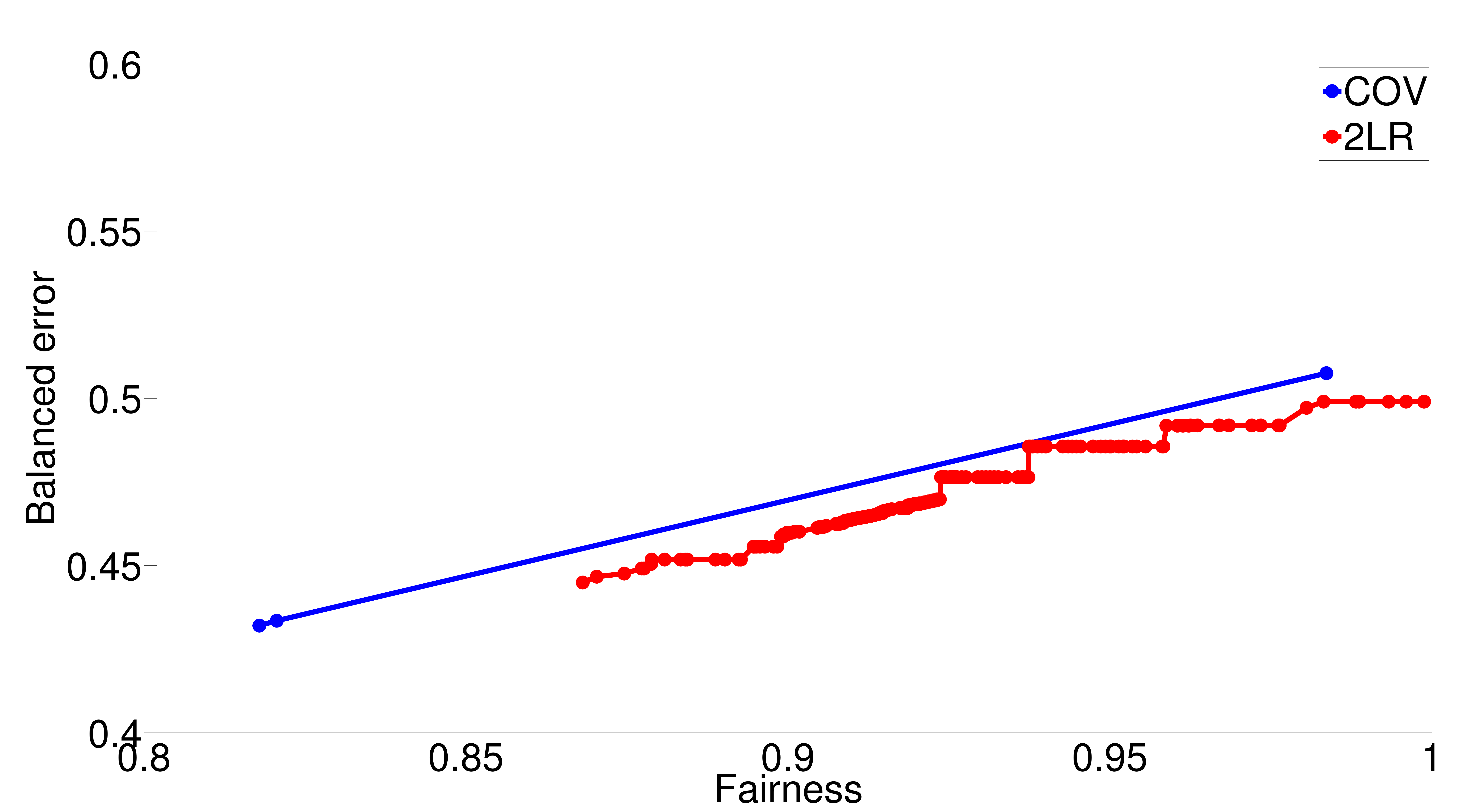} \label{fig:zafar-gaussian-quadratic}}
	\caption{Comparison of plugin ({\tt 2LR}) and {\tt COV} methods as tuning parameters for both are varied, synthetic 2D Gaussian data.}	
\end{figure}

\end{document}